\documentclass[11pt]{article}

\usepackage{microtype}
\usepackage{graphicx}
\usepackage{subfigure}
\usepackage{amsmath,amsbsy,amsfonts,amssymb,amsthm,dsfont,bm}
\usepackage{algorithm,algorithmic,mathtools}
\usepackage{color,cases,multirow}
\usepackage[top=1.0in, bottom=1.0in, left=1.2in, right=1.2in]{geometry}
\usepackage{hyperref}
 \hypersetup{
     colorlinks=true,
     linkcolor=blue,
     filecolor=blue,
     citecolor = blue,
     urlcolor=cyan,
}

\usepackage{authblk}
\usepackage[math]{cellspace}
\setlength\cellspacetoplimit{3pt}
\setlength\cellspacebottomlimit{3pt}

\newtheorem{theorem}{Theorem}[section]
\newtheorem{proposition}[theorem]{Proposition}
\newtheorem{lemma}[theorem]{Lemma}

\newtheorem{assumption}[theorem]{Assumption}
\newtheorem{corollary}[theorem]{Corollary}
\theoremstyle{definition}
\newtheorem{definition}{Definition}
\theoremstyle{remark}
\newtheorem{remark}{Remark}

\numberwithin{equation}{section}

\newcommand\Def{\stackrel{\textrm{def}}{=}}

\newcommand{\poly}{\text{poly}}
\newcommand{\half}{\frac{1}{2}}
\newcommand{\hcR}{\hat{\mathcal{R}}_n}

\newcommand{\tf}{\tilde{f}}

\newcommand{\tB}{\tilde{B}}

\newcommand{\RR}{\mathbb{R}}
\newcommand{\EE}{\mathbb{E}}
\newcommand{\PP}{\mathbb{P}}
\newcommand{\NN}{\mathbb{N}}
\renewcommand{\SS}{\mathbb{S}}

\newcommand{\cE}{\mathcal{E}}
\newcommand{\cF}{\mathcal{F}}
\newcommand{\cH}{\mathcal{H}}
\newcommand{\cI}{\mathcal{I}}

\newcommand{\cN}{\mathcal{N}}

\newcommand{\cP}{\mathcal{P}}
\newcommand{\cR}{\mathcal{R}}

\newcommand{\ba}{\bm{a}}
\newcommand{\bb}{\bm{b}}

\newcommand{\be}{\bm{e}}
\newcommand{\bh}{\bm{h}}
\newcommand{\bg}{\bm{g}}
\newcommand{\bx}{\bm{x}}

\newcommand{\bz}{\bm{z}}

\newcommand{\bv}{\bm{v}}
\newcommand{\bw}{\bm{w}}
\newcommand{\br}{\bm{r}}

\newcommand{\alphal}{\alpha^{(l)}}
\newcommand{\betal}{\bm{\beta}^{(l)}}
\newcommand{\gammal}{\bm{\gamma}^{(l)}}
\newcommand{\al}{\ba^{(l)}}
\newcommand{\rl}{\bm{r}^{(l)}}
\newcommand{\Cl}{C^{(l)}}
\newcommand{\Bl}{B^{(l)}}

\newcommand{\yl}{y^{(l)}}
\newcommand{\zl}{\bm{z}^{(l)}}
\newcommand{\gl}{\bg^{(l)}}

\newcommand{\ol}{o^{(l)}}
\newcommand{\ta}{\tilde{\ba}}
\newcommand{\rad}{\text{Rad}}
\newcommand{\gen}{\text{gen}}

\title{Analysis of the Gradient Descent Algorithm \\
        for a Deep Neural Network Model with Skip-connections}

\author[1,2,3]{Weinan E \thanks{weinan@math.princeton.edu}}
\author[2]{Chao Ma, \thanks{cham@princeton.edu}}
\author[2]{Qingcan Wang, \thanks{qingcanw@princeton.edu}}
\author[2]{Lei Wu \thanks{leiwu@princeton.edu}}

\affil[1]{Department of Mathematics, Princeton University}
\affil[2]{Program in Applied and Computational Mathematics, Princeton University}
\affil[3]{Beijing Institute of Big Data Research}

\date{}

%\date{}
\begin{document}
\maketitle
\begin{abstract}
The behavior of the gradient descent (GD) algorithm is analyzed for a deep neural network model with skip-connections. It is proved that in the over-parametrized regime, for a suitable initialization,  with high probability GD can find a global minimum exponentially fast. Generalization error estimates along the GD path are also established. As a consequence, it is shown that when the target function is in  the reproducing kernel Hilbert space (RKHS) with a kernel defined by the initialization,  there exist generalizable early-stopping solutions along the GD path. In addition, it is also shown that the GD path is uniformly close to the functions given by the related random feature model.  Consequently, in this ``implicit regularization'' setting, the deep neural network model deteriorates to a random feature model. Our results hold for neural networks of any width larger than the input  dimension.
\end{abstract}

\section{Introduction}
This paper is concerned with the following questions on the gradient descent (GD) algorithm for deep neural network models:
\begin{enumerate}
\item Under what condition, can the algorithm find a global minimum of the empirical risk?
\item Under what condition, can the algorithm find models that generalize, without using any explicit regularization?
\end{enumerate}
These questions are addressed for a specific deep neural network model with skip connections.
For the first question, it is shown that with proper initialization, the gradient descent algorithm converges to a global minimum exponentially fast, as long as the network is deep enough.
For the second question, it is shown that if in addition the target function belongs to a certain reproducing kernel
Hilbert space (RKHS) with kernel defined by the initialization, then the  gradient descent algorithm
does find models that can generalize. This result is obtained as a consequence of the estimates on the
 the generalization error along the GD path.
 However, it is also shown that the GD path is uniformly close to functions generated by the GD path for the
 related random feature model.
 Therefore in this particular setting, as far as ``implicit regularization'' is concerned,  this deep neural network model is no better than
 the random feature model.
 %However, the same analysis also suggests strongly that the requirement that the target function be in the
 %right RKHS is essential for the so-called ``implicit regularization'' to hold.

 %can be estimated and this result reveals that provided that proper early stopping  is used,  the solution given
 %by the gradient descent algorithm does generalize.

 % There have been a flurry of activities in recent years on these questions .
 % The central issue of interest
% is whether gradient descent and related algorithms (such as stochastic gradient descent) can find generalizable
 %solutions without the need for explicit regularization.  Indeed the pursuit of  the so-called ``implicit regularization''
% has occupied a center stage in the theoretical work { in deep learning}.
 %on machine learning models.

  %Also of interest is whether these algorithms find the global minimum of the empirical risk. On this issue,
In recent years there has been a great deal of interest  on the two questions raised above\cite{daniely2017sgd,du2018gradient,du2018deepgradient,arora2019fine,allen2018convergence,allen2018learning,zou2018stochastic,cao2019generalization,jacot2018neural,li2018learning,song2018mean,rotskoff2018parameters,chizat2018global,sirignano2018mean, bartlett2018gradient,arora2018a,du2019width, zhang2016understanding,neyshabur2014search}.
 An important recent advance is the realization that over-parametrization can simplify the analysis of GD dynamics in two ways:  The first is that in the over-parametrized regime, the parameters do not have to change much in order to make an $O(1)$ change to the function that they represent \cite{daniely2017sgd,li2018learning}.  This gives rise to the possibility that only a local analysis in the neighborhood of the initialization is necessary in order to analyze the GD algorithm. The second is that over-parametrization can improve the non-degeneracy of
 the associated Gram matrix~\cite{xie2016diverse}, thereby ensuring exponential convergence of the GD algorithm \cite{du2018gradient}.

 Using these ideas, \cite{allen2018convergence,zou2018stochastic,du2018deepgradient,du2018gradient} proved that (stochastic) gradient descent converges to a global minimum of the empirical risk with an exponential rate. \cite{jacot2018neural} showed that in the infinite-width limit, the GD dynamics for deep fully connected neural networks with Xavier initialization can be characterized by a fixed neural tangent kernel.
 %can find the solutions with arbitrarily small training error in a linear rate.
 \cite{daniely2017sgd,li2018learning} considered the online learning setting and proved that stochastic gradient descent can achieve a population error of $\varepsilon$ using $\poly(1/\varepsilon)$  samples.
\cite{cao2019generalization} proved that GD can find the generalizable solutions when the target function comes from certain RKHS.
 These results all share one thing in common:  They all
require that the network width $m$ satisfies  $m\geq \poly(n,L)$, where $L, n$ denote the network depth and  training set size, respectively. In fact, \cite{daniely2017sgd,cao2019generalization} required that $m\geq \poly(n,2^L)$.
In other words, these results are concerned with very wide networks.
In contrast, in this paper, we will focus on  deep networks with fixed width (assumed to be larger than $d$ where
$d$ is the input dimension).

\subsection{The motivation}
Our work is motivated strongly by the results of the companion paper \cite{e2019twolayer} in which similar questions were addressed
for the two-layer neural network model. It was proved in \cite{e2019twolayer} that in the so-called ``implicit regularization'' setting, the GD dynamics for the two-layer neural network model is closely approximated by the GD dynamics for a random feature model with
the features defined by the initialization.  For over-parametrized models, this statement is valid uniformly for all time.
In the general case, this statement is valid at least for finite time intervals during which early stopping leads to generalizable
models for target functions in the relevant reproducing kernel Hilbert space (RKHS).
The numerical results reported in \cite{e2019twolayer} nicely corroborated these theoretical findings.

To understand what happens for deep neural network models, we first turn to the ResNet model:
 \begin{equation}
\label{ResNet}
 \bh^{(l+1)} = \bh^{(l)}+ U^{(l)}\sigma(V^{(l)} \bh^{(l)}), \quad l=0, 1, \cdots, L-1
 \end{equation}
 \[
 \bh^{(0)} =  (\bx^T,1)^T\in\RR^{d+1}, \quad f_L(\bx, \theta) = \bw^T \bh^{(L)}
 \]
 where $U^{(l)}\in\RR^{(d+1)\times m}, V^{(l)}\in \RR^{m\times (d+1)}, \bw=(0,\dots,0,1)$, and $\theta=\{U^{(l)},V^{(l)}\}_{l=1}^L$ denote the all the parameters to be trained.
 
 A main observation exploited in \cite{e2019twolayer} is the time scale separation between the GD dynamics for the coefficients
in- and outside the activation function, i.e. the $\{U^{(l)}\}$'s and the $\{V^{(l)}\}$'s
%(in this regard, it is interesting to note that in \cite{Bach???}, time separation was enforced by hand  through the introduction of
 % a small parameter in the modified GD dynamics).
In a typical practical setting, one would initialize the $\{V^{(l)}\}$'s to be $O(1)$ and the $\{U\}$'s to be $o(1)$.
This results in a slower dynamics for the $\{V^{(l)}\}$'s, compared with the dynamics for the $\{U^{(l)}\}$'s, due to
the factor presence of an extra factor of $U^{(l)}$ in the dynamical equation for $V^{(l)}$.
In the case of two-layer networks, this separation of time scales resulted in the fact that the parameters inside the activation function
were effectively frozen during the time period of interest.  Therefore the GD path stays close to the GD path for the random
feature model with the features given by the initialization. 

To see whether similar things happen for the ResNet model, we consider the following ``compositional random feature model''
in which \eqref{ResNet} is replaced by
 \begin{equation}
 \label{Crfm}
 \bh^{(l+1)} = \bh^{(l)}+U^{(l)}\sigma(V^{(l)}(0) \bh^{(l)}), \quad l=0, 1, \cdots, L-1
 \end{equation}
Note that in  \eqref{Crfm}  the $V^{(l)}$'s are fixed at their initial values,
the only parameters to be updated by the GD dynamics are the $U^{(l)}$'s. 

\paragraph*{Numerical experiments}
Here we provide numerical evidences for the above intuition by 
considering a very simple target function:
$
f^*(\bx) = \max(x_1, 0),
$
where $\bx=(x_1, \cdots, x_d)^T$.   We initialize \eqref{ResNet} and \eqref{Crfm} by $U_{i,k}=0, V_{k,j}\sim \mathcal{N}(0,1/m), \forall \, k\in [m], i,j\in [d+1]$. Since we are interested in the effect of depth, we choose $m=1$. Please refer to Appendix~\ref{sec: res-setup} for more details.

Figure \ref{fig: optimization} displays the comparison of the GD dynamics for ResNet and the related ``compositional random feature model''. 
We see a clear indication that 
(1) GD algorithm  converges to a global minimum of the empirical risk for deep residual network, and (2) for deep neural
networks, the GD dynamics for the two models stays close to each other.

%========================================================================
\begin{figure}[!h]
\centering 
\includegraphics[width=0.31\textwidth]{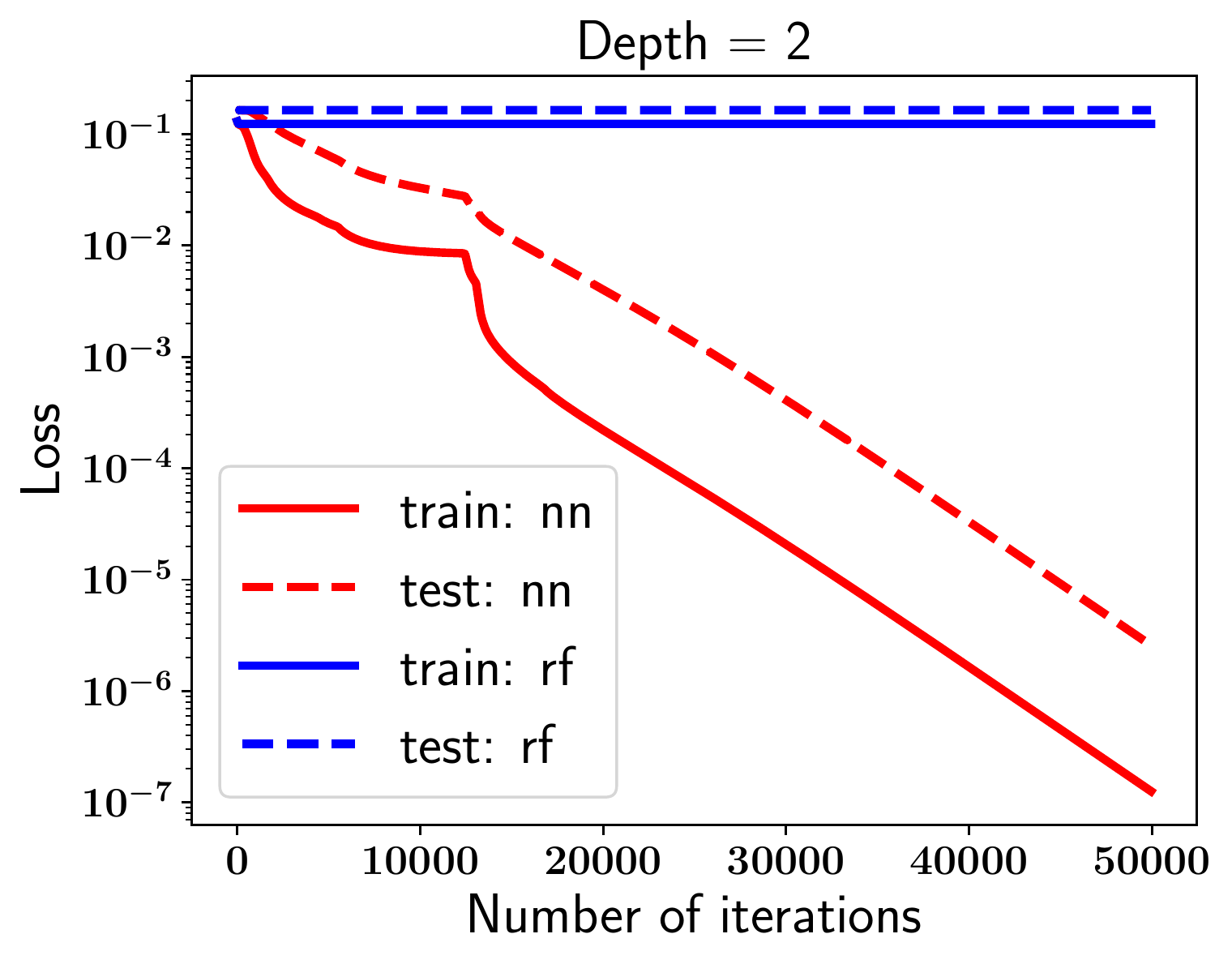}
\includegraphics[width=0.31\textwidth]{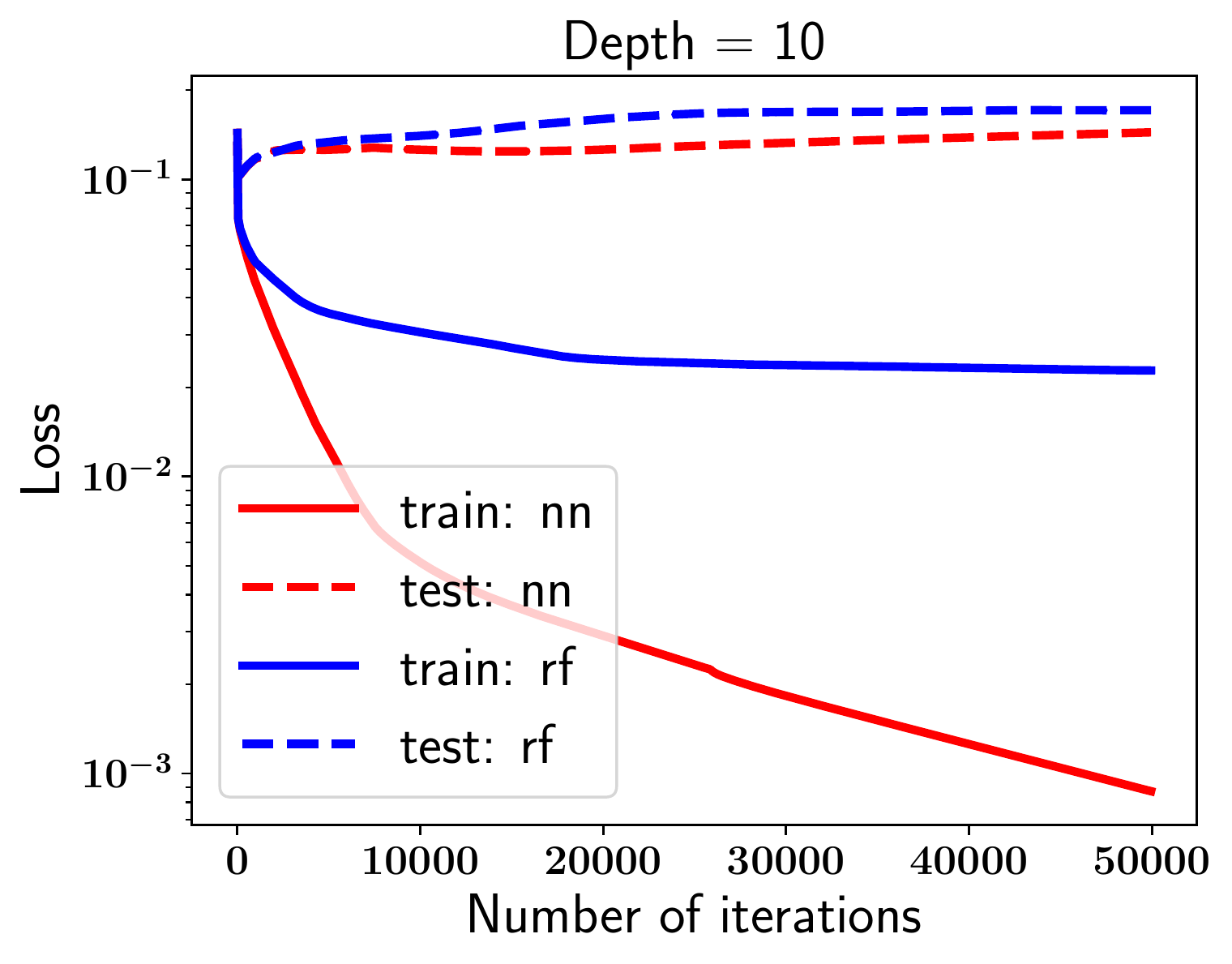}
\includegraphics[width=0.31\textwidth]{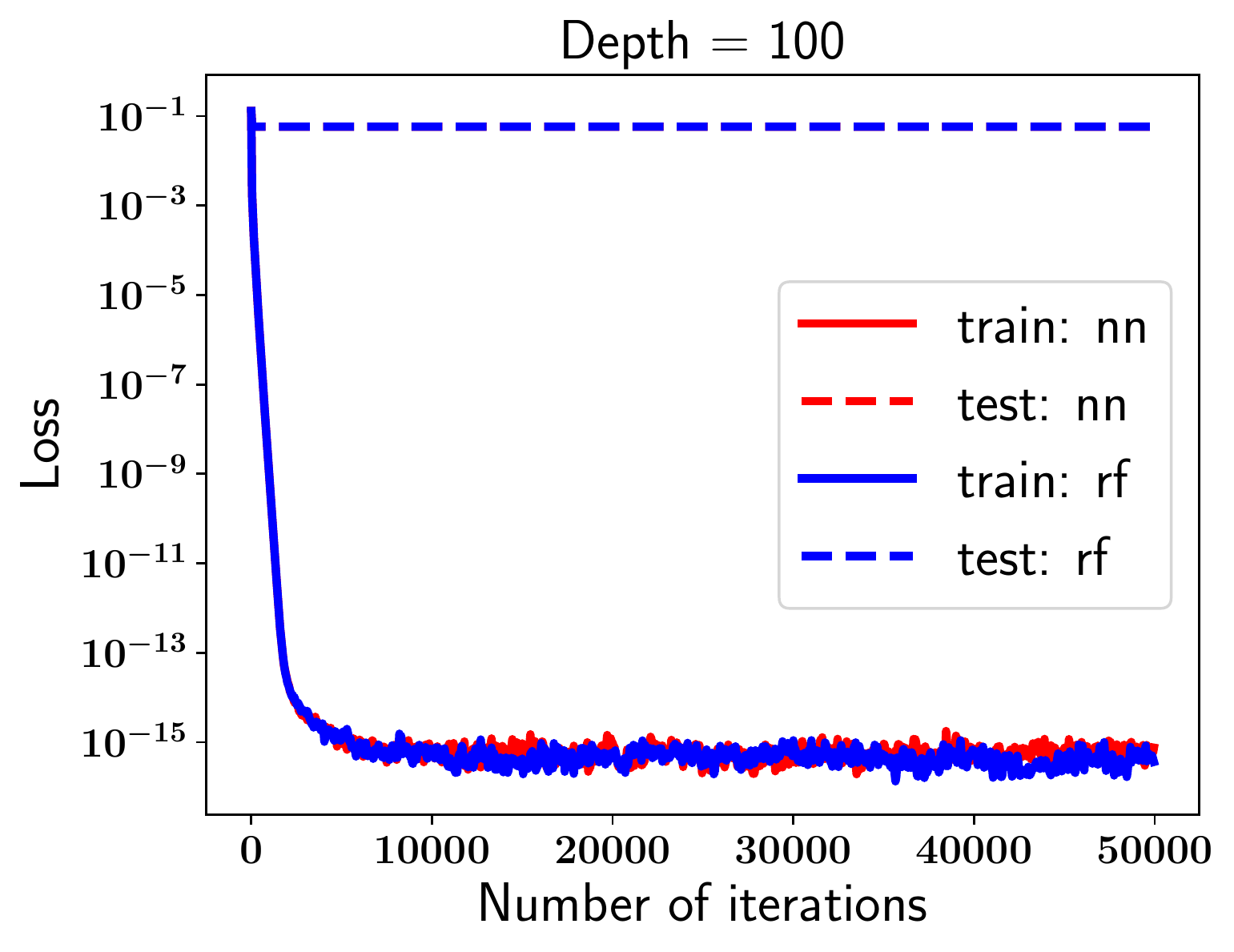}
\caption{The time history of the GD algorithm for the residual network (\texttt{nn}) and compositional random feature (\texttt{rf}). Left: depth=2; Middle: depth=10; Right: depth=100. Here we use the same learning rate for two models. }
\label{fig: optimization}
\end{figure}

\begin{figure}[!h]
\centering 
\includegraphics[width=0.4\textwidth]{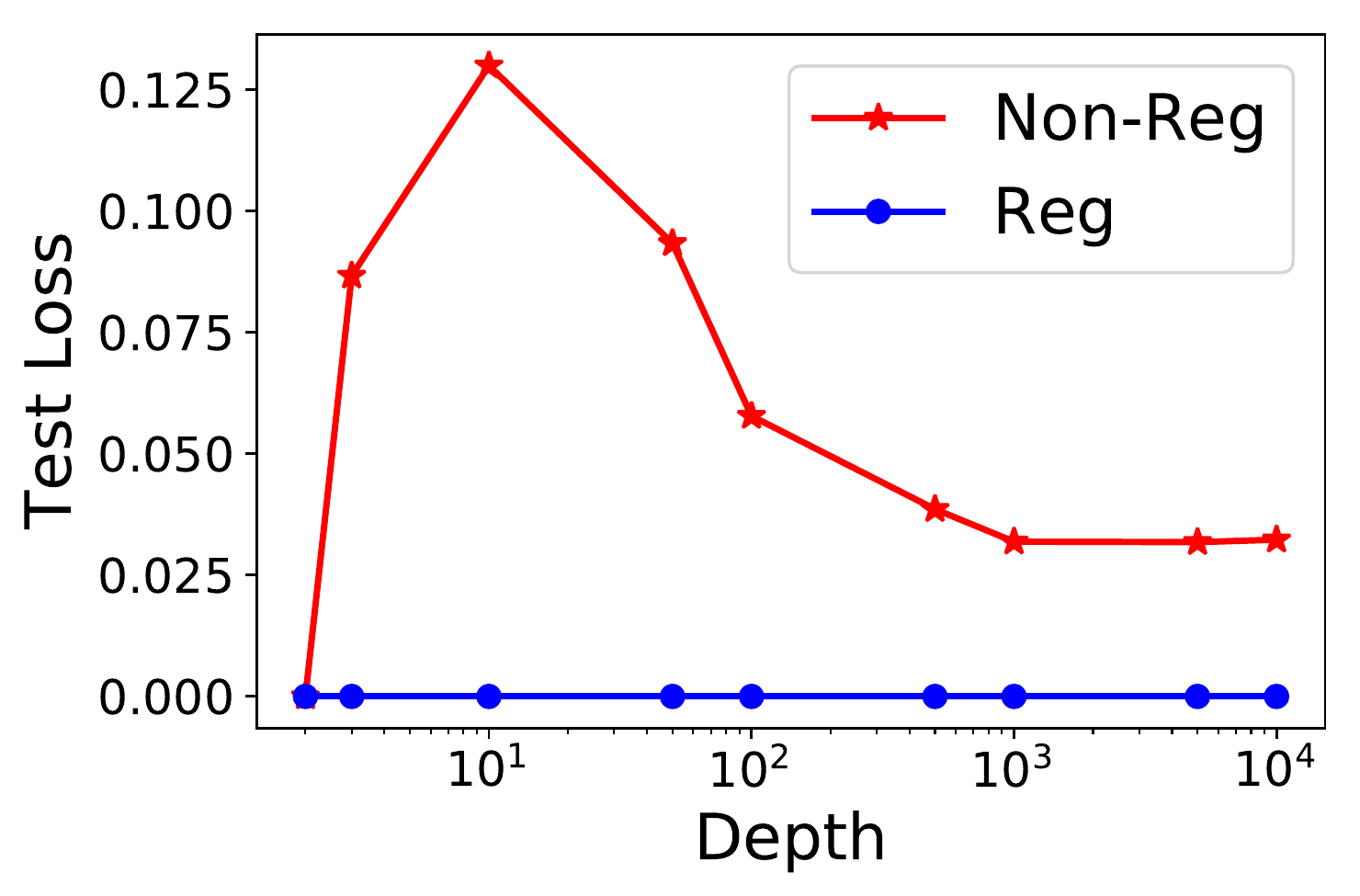}
\caption{Comparison of test accuracy of the regularized and the un-regularized residual networks with different depths. For the regularized model, we choose $\lambda=0.005$.}
\label{fig: generalization}
\end{figure}
%========================================================================

Figure \ref{fig: generalization} shows the testing error for the optimal (convergent) solution shown in Figure \ref{fig: optimization}
as the depth of the ResNet changes.  We see that the testing error seems to be settling down on a finite value
as the network depth is increased.  As a comparison, we also show the testing error for the optimizers of the regularized model proposed in 
\cite{ma2019priori} (see \eqref{eqn: path-norm-reg}). 
One can see that for this particular target function, the testing error for the minimizers of the regularized model
is consistently very small as one varies the depth of the network.

These results are similar to the ones shown in \cite{e2019twolayer} for two-layer neural networks.
They suggest that  for ResNets, GD algorithm is able to find global minimum for the empirical risk but in terms of
the generalization property,  the resulting model may not be better than the compositional random feature model.
%generalize (i.e. the generalization error does not go to $0$ as the network depth goes to infinity).

On the theoretical side, we have not yet succeeded in dealing directly with the ResNet model.
%re unable to handle theoretically the ResNet model.  
Therefore in this paper  we will deal  instead with a modified model which shares a lot of common features
with the ResNet model but the simplifies the task of analyzing error propagation between the layers.
%in order to gain some insight.  We will return to the ResNet model at the end of the paper.
We believe that the insight we have gained in this analysis is helpful for understanding general deep network models.

\subsection{Our contribution}
In this paper, we analyze the gradient descent algorithm for a particular class of
deep neural networks with skip-connections.
We consider the least square loss and assume that the nonlinear activation function  is Lipschitz continuous (e.g. $\texttt{Tanh, ReLU}$).
\begin{itemize}
\item
%We propose a specific initialization, at which the deep neural network is close to a random feature model. We then prove that for
We prove that if the depth $L$ satisfies $L\geq \poly(n)$,  then gradient descent  converges to a global minimum with zero training error at an exponential rate.
This result is proved by only assuming that the network width is larger than $d$.
As noted above, the previous optimization results \cite{du2018deepgradient,allen2018convergence,zou2018stochastic} require that the width $m$ satisfies $m\geq \poly(n,L)$.

%We emphasize that the result has no requirement for the network width except not smaller than $d+1$. In addition, the result also holds for arbitrarily labeled data. By noting that \cite{du2018gradient} analyzed ResNet but still require $m\geq \poly(L,n)$, we actually provide a partial evidence for the advantage of using long-distance skip-connections~\cite{huang2017densely}.

\item  We  provide a general estimate for the generalization error along the GD path, assuming that the target function is in a RKHS  with the kernel defined by the initialization.
%can be provably learned by gradient descent with early-stopping.
As a consequence, we show that population risk is  bounded from above by $O(1/\sqrt{n})$ if
certain early stopping rules are used.   In contrast, the generalization result in \cite{cao2019generalization} requires that $m\geq \poly(n,2^L)$.
%, and the result for three layer networks in \cite{allen2018learning} only works for regularized SGD.

\item We prove that the GD path is uniformly close to the functions given by
the related random feature model (see Theorem \ref{thm: uniform-closeness}). Consequently the  generalization property of the resulting model is no better than that of the random feature model.  This allows us to conclude that in this ``implicit regularization'' setting,
the deep neural network model deteriorates to a random feature model.
In contrast, it has been established in \cite{ma2018priori,ma2019priori} that for suitable explicitly regularized models, optimal generalization error estimates (e.g. rates comparable to the Monte Carlo rate) can be proved for a much larger class of target functions.
\end{itemize}
These results are very much analogous to the ones proved in \cite{e2019twolayer}  for two-layer neural networks.

One main technical ingredient in this work is to use a combination of the identity mapping and skip-connections to stabilize the forward and backward propagation in the neural network. This enable us to consider deep neural networks with fixed width.
The second main ingredient is the exploitation of a possible 
time scale separation between the GD dynamics for the parameters in- and outside the 
activation function: The parameters inside the activation function are effectively frozen during the GD dynamics compared with
the parameters outside the activation function.

\section{Preliminaries}
 Throughout this paper, we let $[n]=\{1,2,\dots,n\}$, and use $\|\|$ and $\|\|_F$ to denote the $\ell_2$ and Frobenius norms, respectively.  For a matrix $A$, we use $A_{i,:}, A_{:,j}, A_{i,j}$  to denote its $i$-th row, $j$-th column and $(i, j)$-th entry, respectively. We let $\SS^{d-1}=\{\bx \in \RR^d\,:\,\|\bx\|=1$ and use $\pi_0$ to indicate the uniform distribution over $\SS^{d-1}$.  We use $X\lesssim Y$ as a shorthand notation for  $X\leq C Y$, where $C$ is some absolute constant. $X\gtrsim Y$ is similarly defined.

\subsection{Problem setup}
We consider the regression problem with training data set given by  $\{(\bx_i,y_i)\}_{i=1}^n$, where $\{\bx_i\}_{i=1}^n$ are i.i.d. samples drawn from a fixed (but unknown) distribution $\rho$.
For simplicity, we assume $\|\bx\|_2= 1$ and $|y|\leq 1$. We use $f(\cdot;\Theta): \RR^{d} \to \RR$ to denote the model with parameter $\Theta$.
We are interested in minimizing the empirical risk, defined by
\begin{equation}
    \hat{\cR}_n(\Theta)  = \frac{1}{2n}\sum_{i=1}^n (f(\bx_i;\Theta-y_i)^2.
\end{equation}
We let  $e(\bx,y)= f(\bx;\Theta)-y$ and $\hat{\be}=(e(\bx_1,y_1),\dots,e(\bx_n,y_n))^T\in\RR^n$, then $\hat{\cR}_n(\Theta)=\frac{1}{2n}\|\hat{\be}\|^2$.
% $e$ is the fitting error vector. In particular, we have
% $\hat{\cR}_n(\Theta)=\frac{1}{2n}\|\be\|^2$.

For the generalization problem, we need to specify how the $\{y_j\}_{j=1}^n$'s are obtained.
Let $f^*: \RR^d \rightarrow \RR$ be our target function. Then we have $y_j = f^*(\bx_j)$. We will assume that  there
are no measurement noises. This makes the argument more transparent but does not change things qualitatively:
Essentially the same argument applies to the case with measurement noise.

Our goal is to estimate the population risk, defined by
\[
    \cR(\Theta) = \frac{1}{2}\EE_{\rho}[(f(\bx;\Theta)-f^*(\bx))^2]
\]

\subsubsection*{Deep neural networks with skip-connections}
We will consider a special deep neural network model with multiple skip-connections, defined by
\begin{equation}\label{eqn: net-arch}
\begin{aligned}
\bh^{(1)} &= (\bx,0)^T\in\RR^{d+1} \\
\bh^{(l+1)} &=
{
\begin{pmatrix}
\bh^{(1)}_{1:d}\\
\bh^{(l)}_{d+1}
\end{pmatrix}  +  U^{(l)} \sigma(V^{(l)}\bh^{(l)}) }, \quad l=1, \cdots, L-1\\
f(\bx;\Theta) &= \bw^T \bh^{(L)}.
\end{aligned}
\end{equation}
 Here $U^{(l)}\in \RR^{(d+1)\times m}, V^{(l)}\in \RR^{m\times (d+1)},\bw\in\RR^{d+1}$. Note that
  $L$ and $\max\{m,d+1\}$ are the depth and width of the network respectively.
 $\sigma(\cdot):\RR\to\RR$ is a scalar nonlinear activation function, which is assumed to be 1-Lipschitz continuous and $\sigma(0)=0$. { For any vector $\bv\in\RR^{N}$ we define $\sigma(\bv)=(\sigma(v_1),\dots,\sigma(v_N))^T$ .}
 For simplicity, we fix $\bw$  to be
 $(0,\dots,0,1)^T$. Thus the parameters that need to be estimated are: $\Theta=\{U^{(l)},V^{(l)}\}_{l=1}^{L}$.
  We also define $\bg^{(l)}=\sigma(V^{(l)}\bh^{(l)})\in\RR^{m}$, the output of the $l$-th nonlinear hidden layer.

This network model has the following feature:  The first $d$ entries of $\bh^{(l)}$ are directly connected to the input layer by  a long-distance skip-connection, only the last entry is connected to the previous layer.
As will be seen  later, the long-distance skip-connections help to stabilize the deep network.
%enable us to arbitrarily deepen the neural networks. For the simplicity of analysis, we make the following decomposition of the network:
We further let:
\[
U^{(l)}=\begin{pmatrix}
        C^{(l)}\\
        (\ba^{(l)})^T
    \end{pmatrix},\,\,
V^{(l)}=\begin{pmatrix}
        B^{(l)} & \br^{(l)}
    \end{pmatrix},\,\,
\bh^{(l)} = \begin{pmatrix}
\bz^{(l)}\\
 y^{(l)}
\end{pmatrix},
\]
where $ C^{(l)} \in \RR^{d\times m}, \ba^{(l)}\in \RR^m$, $B^{(l)}\in\RR^{m\times d}, \br^{(l)}\in\RR^m$ and $\bz^{(l)}\in\RR^{d}, y^{(l)}\in \RR$. With these notations, we can re-write the model as
\begin{equation}\label{eqn: forward-prop}
\begin{aligned}
\bz^{(1)} &= \bx, \,\, y^{(1)} = 0 \\
\bz^{(l+1)} &= \bz^{(1)} + C^{(l)} \sigma(B^{(l)}\bz^{(l)} + \br^{(l)} y^{(l)}) \\
y^{(l+1)} & = y^{(l)} + \big(\ba^{(l)}\big)^T \sigma(B^{(l)} \bz^{(l)} + \br^{(l)} y^{(l)}), \quad l=1, \cdots, L-1. \\
f(\bx;\Theta) &= y^{(L)}.
\end{aligned}
\end{equation}
% We refer to Figure~{\red TBD} for a visualization of the architecture of this network.

\subsubsection*{Gradient descent}
We will analyze the behavior of the gradient descent algorithm, defined by
\[
    \Theta_{t+1} = \Theta_t - \eta \nabla \hat{\cR}_n(\Theta_t),
\]
where $\eta$ is the learning rate. For simplicity, in most cases, we will focus on its continuous version:
\begin{equation}\label{eqn: GD-deep-net}
    \frac{d\Theta_t}{dt} = - \nabla \hat{\cR}_n(\Theta_t).
\end{equation}
% For the deep neural network, $\hat{\cR}(\cdot)$ is non-convex, and thus GD generally may not converge to the (global) minima.

\paragraph*{Initialization} We will focus on a special class of initialization:
\begin{equation}
\Cl_0 = 0,\,\, \al_0=0,\quad    \sqrt{m} \text{row}(\Bl_0)\sim \pi_0,\,\,\rl_0=0,
\label{eqn:init}
\end{equation}
where the third item means that each row of $\Bl_0$ is independently drawn from the uniform distribution over  $\{\bb\in\RR^d : \|\bb\|=1/\sqrt{m}\}$. Thus for this initialization, $\|\Bl_0\|_F = 1$.

Note that all the results in this paper also hold for  slightly larger initializations, e.g. $\max_{l\in [L]}\{\|\Cl_0\|_F,\|\al_0\|,\|\rl_0\|\}=O(1/L)$ and $\text{row}_i(\Bl_0) \sim\cN(0,I/m)$. But for simplicity, we  will focus on the initialization~\eqref{eqn:init}. 
% {\red This strategy essentially initializes the deep neural network from a two-layer neural network, which is motivated by the construction of deep networks to approximate a two-layer neural network\cite{ma2019priori,xu2018understanding}.}

\subsection{Assumption on the input data}
\label{sec: data}
For the given  activation function $\sigma$, we can define a symmetric positive definite (SPD) function
\begin{equation}
    k_0(\bx,\bx') \Def \EE_{\bw\sim\pi_0}[\sigma(\bw^T\bx)\sigma(\bw^T\bx')].
\end{equation}
Denote by $\cH_{k_0}$  the RKHS induced by $k_0(\cdot,\cdot)$. For the given training set, the (empirical) kernel matrix $K = (K_{i,j}) \in\RR^{n\times n}$ is defined as
\[
    K_{i,j}  =  \frac{1}{n}k_0(\bx_i,\bx_j), \quad i,j=1, \cdots, n
\]
We make the following assumption on the training data set.
\begin{assumption}\label{assump: data}
For the given training data $\{\bx_i\}_{i=1}^n$, we assume that  $K$ is positive definite, i.e.
\begin{equation}
    \lambda_n \Def \lambda_{\min}(K)>0.
\end{equation}
\end{assumption}

%The above assumption actually avoid the degenerate case, i.e. there exist $i\neq j$ such that $\bx_i=\bx_j$  but $y_i\neq y_j$, which make it impossible to perfectly fit the training data.

\begin{remark}
Note that $\lambda_n \leq \min_{i\in [n]} K_{i,i}\leq 1/n$, and in general  $\lambda_n$ depends on the data set.
 If we assume that $\{\bx_i\}_{i=1}^n$ are independently drawn from $\pi_0$, it was shown in  \cite{braun2006accurate} that with high probability $\lambda_n\geq \mu_n/2$ where $\mu_n$ is the $n$-th eigenvalue of the Hilbert-Schmidt integral operator $T_{k_0}: L^2(\SS^{d-1},\pi_0)\mapsto L^2(\SS^{d-1},\pi_0)$ defined by
\[
    T_{k_0} f(\bx) = \int_{S^{d-1}} k_0(\bx,\bx')f(\bx')d\pi_0(\bx').
\]
 %Based on this observation, it is also possible to provide a deterministic characterization that how $\lambda_n$ depends on the diversity of $\{\bx_i\}_{i=1}^n$, $n$ points in sphere. For example
 Using this result, \cite{xie2016diverse} provided lower bounds for $\lambda_n$ based on some  geometric discrepancy.  %In this paper, for simplicity, we leave $\lambda_n>0$ as the assumption.
 % Note also that $\lambda_n$ always scales as $n^{-\alpha}$ (see ~\cite{braun2006accurate}) for $\alpha>0$.

 % Therefore without loss of generality, we assume that $\lambda_n <1$.
\end{remark}

\section{The main results}
Let $\Theta_t$ be the solution of the GD dynamics \eqref{eqn: GD-deep-net} at time $t$ with the initialization defined
in \eqref{eqn:init}.
We first show that with  high probability, the landscape of $\hat{\cR}_n(\Theta)$ near the initialization  has some
coercive property which guarantees the exponential convergence towards a global minimum.
% , which can be reflected in the following abstract lemma.
%--------------------------------------------------------
\begin{lemma}\label{pro: GD-general}
Assume that there are constants $c_1, c_2, c_3$ such that
 $4c_3J(\Theta_0)<c_1^2c_2^2$ and  for  $\|\Theta-\Theta_0\|\leq c_1$
 \begin{equation}\label{eqn: grad-cond}
     c_2 J(\Theta) \leq \|\nabla J(\Theta)\|^2 \leq c_3 J(\Theta).
 \end{equation}
Then   for any $t\geq 0$,  we have
 \[
     J(\Theta_t)\leq e^{-c_2 t}J(\Theta_0).
 \]
\end{lemma}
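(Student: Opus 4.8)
The plan is to run a standard Gr\"onwall-plus-continuation (bootstrap) argument on the gradient flow \eqref{eqn: GD-deep-net}, which in the notation of the lemma reads $\dot\Theta_t = -\nabla J(\Theta_t)$ with $J=\hat{\cR}_n$. I would first record that along any solution the chain rule gives
\[
\frac{d}{dt} J(\Theta_t) = \langle \nabla J(\Theta_t), \dot\Theta_t\rangle = -\|\nabla J(\Theta_t)\|^2 .
\]
Hence, as long as $\Theta_t$ stays in the ball $B \Def \{\Theta : \|\Theta - \Theta_0\| \le c_1\}$, the lower bound in \eqref{eqn: grad-cond} yields $\frac{d}{dt} J(\Theta_t) \le -c_2 J(\Theta_t)$, and Gr\"onwall's inequality gives $J(\Theta_t) \le e^{-c_2 t} J(\Theta_0)$ for as long as the trajectory remains in $B$.

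The remaining task is to show the trajectory never leaves $B$. Set $\tau \Def \sup\{ t \ge 0 : \Theta_s \in B \text{ for all } s \in [0,t]\}$, which is positive by continuity. For $t < \tau$ I would bound the path length using the upper bound in \eqref{eqn: grad-cond} together with the exponential decay just established:
\[
\|\Theta_t - \Theta_0\| \le \int_0^t \|\dot\Theta_s\|\, ds = \int_0^t \|\nabla J(\Theta_s)\|\, ds \le \sqrt{c_3}\int_0^t \sqrt{J(\Theta_s)}\, ds \le \sqrt{c_3 J(\Theta_0)} \int_0^t e^{-c_2 s/2}\, ds \le \frac{2\sqrt{c_3 J(\Theta_0)}}{c_2}.
\]
The hypothesis $4 c_3 J(\Theta_0) < c_1^2 c_2^2$ is precisely the statement that $2\sqrt{c_3 J(\Theta_0)}/c_2 < c_1$, so $\|\Theta_t - \Theta_0\| < c_1$ strictly for every $t < \tau$. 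If $\tau$ were finite, continuity would force $\|\Theta_\tau - \Theta_0\| = c_1$, contradicting the strict inequality in the limit $t\to\tau^-$. Therefore $\tau = \infty$, the trajectory stays in $B$ for all time, and the decay estimate $J(\Theta_t) \le e^{-c_2 t} J(\Theta_0)$ holds for every $t \ge 0$.

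The only genuinely delicate point will be this continuation step: the a priori exponential decay is only valid while we remain in $B$, so one has to check that it is already strong enough to keep the total displacement below $c_1$ — and the factor $4$ in the hypothesis (equivalently the $2/c_2$ from integrating $e^{-c_2 s/2}$) is exactly what closes the loop. Everything else is routine: local existence and uniqueness of the flow hold since $J$ is smooth on $B$, and no probabilistic input is needed here — the randomness of the initialization \eqref{eqn:init} enters only later, when one verifies that constants $c_1,c_2,c_3$ with the properties \eqref{eqn: grad-cond} actually exist for the network \eqref{eqn: net-arch}.
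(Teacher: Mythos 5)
Your proof is correct and is essentially the same continuation/bootstrap argument the paper uses: define the exit time from the ball $B$, establish exponential decay of $J$ while inside $B$, bound the path length by $2\sqrt{c_3 J(\Theta_0)}/c_2 < c_1$, and conclude by contradiction that the trajectory never exits. The only cosmetic difference is that you phrase the stopping time as a supremum over times still inside $B$ while the paper takes the infimum over exit times, but these are the same quantity and the estimates coincide line for line.
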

%--------------------------------------------------------
\begin{proof}
Let ${ t_0 \Def \inf \{t : \|\Theta_t-\Theta_0\|\geq c_1\} }$.
Then for $t\in[0,t_0]$, the condition \eqref{eqn: grad-cond} is satisfied. Thus we have
$$
    dJ/dt = - \|\nabla J\|^2\leq -c_2 J.
$$
Consequently, we have,
$$J(\Theta_t)\leq e^{-c_2t} J(\Theta_0)$$
  It  remains to  show that actually $t_0=\infty$. If $t_0<\infty$, then we have
\begin{align*}
    \|\Theta_{t_0}-\Theta_0\|& \leq \int_0^{t_0} \|\nabla J(\Theta_t) \| dt \leq \sqrt{c_3 J (\Theta_0)} \int_0^{\infty} e^{-c_2t/2}dt \leq \frac{2\sqrt{c_3 J(\Theta_0)}}{c_2} \stackrel{(i)}{<} c_1,
\end{align*}
where $(i)$ is due to the assumption  that $4c_3 J(\Theta_0)<c_1^2c_2^2$. This contradicts the definition of $t_0$. 
\end{proof}

%Our analysis of deep nets is an analogy of above one ( the neighborhood is defined slightly difference), and we also need  the over-parameterization to keep the dynamics close to the initialization.

% {\blue Following \cite{},} we will show that over-parametrization allows us to ensure that the GD dynamics
% stays in a neighborhood of the initialization.
Our main result for optimization is as follows.
%------------------------------------------------------------
\begin{theorem}[Optimization]\label{thm:continues_gd}
For any $\delta \in (0, 1)$, assume that $L\gtrsim \max\{\lambda_n^{-2} \ln(n^2/\delta),\lambda_n^{-3}\}$.
With probability at least $1-\delta$ over the initialization $\Theta_0$, we have that for any $t\geq 0$,
\begin{equation}
    \hat{\cR}_n(\Theta_t)\leq e^{-\frac{L \lambda_n t}{2}}\hat{\cR}_n(\Theta_0).
\end{equation}
\end{theorem}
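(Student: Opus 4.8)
The plan is to invoke Lemma~\ref{pro: GD-general} with $J=\hcR$, aiming for $c_2\asymp L\lambda_n$ so that the claimed rate $e^{-L\lambda_n t/2}$ comes out of its conclusion $J(\Theta_t)\le e^{-c_2 t}J(\Theta_0)$. Everything then reduces to two tasks: (a) exhibit a radius $c_1$ about $\Theta_0$ on which the two-sided bound \eqref{eqn: grad-cond} holds with probability at least $1-\delta$, and (b) verify $4c_3\hcR(\Theta_0)<c_1^2c_2^2$. For (b), note first that $\hcR(\Theta_0)\le\tfrac12$: with the initialization \eqref{eqn:init} one has $C^{(l)}_0=\ba^{(l)}_0=\br^{(l)}_0=0$, so \eqref{eqn: forward-prop} forces $\bz^{(l)}_0=\bx$ and $y^{(l)}_0=0$ for all $l$, hence $f(\bx;\Theta_0)\equiv 0$ and $\bg^{(l)}_0=\sigma(B^{(l)}_0\bx)$.

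The technical heart is a uniform forward/backward stability estimate near $\Theta_0$. I would show that, for $c_1$ chosen to decay suitably with $L$ (and below a suitable multiple of $\lambda_n$), every $\Theta$ with $\|\Theta-\Theta_0\|\le c_1$ satisfies, uniformly over layers $l$ and samples $i$: (i) the hidden states $\bz^{(l)}_i,y^{(l)}_i$ stay bounded and the features $\bg^{(l)}_i$ stay $O(c_1)$-close to $\bg^{(l)}_{i,0}$; and (ii) the output sensitivities $\beta^{(l)}_i:=\partial y^{(L)}_i/\partial y^{(l+1)}_i$ remain $1+o(1)$. Both rely on the skip-connection structure of \eqref{eqn: forward-prop}: the $\bz$- and $y$-recursions are ``identity $+$ correction'', so perturbations accumulate additively rather than geometrically through the $L$ layers; and since $\sum_l(\|C^{(l)}\|_F^2+\|\ba^{(l)}\|^2+\|\br^{(l)}\|^2)\le c_1^2$, Cauchy--Schwarz bounds the total correction over all layers by $\sqrt{L}\,c_1$, which is small for the chosen $c_1$. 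A companion fact — the gradients of $f$ with respect to the ``inside'' weights $B^{(l)},\br^{(l)}$ each carry an extra factor of the small ``outside'' weights $C^{(l)}$ or $\ba^{(l)}$ — both encodes the time-scale separation mentioned in the introduction and keeps $\{B^{(l)}\}$ essentially frozen, which is what makes (i) possible with width only $m>d$.

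Granting stability, the upper bound $\|\nabla\hcR\|^2\le c_3\hcR$ follows by writing $\|\nabla\hcR\|^2=\tfrac1{n^2}\hat{\be}^T H_t\hat{\be}$ with $(H_t)_{ij}=\langle\nabla_\Theta f(\bx_i),\nabla_\Theta f(\bx_j)\rangle$ and bounding $\|H_t\|$ block by block via (i)--(ii); this gives $c_3=O(L)$ up to data-dependent factors. For the lower bound I would discard all gradient components except those along $\{\ba^{(l)}\}$, for which $\partial\hcR/\partial\ba^{(l)}=\tfrac1n\sum_i e_i\beta^{(l)}_i\bg^{(l)}_i$, so that with $G^{(l)}_t$ collecting the layer-$l$ features of the $n$ samples and $D^{(l)}_t=\mathrm{diag}(\beta^{(l)}_1,\dots,\beta^{(l)}_n)$,
\[
\|\nabla\hcR\|^2\ \ge\ \sum_{l=1}^{L-1}\Big\|\tfrac1n G^{(l)}_t D^{(l)}_t\hat{\be}\Big\|^2\ =\ \tfrac1{n^2}\,\hat{\be}^T\Big(\sum_l D^{(l)}_t {G^{(l)}_t}^{T} G^{(l)}_t D^{(l)}_t\Big)\hat{\be}\ \ge\ \tfrac1{n^2}\,\lambda_{\min}\!\Big(\sum_l D^{(l)}_t {G^{(l)}_t}^{T} G^{(l)}_t D^{(l)}_t\Big)\|\hat{\be}\|^2 .
\]
By (i)--(ii) the matrix inside the last $\lambda_{\min}$ is within $o(Ln\lambda_n)$ of $\sum_l {G^{(l)}_0}^{T} G^{(l)}_0$; and since the $B^{(l)}_0$ are i.i.d.\ across $l$, an entrywise Hoeffding bound together with a union bound over the $n^2$ entries shows $\tfrac1L\sum_l {G^{(l)}_0}^{T} G^{(l)}_0$ concentrates around its expectation — a fixed PSD matrix with $\lambda_{\min}\gtrsim n\lambda_n$ (equal to $nK$ for ReLU) — within $O\big(n\sqrt{\ln(n^2/\delta)/L}\big)$. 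Hence $\lambda_{\min}(\sum_l {G^{(l)}_0}^{T} G^{(l)}_0)\gtrsim Ln\lambda_n$ once $L\gtrsim\lambda_n^{-2}\ln(n^2/\delta)$, giving $\|\nabla\hcR\|^2\gtrsim L\lambda_n\,\hcR$, i.e.\ $c_2\asymp L\lambda_n$. This concentration is the only place randomness enters, and it is exactly the $1-\delta$ event carrying the $\ln(n^2/\delta)$ factor. Finally, checking $4c_3\hcR(\Theta_0)<c_1^2c_2^2$ requires $c_1$ large enough that $c_1^2\gtrsim c_3\hcR(\Theta_0)/c_2^2$; reconciling this with the upper bound on $c_1$ dictated by stability is what forces the second depth requirement $L\gtrsim\lambda_n^{-3}$.

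I expect the forward/backward stability estimate to be the main obstacle: controlling error propagation through $L\gg1$ layers with the width held at $m>d$ is precisely what earlier works circumvented by taking $m\ge\poly(n,L)$, and here it must instead be wrung out of the skip-connection mechanism together with the near-freezing of the inside weights. A secondary difficulty is the bookkeeping that keeps the interdependent constants $c_1,c_2,c_3$ compatible with the stated depth thresholds.
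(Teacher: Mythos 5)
Your proposal follows the paper's argument in all essentials: invoke the coercivity lemma (Lemma~\ref{pro: GD-general}) with $c_2\asymp L\lambda_n$, establish forward/backward stability via the skip-connection structure, lower-bound $\|\nabla\hcR\|^2$ by discarding all but the $\ba^{(l)}$-gradients so that the Gram matrix $H(\Theta)$ of Section~\ref{sec: gradient-bound} appears, concentrate $H(\Theta_0)$ around $K$ via entrywise Hoeffding plus a union bound (Lemma~\ref{lem:lambda0}), and control the Gram-matrix perturbation by Weyl's inequality (Lemma~\ref{lemma: gram-matrix-minimum-eigval}); this is exactly the paper's route and the depth thresholds arise from the same two sources you identify. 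The one cosmetic difference is that you apply Lemma~\ref{pro: GD-general} with the Euclidean ball $\|\Theta-\Theta_0\|\le c_1$, using Cauchy--Schwarz over layers to bound accumulated corrections, whereas the paper's proof works directly with the anisotropic neighborhood $\cI_c(\Theta_0)$ of~\eqref{eqn:neighbor} (each block bounded by $c/L$) and re-derives the stopping-time argument inline; the two bookkeeping schemes are interchangeable here and yield the same $L\gtrsim\max\{\lambda_n^{-2}\ln(n^2/\delta),\lambda_n^{-3}\}$.
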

%------------------------------------------------------------
In contrast to other recent results for multi-layer neural networks~\cite{du2018gradient,allen2018convergence,zou2018stochastic},
we do not require the network width to increase with the size of the data set or the depth of the network.
%we particularly enhance the stability of information propagation along the network by using long-distance skip-connections, which resolves the gradient vanishing issue. We thus can deepen the network without increasing the width.

%This proof also suggests that the solutions found by GD are close to the initialization, which is actually a random feature model.

As is the case for two-layer neural networks \cite{e2019twolayer}, the fact that the GD dynamics stays in a neighborhood of the initialization suggests that it resembles the
situation of a random feature model.
%Thus by assuming the target function coming from the corresponding RKHS, we thus can provide an characterization of the implicit regularization for deep neural networks. The theorem can be stated as follows.
Consequently, the generalization error can be controlled if we assume that the target function is in the appropriate RKHS.
\begin{assumption}\label{assump: target-function}
Assume that $f^*\in \cH_{k_0}$, i.e.
\begin{align*}
    f^*(\bx) &= \EE_{\pi_0}[a^*(\omega) \sigma(\omega^T\bx)]\\
    \|f^*\|^2_{\cH_{k_0}} & = \EE_{\pi_0}[|a^*(\omega)|^2] < +\infty.
\end{align*}
In addition, we also assume that $\sup_{\omega\in \SS^{d-1}}|a^*(\omega)|<\infty$.
\end{assumption}
In the following, we will denote $\gamma(f^*)=\max\{1,\sup_{\omega\in\SS^{d-1}}|a^*(\omega)|\}$.
Obviously, $\|f^*\|_{\cH_{k_0}} \leq \gamma(f^*)$. 

%------------------------------------------------------------------------------------------------
\begin{theorem}[Generalization]\label{thm: generalization}
Assume that the target function $f^*$ satisfies Assumption \ref{assump: target-function}. For any $\delta \in (0, 1)$, assume that $L\gtrsim \max\{\lambda_n^{-2}\ln(n^2/\delta),\lambda_n^{-3}, \gamma^2(f^*)\}$. Then with probability at least $1-\delta$ over the random initialization,  the following  holds for any $t\in [0,1]$,
\[
\cR(\Theta_t)\lesssim \frac{1}{L^{3/2}\lambda^2_n}+ \frac{t}{\lambda_n^2}  + \frac{\gamma^2(f^*)}{Lt} + \left(1 + \frac{\sqrt{L}\gamma(f^*)t}{\sqrt{n}}\right)^2 \frac{c^3(\delta)\gamma^2(f^*)}{\sqrt{n}}.
\]
where $c(\delta)=1+\sqrt{2\ln(1/\delta)}$.
\end{theorem}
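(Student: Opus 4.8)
The plan is to decompose the population risk via a standard bias–variance-type argument that exploits the closeness of the GD path to the path of the compositional random feature model (Theorem \ref{thm: uniform-closeness}), together with the exponential decay of the empirical risk from Theorem \ref{thm:continues_gd}. First I would write $\cR(\Theta_t) \lesssim \cR(\Theta_t) - \hcR(\Theta_t) + \hcR(\Theta_t)$, and control $\hcR(\Theta_t)$ using the optimization result: since $\hcR(\Theta_t) \le e^{-L\lambda_n t/2}\hcR(\Theta_0)$ and one can bound $\hcR(\Theta_0) \lesssim 1$ (because $|y|\le 1$ and the initialization gives $f(\bx;\Theta_0)=0$), and then argue that for $t\in[0,1]$ this term contributes at the scale $1/(L^{3/2}\lambda_n^2)$ — this is where the assumption $L\gtrsim \lambda_n^{-3}$ is used, since for small $t$ one cannot rely on the exponential being tiny, and must instead trade off against the $t/\lambda_n^2$ term; more precisely I expect one bounds $\hcR(\Theta_t)$ by interpolating and using $\int_0^t \|\nabla \hcR\|^2 \,ds$ to get the $t/\lambda_n^2$ contribution, and the residual empirical risk at the relevant stopping scale produces $1/(L^{3/2}\lambda_n^2)$.

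Next, for the generalization gap $\cR(\Theta_t) - \hcR(\Theta_t)$, I would invoke Theorem \ref{thm: uniform-closeness}: the function $f(\cdot;\Theta_t)$ is uniformly close to the random feature model's GD output $\tilde f_t$, whose parameters stay in a ball whose radius is governed by $\|f^*\|_{\cH_{k_0}} \le \gamma(f^*)$ and $t$ and $\sqrt{L/n}$. Then one uses a Rademacher complexity bound for the class of random feature functions with bounded coefficient norm. The path-norm / $\ell_2$-norm of the effective linear coefficients along the GD path should be $O(\sqrt{L}\gamma(f^*)t/\sqrt n + 1)$ or similar, which after the standard symmetrization and contraction (using $\sigma$ 1-Lipschitz, $\sigma(0)=0$, and $\|\bx\|=1$) yields a generalization gap of order $\big(1 + \sqrt L\gamma(f^*)t/\sqrt n\big)^2 \gamma^2(f^*)/\sqrt n$, with the $c^3(\delta)$ factor coming from the high-probability control of the Rademacher complexity and a union bound / concentration of the empirical process, plus the sub-Gaussian tail $c(\delta)=1+\sqrt{2\ln(1/\delta)}$.

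The remaining term $\gamma^2(f^*)/(Lt)$ is the approximation error: it measures how far the best attainable function in the RKHS-ball reached by time $t$ is from $f^*$ itself. I would obtain it by comparing $\tilde f_t$ (the random feature GD iterate) against a fixed near-optimal kernel predictor — essentially the bias term of kernel gradient flow — which decays like $1/(Lt)$ because the effective learning rate is amplified by the depth factor $L$ (recall the rate $L\lambda_n$ in Theorem \ref{thm:continues_gd}), and is scaled by $\|f^*\|_{\cH_{k_0}}^2 \le \gamma^2(f^*)$. Concretely this is a bound of the form $\|\tilde f_t - f^*\|_{L^2(\rho)}^2 \lesssim \|f^*\|_{\cH_{k_0}}^2/(Lt)$ obtained from the spectral decomposition of the kernel integral operator $T_{k_0}$ and the inequality $\lambda e^{-2L\lambda t} \le 1/(2Lt)$.

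The main obstacle is the first step: controlling $\hcR(\Theta_t)$ and the closeness-to-random-features estimate simultaneously with a fixed-width, very deep network. Unlike the wide-network regime, here the stabilization of forward and backward propagation comes entirely from the skip-connection structure, so the bounds on $\|C^{(l)}_t\|$, $\|\ba^{(l)}_t\|$, $\|\br^{(l)}_t\|$ and the deviation $\|V^{(l)}_t - V^{(l)}_0\|$ along the whole GD trajectory must be propagated carefully through all $L$ layers without incurring exponential-in-$L$ constants — this is exactly what Theorem \ref{thm: uniform-closeness} (which I will assume) is designed to deliver, but stitching its per-time error bound to the risk decomposition over the full interval $t\in[0,1]$, and getting the depth powers $L^{3/2}$, $L$, $\sqrt L$ to land correctly, is the delicate accounting. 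Everything else — the Rademacher bound, the contraction lemma, the spectral estimate for the approximation error — is routine once the trajectory is confined to the right neighborhood of initialization.
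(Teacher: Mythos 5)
Your high-level idea---transfer the population risk to the random feature model via the uniform-closeness result and then apply a Rademacher bound---is the right one, but the specific decomposition you propose does not match what is needed, and several of the terms in the bound are misattributed.

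The paper does not decompose $\cR(\Theta_t)$ as $\bigl(\cR(\Theta_t)-\hcR(\Theta_t)\bigr)+\hcR(\Theta_t)$ for the deep network and then bound $\hcR(\Theta_t)$ by the optimization result. Instead, Proposition~\ref{pro: population-risk-diff} compares population risks directly:
\[
\cR(\ba_t,B_t,\theta_t) = \bigl[\cR(\ba_t,B_t,\theta_t)-\cE(\ba_t;B_0)\bigr]+\bigl[\cE(\ba_t;B_0)-\cE(\ta_t;B_0)\bigr]+\cE(\ta_t;B_0).
\]
The first bracket is controlled by the pointwise bound $|f(\bx;\Theta_t)-\tf(\bx;\ba_t,B_0)|\lesssim c_n^3/L^{3/2}$ from Theorem~\ref{thm: uniform-closeness}; this is the origin of the $1/(L^{3/2}\lambda_n^2)$ term. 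The second bracket is controlled by $\|\ba_t-\ta_t\|\lesssim c_n^2 t/\sqrt{L}$ (Lemma~\ref{lem: GD-diff}); this is the origin of the $t/\lambda_n^2$ term. Neither of these has anything to do with $\hcR(\Theta_t)$, so your plan of ``interpolating $\int_0^t\|\nabla\hcR\|^2 ds$'' to manufacture a $t/\lambda_n^2$ contribution does not go through: that integral equals $\hcR(\Theta_0)-\hcR(\Theta_t)$ and is simply bounded by a constant, with no mechanism to produce a linear-in-$t$ bound. Equally, the generalization gap $\cR(\Theta_t)-\hcR(\Theta_t)$ for the nonlinear deep network is never bounded directly in the paper; the Rademacher argument is applied only to the random feature class $\{\ba^T\sigma(B_0\bx):\|\ba\|\le c\}$, where it is a linear model and the contraction lemma applies cleanly.

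The $\gamma^2(f^*)/(Lt)$ term is also not the approximation error: it is the \emph{optimization} error of gradient descent on the convex random feature objective, coming from Lemma~\ref{lem: convergence-reference-dynamics}, which gives $\hat{\cE}_n(\ta_t;B_0)\le\hat{\cE}_n(\ba^*;B_0)+\|\ba_0-\ba^*\|^2/(2t)$ with $\|\ba^*\|\le\gamma(f^*)/\sqrt{L}$. The approximation error is the separate term $c^2(\delta)\gamma^2(f^*)/(mL)$ from Theorem~\ref{thm: approx}. Your spectral argument via $T_{k_0}$ and $\lambda e^{-2L\lambda t}\le 1/(2Lt)$ could in principle also yield a $1/t$ rate, but it is not what the paper uses and would need extra work to connect the kernel eigendecomposition to the empirical Gram matrix; the elementary convex-GD bound is both simpler and directly applicable. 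In short, the missing idea is that the analysis must pass through $\cE(\ta_t;B_0)$, the population risk of the reference model, rather than through the deep network's empirical risk, so that the convexity of the reference problem can be exploited for both the optimization-error and estimation-error terms.
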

%------------------------------------------------------------------------------------------------
In addition, by choosing the stopping time appropriately, we obtain the following result:
%--------------------------------------------------------
\begin{corollary}[Early-stopping]\label{col: gen-err}
Assume that $L\gtrsim \max\{\lambda_n^{-2} \ln(n^2/\delta),\lambda_n^{-3}, \gamma^2(f^*)\}$.
Let  $T= \sqrt{n}/L$, then we have
\[
\cR(\Theta_T)\lesssim c^3(\delta) \frac{\gamma^2(f^*)}{\sqrt{n}}.
\]
\end{corollary}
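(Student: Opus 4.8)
The plan is to derive Corollary \ref{col: gen-err} directly from Theorem \ref{thm: generalization} by substituting the early-stopping time $T = \sqrt{n}/L$ into the generalization bound and checking that each of the four terms is dominated by $c^3(\delta)\gamma^2(f^*)/\sqrt{n}$. First I would verify that $T \in [0,1]$, which holds since $L \gtrsim \lambda_n^{-3} \gtrsim 1$ (as $\lambda_n \le 1/n \le 1$) ensures $L \ge \sqrt{n}$ for the relevant range of $n$, so that Theorem \ref{thm: generalization} applies at $t = T$.

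Next I would bound the four terms on the right-hand side of Theorem \ref{thm: generalization} at $t = T = \sqrt{n}/L$. The first term $L^{-3/2}\lambda_n^{-2}$ is handled by the hypothesis $L \gtrsim \lambda_n^{-3}$: this gives $L^{3/2} \gtrsim \lambda_n^{-9/2} \gtrsim \lambda_n^{-2}\sqrt{n}$ using $\lambda_n^{-1} \ge n$ again (indeed $\lambda_n^{-5/2} \ge n^{5/2} \ge \sqrt n$), so the term is $O(1/\sqrt{n}) \le c^3(\delta)\gamma^2(f^*)/\sqrt{n}$. The second term becomes $T/\lambda_n^2 = \sqrt{n}/(L\lambda_n^2)$; using $L \gtrsim \lambda_n^{-3}$ again, $L\lambda_n^2 \gtrsim \lambda_n^{-1} \ge n$, so this term is $O(1/(\sqrt{n}\cdot\sqrt n)) = O(n^{-3/2})$, hence also $O(\gamma^2(f^*)/\sqrt n)$. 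The third term $\gamma^2(f^*)/(LT) = \gamma^2(f^*)/\sqrt{n}$ exactly, which is the target order. For the fourth term, the prefactor is $(1 + \sqrt{L}\gamma(f^*)T/\sqrt{n})^2 = (1 + \gamma(f^*)/\sqrt{L})^2$, and since $L \gtrsim \gamma^2(f^*)$ we have $\gamma(f^*)/\sqrt{L} \lesssim 1$, so the prefactor is $O(1)$ and the whole term is $O(c^3(\delta)\gamma^2(f^*)/\sqrt{n})$. Summing the four contributions gives the claimed bound.

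There is essentially no serious obstacle here; the corollary is a calculational consequence of the theorem. The only point requiring mild care is the interplay between the hypotheses $\lambda_n \le 1/n$, $L \gtrsim \lambda_n^{-3}$, and the requirement $T = \sqrt n/L \le 1$, together with keeping track of which terms beat $1/\sqrt n$ with room to spare and which meet it exactly (the third term is the binding one). I would state these chained inequalities explicitly once at the start, then apply them termwise. The constants absorbed into $\lesssim$ are all absolute, matching the convention fixed in the Preliminaries.
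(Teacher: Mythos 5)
Your proposal is correct and follows the same route as the paper: substitute $T=\sqrt{n}/L$ into Theorem~\ref{thm: generalization}, verify $T\le 1$ via $L\gtrsim\lambda_n^{-3}\ge n^3\ge\sqrt n$, and check termwise that the hypotheses $L\gtrsim\lambda_n^{-3}$ (which gives $L\lambda_n^2\gtrsim n$) and $L\gtrsim\gamma^2(f^*)$ make each term $\lesssim c^3(\delta)\gamma^2(f^*)/\sqrt n$. Your bookkeeping is slightly more detailed than the paper's, but the argument and the binding inequalities are identical.
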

%--------------------------------------------------------

% \subsubsection*{Implication for deep ResNet}

\section{Landscape around the initialization}%
\label{sec:landscape}
%For the deep neural networks, we define the following neighborhood.
\begin{definition}
For any $c>0$ , we define a neighborhood around the initialization $\Theta_0$ by
\begin{equation}
    \cI_c(\Theta_0) \Def \{\Theta : \max_{l\in [L]}\{\|\ba^{(l)}-\al_0\|,\|\br^{l}-\rl_0\|, \|B^{(l)}-\Bl_0\|_{F},\|\Cl-\Cl_0\|_{F}\}\leq \frac{c}{L}\}.
    \label{eqn:neighbor}
\end{equation}
\end{definition}
Let $\varepsilon_c = c/L$. We will assume  that $c\geq 1, \varepsilon_c\leq 1$.
In the following, we first prove that  both the forward and backward propagation is stable regardless of its depth.
We then show that the norm of the gradient can be bounded from above and below by the loss function, similar to the condition
required in Lemma~\ref{pro: GD-general}. This implies that  there are no issues with   vanishing or exploding
gradients.
%occurs for solutions near the initialization.

\subsection{Forward stability}
At $\Theta_0$, it easy to check that
\[
    \yl(\bx; \Theta_0)=0,\quad \zl(\bx; \Theta_0)=\bx,\quad \gl(\bx;\Theta_0)=\sigma(\Bl_0\bx).
\]
For simplicity, when it is clear from the context, we will omit the dependence on $\bx$ and $\Theta$ in the notations.

\begin{proposition}\label{pro: forward-stability}
If $L\geq 4c^2$, we have for any $\Theta\in \cI_c(\Theta_0)$ and $\bx\in\SS^{d-1}$ that
\begin{equation}
\begin{aligned}
    |\yl(\bx;\Theta)-\yl(\bx;\Theta_0)| &\leq 4c\\
     \|\zl(\bx;\Theta)-\zl(\bx;\Theta_0)\| &\leq \frac{4c}{L}\\
    \|\gl(\bx;\Theta)-\gl(\bx;\Theta_0)\| & \leq \frac{6c^2}{L}.
\end{aligned}
\end{equation}
\end{proposition}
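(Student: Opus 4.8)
The natural approach is to prove all three bounds simultaneously by induction on $l$, tracking the three quantities
\[
\delta y_l := |\yl(\bx;\Theta)-\yl(\bx;\Theta_0)|, \quad \delta z_l := \|\zl(\bx;\Theta)-\zl(\bx;\Theta_0)\|, \quad \delta g_l := \|\gl(\bx;\Theta)-\gl(\bx;\Theta_0)\|.
\]
The base case $l=1$ is trivial since $\yl$, $\zl$ are deterministic at layer $1$ and $\delta g_1 = \|\sigma(B^{(1)}\bx)-\sigma(\Bl_0\bx)\|\le \|B^{(1)}-\Bl_0\|_F\le\varepsilon_c\le 6c^2/L$ by $1$-Lipschitzness of $\sigma$. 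For the inductive step I would first estimate $\delta g_l$ in terms of $\delta z_l$ and $\delta y_l$: since $\gl=\sigma(B^{(l)}\zl+\rl\yl)$, writing out the difference and using that $\sigma$ is $1$-Lipschitz,
\[
\delta g_l \le \|B^{(l)}\zl - \Bl_0\zl_0\| + \|\rl\yl - \rl_0\yl_0\|,
\]
and each term splits via the triangle inequality into a ``perturbation of the parameter'' piece (bounded by $\varepsilon_c$ times $\|\zl_0\|=1$ or $\|\yl_0\|=0$) and a ``perturbation of the input'' piece (bounded by $\|B^{(l)}\|\,\delta z_l \lesssim \delta z_l$ and $\|\rl\|\,\delta y_l\lesssim \varepsilon_c\,\delta y_l$, using $\rl_0=0$ so $\|\rl\|\le\varepsilon_c$). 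Here I need the crude a priori bounds $\|B^{(l)}\|_F\le 1+\varepsilon_c\le 2$ and $\|\rl\|\le\varepsilon_c$, both immediate from $\Theta\in\cI_c(\Theta_0)$.

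Next I would push the recursion forward through the $\zl$ and $\yl$ updates. From \eqref{eqn: forward-prop}, $\zl[l+1]-\zl_0[l+1] = \Cl\gl - \Cl_0\gl_0 = (\Cl-\Cl_0)\gl + \Cl_0(\gl-\gl_0)$, but $\Cl_0=0$, so $\delta z_{l+1} \le \|\Cl\|_F\,\|\gl\| \le \varepsilon_c\,\|\gl\|$. Since $\|\gl_0\|=\|\sigma(\Bl_0\bx)\|\le\|\Bl_0\bx\|\le\|\Bl_0\|_F\le 1$ and $\delta g_l$ is controlled, $\|\gl\|\le 1+\delta g_l$, giving $\delta z_{l+1}\lesssim\varepsilon_c$. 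Similarly $\delta y_{l+1}-\delta y_l$-type bound: $y^{(l+1)}-y^{(l+1)}_0 = (y^{(l)}-y^{(l)}_0) + (\al)^T\gl - (\al_0)^T\gl_0$, and again $\al_0=0$ so the last difference is $(\al)^T\gl$ with $\|\al\|\le\varepsilon_c$, whence $\delta y_{l+1} \le \delta y_l + \varepsilon_c\|\gl\| \le \delta y_l + \varepsilon_c(1+\delta g_l)$. Summing this telescoping inequality over $l$ layers, and using $L\varepsilon_c = c$, produces the $O(c)$ bound on $\delta y_l$; the condition $L\ge 4c^2$ is what keeps the accumulated error terms (which involve $\delta g_l$, itself depending on $\delta y_l$) from blowing up — one closes the loop by carrying the induction hypothesis $\delta y_l\le 4c$, $\delta z_l\le 4c/L$, $\delta g_l\le 6c^2/L$ and checking the constants propagate.

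The main obstacle is the feedback loop: $\delta g_l$ depends on $\delta y_l$, which is itself accumulated from $\delta g$'s at earlier layers, so a naive bound would give geometric growth. The resolution is that the coupling constant is $\varepsilon_c = c/L$, and since $y^{(l)}$ enters $\gl$ only through $\rl y^{(l)}$ with $\|\rl\|\le\varepsilon_c$, the contribution of $\delta y_l$ to $\delta g_l$ carries an extra factor $\varepsilon_c$; over $L$ layers this produces a total contribution of order $L\cdot\varepsilon_c\cdot(4c)\cdot\varepsilon_c = 4c^3/L$, which is dominated by the main $6c^2/L$ term precisely when $L\ge 4c^2$ (more carefully, one needs to be a bit generous with constants, and the stated bound $6c^2/L$ already absorbs this). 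I would set up the induction so that at each layer the inequality $\delta g_l\le 6c^2/L$ is re-derived from $\delta z_l\le 4c/L$, $\delta y_l\le 4c$, $\varepsilon_c\le 1$, and $\|B^{(l)}\|_F\le 2$, and separately verify $\delta z_{l+1}\le 4c/L$ and $\delta y_{l+1}\le 4c$ using $\delta g_l\le 6c^2/L$ and the telescoping sum, with $L\ge 4c^2$ invoked exactly once to control the telescoped $y$-error.
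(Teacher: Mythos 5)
Your plan is correct and follows the same core strategy as the paper: propagate the deviations $\delta y_l$, $\delta z_l$, $\delta g_l$ forward layer by layer, using that $\delta y_l$ feeds back into $\delta g_l$ only through $\rl$ (hence with an extra factor $\varepsilon_c = c/L$), so the feedback is tamed once $L\gtrsim c^2$. The paper organizes the bookkeeping slightly differently — it adds the one-step recursions for $\delta z_{l+1}$ and $\delta y_{l+1}$ into a single scalar recursion for their sum, solves that to get $|\yl|\leq 4c$, and then plugs back to get the $\delta z_l$ and $\delta g_l$ bounds — but this is a cosmetic variant of your simultaneous (strong) induction with telescoping.
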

%--------------------------------------------------------

\begin{remark}
We  see that all the variables are close to their initial value except $\yl$, which is
 used to accumulate the prediction from  each layer.
\end{remark}

\begin{proof}
Let $\ol=\|\zl(\bx;\Theta)-\bz^{(l)}(\bx;\Theta_0)\|$. Then by~\eqref{eqn: forward-prop}, we have
\begin{align*}
o^{(l+1)} &\leq \varepsilon_c \left((1+\varepsilon_c)(1+\ol)+\varepsilon_c |\yl|\right) \\
|y^{(l+1)}| &\leq |\yl| + \varepsilon_c \left((1+\varepsilon_c)(1+\ol)+\varepsilon_c|\yl|\right),
\end{align*}
with $o^{(1)}=0,y^{(1)}=0$.
Adding the two inequalities gives us:
\begin{align*}
o^{(l+1)} +  |y^{(l+1)}| &\leq 2\varepsilon_c(1+\varepsilon_c) o^{(l)} + (1+2\varepsilon_c^2) |y^{(l)}| + 2 \varepsilon_c(1+\varepsilon_c)
\end{align*}
Since $2\varepsilon_c= 2c/L\leq 1$, the above inequality can be simplified as
\begin{align*}
o^{(l+1)} + |y^{(l+1)}|& \leq (1+2\varepsilon_c^2)(o^{(0)} + |y^{(0)}|) + 2.25\varepsilon_c\\
& \leq 2.25\varepsilon_c\sum_{l'=0}^l (1+2\varepsilon_c)^{l'}\leq 2.25L\varepsilon_ce^{L 2\varepsilon_c^2}\leq 4c
\end{align*}
Thus we obtain that for any $l\in [L]$, $|\yl| \leq  4c$.
Plugging it back to the recursive formula for $\ol$, we get
\begin{equation*}
o^{(l+1)}\leq 1.25 \varepsilon_c \ol + 2.25\varepsilon_c
\end{equation*}
This gives us
\begin{equation*}
 \ol \leq 4c/L\quad  \forall \,\,l\in [L].
\end{equation*}
Now the deviation of $\gl$ can be estimated by
\begin{align*}
 \|\gl(\bx;\Theta)-\gl(\bx;\Theta_0)\| &= \|\sigma(\Bl\zl(\bx) + \rl\yl(\bx)) - \sigma(\Bl_0\bx)\|\\
&\leq \|\Bl\zl(\bx) + \rl\yl(\bx)-\Bl_0\bx\|
\end{align*}
By inserting the previous estimates, we  obtain
\[
 \|\gl(\bx;\Theta)-\gl(\bx;\Theta_0)\|\leq \frac{6c^2}{L}
\]

\end{proof}

\subsection{Backward stability}
For convenience, we define the gradients with respect to the neurons by  
\[
\alpha^{(l)}(\bx;\Theta)=\nabla_{y^{(l)}} f(\bx;\Theta)\quad  \bm{\beta}^{(l)}(\bx;\Theta) = \nabla_{\bz^{(l)}}f(\bx;\Theta)\quad \bm{\gamma}^{(l)}(\bx;\Theta) = \nabla_{\bg^{(l)}}f(\bx;\Theta).
\] 
For simplicity, we will omit the explicit reference of  $\bx$ and $\Theta$ in these notations when it is clear from the context. Note that $\alpha^{(l)}\in\RR, \bm{\gamma}^{(l)}\in\RR^m, \bm{\beta}^{(l)}\in\RR^d$, and
it is easy to derive the following back-propagation formula  using the chain rule,
\begin{equation}\label{eqn: back-prop}
\begin{aligned}
\bm{\gamma}^{(l)} &= \ba^{(l)} \alpha^{(l+1)} + \big(C^{(l)}\big)^T \bm{\beta}^{(l+1)}\\
\bm{\beta}^{(l)} &= \big(B^{(l)}\big)^T \bm{\gamma}^{(l)}\\
\alpha^{(l)} & = \alpha^{(l+1)} + (\br^{(l)})^T \bm{\gamma}^{(l)}.
\end{aligned}
\end{equation}
At the top layer, we have that for any $\Theta$ and $\bx$:
\[
    \alpha^{(L)} = 1,\quad \bm{\beta}^{(L)} = 0.
\]
In addition, we have at $\Theta_0$
\[
\alphal(\bx;\Theta_0)=1, \qquad \betal(\bx;\Theta_0)=0, \qquad \gammal(\bx;\Theta_0)=0.
\]
%The backward stability is  given as follows.
%--------------------------------------------------------
\begin{proposition}\label{pro: backward-stability}
If $L\geq 6c^2$, we have for any $\Theta\in\cI_c(\Theta_0)$ and $\bx$
\begin{equation}
|\alpha^{(l)}(\bx;\Theta)-1|\leq \frac{5c}{L}, \quad \|\betal(\bx;\Theta)\|\leq \frac{4c}{L},\quad \|\gammal(\bx;\Theta)\|\leq \frac{3c}{L}
\end{equation}
\end{proposition}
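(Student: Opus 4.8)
The plan is to mirror the forward‐stability argument, but running the recursion from the top layer $l=L$ downward, using the back‐propagation formulas \eqref{eqn: back-prop} together with the bounds $\|B^{(l)}\|_F\le 1+\varepsilon_c$, $\|C^{(l)}\|_F\le\varepsilon_c$, $\|\ba^{(l)}\|\le\varepsilon_c$, $\|\br^{(l)}\|\le\varepsilon_c$ which hold for $\Theta\in\cI_c(\Theta_0)$. Write $\delta_\alpha^{(l)}=|\alpha^{(l)}-1|$ and note $\|\bm\gamma^{(l)}\|\le \|B^{(l)}\|_F\cdot 0 + \cdots$ — more precisely, from the first equation $\|\bm\gamma^{(l)}\|\le \varepsilon_c|\alpha^{(l+1)}|+\varepsilon_c\|\bm\beta^{(l+1)}\|$, from the second $\|\bm\beta^{(l)}\|\le(1+\varepsilon_c)\|\bm\gamma^{(l)}\|$, and from the third $\delta_\alpha^{(l)}\le\delta_\alpha^{(l+1)}+\varepsilon_c\|\bm\gamma^{(l)}\|$. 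The boundary data is $\delta_\alpha^{(L)}=0$, $\bm\beta^{(L)}=0$.

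First I would bound $|\alpha^{(l)}|$ crudely. Combining the $\bm\gamma$ and $\bm\beta$ recursions gives $\|\bm\gamma^{(l)}\|\le \varepsilon_c|\alpha^{(l+1)}| + \varepsilon_c(1+\varepsilon_c)\|\bm\gamma^{(l+1)}\|$, and then $|\alpha^{(l)}|\le |\alpha^{(l+1)}| + \varepsilon_c\|\bm\gamma^{(l)}\|$. Adding $|\alpha^{(l)}| + \|\bm\gamma^{(l)}\|$ and using $\varepsilon_c\le 1$, $2\varepsilon_c\le 1$ produces a recursion of the shape $(|\alpha^{(l)}|+\|\bm\gamma^{(l)}\|)\le (1+C\varepsilon_c^2)(|\alpha^{(l+1)}|+\|\bm\gamma^{(l+1)}\|)$ — note the key point is that the "source'' term is absent here (unlike the forward case, where the skip‐connection injected a constant $2\varepsilon_c$ each step) because $\alpha^{(L)}=1$ is the only input — so telescoping over at most $L$ layers against $e^{CL\varepsilon_c^2}=e^{Cc^2/L}\le e^{C/4}$ keeps $|\alpha^{(l)}|\le 2$ and $\|\bm\gamma^{(l)}\|\le C\varepsilon_c$ for all $l$, once $L\ge 6c^2$. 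Feeding $|\alpha^{(l+1)}|\le 2$ back into the $\bm\gamma$ recursion and iterating once more sharpens this to $\|\bm\gamma^{(l)}\|\le 2\varepsilon_c + \varepsilon_c(1+\varepsilon_c)\|\bm\gamma^{(l+1)}\|$, whose fixed‐point/telescoped bound gives $\|\bm\gamma^{(l)}\|\le 3\varepsilon_c=3c/L$; then $\|\bm\beta^{(l)}\|\le (1+\varepsilon_c)\cdot 3\varepsilon_c\le 4c/L$ using $\varepsilon_c\le 1/3$ (which follows from $L\ge 6c^2\ge 3c$ since $c\ge 1$); finally $\delta_\alpha^{(l)}\le \sum_{l'\ge l}\varepsilon_c\|\bm\gamma^{(l')}\| \le L\cdot \varepsilon_c\cdot 3\varepsilon_c\cdot(\text{something})$ — more carefully, one should run the $\alpha$ recursion simultaneously rather than summing the worst case, but the crude bound $\delta_\alpha^{(l)}\le \sum_{l'=l}^{L-1}\varepsilon_c\|\bm\gamma^{(l'+1)}\|\le (L-1)\varepsilon_c\cdot 3\varepsilon_c \le 3c^2/L$, and then re‐examining the $\bm\gamma$ recursion with $|\alpha^{(l+1)}|\le 1+3c^2/L$ tightens the constant to yield $\delta_\alpha^{(l)}\le 5c/L$ as claimed.

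The main obstacle I anticipate is bookkeeping the constants so that the three bounds $5c/L$, $4c/L$, $3c/L$ come out exactly; a naive telescoping loses factors of $e^{c^2/L}$ and of $(1+\varepsilon_c)$ at each of the three recursions, so I would set up a single coupled induction on the quantity $\Phi^{(l)}=\delta_\alpha^{(l)} + \|\bm\beta^{(l)}\| + \|\bm\gamma^{(l)}\|$ (or handle $\|\bm\gamma\|$ first, then $\|\bm\beta\|$, then $\delta_\alpha$ in that order), carefully using $\varepsilon_c=c/L\le 1/(6c)\le 1/6$ to absorb all higher‐order‐in‐$\varepsilon_c$ terms into slack in the constants. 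Essentially the same "$L\ge 6c^2 \Rightarrow e^{O(c^2/L)}=O(1)$'' mechanism that made the forward bound work makes this one work; the absence of a per‐layer source term is what makes the backward variables $O(c/L)$ rather than $O(c)$. Once the three inequalities are verified for all $l\in[L]$, the proposition is proved.
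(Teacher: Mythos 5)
Your high-level plan --- run the backward recursion from $l=L$ downward, bound the variables in a nested way, and use $L\gtrsim c^2$ to control the telescoping --- is indeed the right plan, and the final substitution steps ($\|\gammal\|$ gives $\|\betal\|$ gives $|\alphal-1|$) are fine. But the coupled telescoping step at the heart of the argument is wrong as written. The recursion you actually derive for the unweighted sum $|\alphal|+\|\gammal\|$ is
\[
|\alphal|+\|\gammal\| \;\le\; (1+\varepsilon_c+\varepsilon_c^2)\,|\alpha^{(l+1)}| + \varepsilon_c(1+\varepsilon_c)^2\,\|\bm{\gamma}^{(l+1)}\|,
\]
and \emph{not} $(1+C\varepsilon_c^2)\bigl(|\alpha^{(l+1)}|+\|\bm{\gamma}^{(l+1)}\|\bigr)$: the cross-coupling $\alpha^{(l+1)}\to\gammal$ in your own $\gamma$-equation carries a single factor of $\varepsilon_c$, and an unweighted sum does not attenuate it. It already fails at the first step --- $\bm{\gamma}^{(L-1)}=\ba^{(L-1)}\alpha^{(L)}$, so $\|\bm{\gamma}^{(L-1)}\|$ can genuinely be of size $\varepsilon_c$, yet your claimed recursion with $\alpha^{(L)}=1$, $\bm{\gamma}^{(L)}=0$ would force $\|\bm{\gamma}^{(L-1)}\|\le C\varepsilon_c^2$. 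Telescoping the correct recursion gives only $(1+\varepsilon_c)^{L}\approx e^{c}$, exponential in $c$ rather than $O(1)$, so the conclusion $|\alphal|\le 2$ is not established and the downstream bounds do not follow. The ``no per-layer source'' intuition is also off: each backward step does inject $\varepsilon_c|\alpha^{(l+1)}|\approx\varepsilon_c$ into $\gamma$; what keeps $|\alpha|$ near $1$ is that the $\gamma\to\alpha$ feedback carries an \emph{extra} $\varepsilon_c$, giving accumulation only $O(L\varepsilon_c^2)=O(c^2/L)$.

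The missing device is the correct weighting, and this is precisely where the paper's proof differs from yours. The paper tracks $\varepsilon_c\|\betal\|+\alphal$ (one could equivalently use $|\alphal|+\varepsilon_c\|\gammal\|$). With that weight the $\alpha\to\beta$ coupling $\varepsilon_c(1+\varepsilon_c)\alpha^{(l+1)}$ picks up the extra $\varepsilon_c$ and becomes second order, while the $\beta\to\alpha$ coupling $\varepsilon_c^2\|\bm{\beta}^{(l+1)}\|$ already is; the per-step factor in the weighted norm really is $1+O(\varepsilon_c^2)$, telescoping yields $e^{O(L\varepsilon_c^2)}=e^{O(c^2/L)}=O(1)$, and substituting back gives the three stated bounds. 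An alternative closer in spirit to your ``handle $\gamma$ first'' remark is a bootstrap: inductively assume $|\alpha^{(l')}|\le 2$ for $l'>l$, telescope the $\gamma$-only recursion (with $\alpha$ as a bounded forcing term) to get $\|\bm{\gamma}^{(l')}\|\lesssim\varepsilon_c$, and then close with $|\alphal|\le 1+O(L\varepsilon_c^2)<2$. Either device replaces the false $(1+C\varepsilon_c^2)$ claim; without one of them the proof does not close.
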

%--------------------------------------------------------
\begin{proof}
According to the \eqref{eqn: back-prop}, we have
\begin{align*}
    \begin{pmatrix}
        \bm{\beta}^{(l)} \\
        \alpha^{(l)}
    \end{pmatrix} &=
        \begin{pmatrix}
       (C^{(l)}B^{(l)})^T &(B^{(l)})^T\ba^{(l)}   \\
        (C^{(l)}\br^{(l)})^T & 1+(\br^{(l)})^Ta^{(l)}
        \end{pmatrix}
        \begin{pmatrix}
            \bm{\beta}^{(l+1)} \\
            \alpha^{(l+1)}
        \end{pmatrix},
\end{align*}
which gives us 
\begin{align}\label{eqn: xxx}
\|\betal\| &\leq \varepsilon_c(1+\varepsilon_c) \|\bm{\beta}^{(l+1)}\| + \varepsilon_c(1+\varepsilon_c) \alpha^{(l+1)} \\
|\alphal| & \leq \varepsilon_c^2 \|\bm{\beta}^{(l+1)}\| + (1+\varepsilon_c^2) \alpha^{(l+1)}.
\end{align}
Adding them, we obtain
\[
   \varepsilon_c\|\betal\| + \alphal \leq \varepsilon_c^2(2+\varepsilon_c) \|\bm{\beta}^{(l+1)}\| + (1+2.25\varepsilon_c^2) \alpha^{(l+1)}\leq (1+2.25\varepsilon_c^2) ( \|\bm{\beta}^{(l+1)}\| +  \alpha^{(l+1)}).
\]
Therefore, we have
\[\alphal \leq  \varepsilon\betal + \alphal \leq (1+2.25 \varepsilon_c^2)^L\leq 1+5c\varepsilon_c.
\]
Inserting the above estimates back to~\eqref{eqn: xxx} gives us
\[
    \|\betal\|\leq 1.25\varepsilon_c \|\bm{\beta}^{(l+1)}\| + 2.5\varepsilon_c,
\]
from which we obtain that
$$\|\betal\|\leq 4\varepsilon_c$$.
Using the \eqref{eqn: back-prop} again, we get
\[
    \|\bm{\gamma}^{(l)}\|= \|\ba^{(l)} \alpha^{(l+1)} + \big(C^{(l)}\big)^T \bm{\beta}^{(l+1)}\|\leq 3\varepsilon_c.
\]
%Let us proceed to provide a lower bound for $\alpha^l$.
For the lower bound, using \eqref{eqn: back-prop}, we get
\begin{align*}
\alpha^{(l)} & = \alpha^{(l+1)} + (\br^{(l)})^T \bm{\gamma}^{(l)}\geq \alpha^{(l+1)} - 3\varepsilon_c^2\\
&\geq \alpha^L - 3L\varepsilon_c^2\geq 1 - \frac{3c^2}{L}.
\end{align*}

\end{proof}

\subsection{Bounding the gradients}
\label{sec: gradient-bound}
We are now ready to bound the gradients. First note that we have
%Firstly, the gradients can be represented by using $\zl,\yl$ and $\alphal,\betal,\gammal$ as follows,
\begin{align*}
    \nabla_{\al}f(\bx) &= \alphal(\bx) \gl(\bx) \\
    \nabla_{\Bl}f(\bx) &= \gammal(\bx) \big(\zl(\bx)\big)^T \\
    \nabla_{\Cl}f(\bx) &= \betal(\bx) \big(\gl(\bx)\big)^T \\
    \nabla_{\rl}f(\bx) &= \gammal(\bx) \yl(\bx),
\end{align*}
where we have omitted the dependence on $\Theta$. Using the  stability results, we can bound the gradients by the empirical loss.
%--------------------------------------------------------
\begin{lemma}[Upper bound]\label{lemma: upper-bound}
If $L\geq 100c^2$, then for any  $\Theta\in\cI_c(\Theta_0)$  we have
\begin{equation}
\begin{aligned}
    \max\{\,\|\nabla_{\al}\hat{\cR}_n\|^2,\|\nabla_{\rl}\hat{\cR}_n\|^2\,\} &\leq(1+ \frac{50c^2}{L})\hat{\cR}_n \\
    \max\{\,\|\nabla_{\Bl}\hat{\cR}_n\|^2,\|\nabla_{\Cl}\hat{\cR}_n\|^2 \,\} &\leq  \frac{20c^2}{L^2}\hat{\cR}_n
\end{aligned}
\end{equation}
\end{lemma}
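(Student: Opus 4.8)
The plan is to obtain the bound purely from the chain-rule formulas for the per-parameter gradients displayed just above the statement, together with the uniform estimates of Propositions~\ref{pro: forward-stability} and~\ref{pro: backward-stability}; no probabilistic input is needed here, since those two propositions have already localized everything inside $\cI_c(\Theta_0)$.

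\emph{Step 1: reduce to a uniform per-sample gradient bound.} Writing $e_i=e(\bx_i,y_i)$, we have $\hcR=\frac1{2n}\sum_i e_i^2$, so for any block $P\in\{\al,\rl,\Bl,\Cl\}$, $\nabla_P\hcR=\frac1n\sum_{i=1}^n e_i\,\nabla_P f(\bx_i;\Theta)$, and by Cauchy--Schwarz
\[
\|\nabla_P\hcR\|^2\le\Big(\frac1n\sum_{i=1}^n e_i^2\Big)\Big(\frac1n\sum_{i=1}^n\|\nabla_P f(\bx_i;\Theta)\|^2\Big)\le 2\hcR\,\max_{1\le i\le n}\|\nabla_P f(\bx_i;\Theta)\|^2 .
\]
Hence it suffices to bound, uniformly over $\bx\in\SS^{d-1}$ and $\Theta\in\cI_c(\Theta_0)$, the norms of $\nabla_{\al}f=\alphal\gl$, $\nabla_{\rl}f=\gammal\yl$, $\nabla_{\Bl}f=\gammal(\zl)^{T}$, $\nabla_{\Cl}f=\betal(\gl)^{T}$, each the product of one ``backward'' factor and one ``forward'' factor.

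\emph{Step 2: insert the stability propositions and chase constants.} Since $L\ge 100c^2\ge 6c^2$, Proposition~\ref{pro: backward-stability} gives $|\alphal|\le 1+5c/L$, $\|\betal\|\le 4c/L$, $\|\gammal\|\le 3c/L$, and Proposition~\ref{pro: forward-stability} gives $|\yl|\le 4c$, $\|\zl(\bx;\Theta)\|\le\|\bx\|+4c/L=1+4c/L$ (using $\zl(\bx;\Theta_0)=\bx$), and $\|\gl(\bx;\Theta)\|\le\|\sigma(\Bl_0\bx)\|+6c^2/L\le 1+6c^2/L$, the last step because $\|\sigma(\Bl_0\bx)\|\le\|\Bl_0\bx\|\le\|\Bl_0\|_F\|\bx\|=1$ ($\sigma$ is $1$-Lipschitz with $\sigma(0)=0$, $\|\Bl_0\|_F=1$, $\|\bx\|=1$). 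Multiplying matching factors bounds $\|\nabla_{\al}f\|$ by $(1+5c/L)(1+6c^2/L)$, $\|\nabla_{\rl}f\|$ by $12c^2/L$, and $\|\nabla_{\Bl}f\|_F,\|\nabla_{\Cl}f\|_F$ by a constant multiple of $c/L$; substituting into the Step~1 bound and using $\varepsilon_c=c/L\le 1/100$ — which is exactly what $L\ge 100c^2$ (with $c\ge1$) provides — to absorb all cross terms yields bounds of the stated form: $1+O(c^2/L)$ for the ``outside'' blocks $\al,\rl$ and $O(c^2/L^2)$ for the ``inside'' blocks $\Bl,\Cl$. (The $\rl$-block is in fact much smaller, of order $c^4/L^2$; it is grouped with the $\al$-block only because $1+50c^2/L$ is a valid, if loose, upper bound for it.)

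The argument is conceptually routine, so the only delicate point — and the main place care is needed — is the $\nabla_{\Cl}$ bound $\tfrac{20c^2}{L^2}\hcR$, which is the sharpest of the four. The crude estimate $\|\betal\|\le 4c/L$ coming out of Proposition~\ref{pro: backward-stability} (proved only under $L\ge 6c^2$) is a hair too large once combined with $\|\gl\|\le 1+6c^2/L$ and the Cauchy--Schwarz factor, so I would re-run the recursion $\|\betal\|\le 1.25\varepsilon_c\|\bm{\beta}^{(l+1)}\|+2.5\varepsilon_c$ under the stronger hypothesis $\varepsilon_c\le 1/100$ to get the sharper geometric-series bound $\|\betal\|\le \frac{2.5\varepsilon_c}{1-1.25\varepsilon_c}<2.6\,\varepsilon_c$ (equivalently, carry the constants from the definition of $\cI_c(\Theta_0)$ through more carefully throughout). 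With that refinement, all four inequalities follow by elementary manipulation.
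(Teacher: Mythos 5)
Your approach is the same as the paper's: write $\nabla_P\hcR=\frac1n\sum_i e_i\nabla_P f(\bx_i)$, apply Cauchy--Schwarz, and substitute the forward/backward stability estimates of Propositions~\ref{pro: forward-stability} and~\ref{pro: backward-stability}. But there is a genuine arithmetic gap between your Step~1 and your Step~2. In Step~1 you correctly keep the factor~$2$ that Cauchy--Schwarz produces (because $\hcR=\frac1{2n}\sum_i e_i^2$, so $\frac1n\sum_i e_i^2=2\hcR$), obtaining $\|\nabla_P\hcR\|^2\le 2\hcR\,\max_i\|\nabla_P f(\bx_i)\|^2$. In Step~2 you then claim this yields ``$1+O(c^2/L)$'' for the $\al$ block. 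It does not: $2\,(1+5c/L)^2(1+6c^2/L)^2\ge 2$, while $1+50c^2/L\le 3/2$ whenever $L\ge 100c^2$. And unlike the $\Cl$ block, this cannot be rescued by sharpening the backward-stability constants, since $\|\nabla_{\al}f(\bx)\|=|\alphal(\bx)|\,\|\gl(\bx)\|$ is genuinely of order~$1$ (already at $\Theta_0$ it equals $\|\sigma(\Bl_0\bx)\|$, which can be arbitrarily close to~$1$). So this route gives $\|\nabla_{\al}\hcR\|^2\le(2+O(c^2/L))\hcR$, not the stated $(1+50c^2/L)\hcR$. You correctly spotted the tightness issue for the $\nabla_{\Cl}$ bound, but the $\nabla_{\al}$ bound is the one that actually fails, and for an unfixable reason.

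In fairness, the same defect is present in the paper's own proof: the displayed chain of inequalities there silently replaces the Cauchy--Schwarz factor $\frac1n\sum_i e_i^2=2\hcR$ by $\hcR$, i.e.\ drops the factor of~$2$. So the first inequality of the lemma is stated with a constant that is too sharp by a factor of about two, and neither your argument nor the paper's establishes it as written. This turns out to be immaterial downstream --- the proof of Theorem~\ref{thm:continues_gd} only needs $\|\nabla_{\al}\hcR\|^2\lesssim\hcR$ with an $O(1)$ constant, and there is slack in the ensuing integral bound --- but a correct write-up should either change the constant to something like $2+O(c^2/L)$ or adopt the convention $\hcR=\frac1n\sum_i e_i^2$ without the one-half.
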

%--------------------------------------------------------
\begin{proof}
Using Lemma~\ref{pro: forward-stability} and Lemma~\ref{pro: backward-stability}, we have
\begin{align*}
\|\nabla_{\al}\hat{\cR}_n\|^2 &= \|\frac{1}{n}\sum_{i=0}^n e(\bx_i,y_i) \alpha^{(l)}(\bx_i) \gl(\bx_i)\|^2\\
&\leq \hat{\cR}_n(\Theta)\frac{1}{n}\sum_{i=1}^n \|\alpha^{(l)}(\bx_i) \gl(\bx_i)\|^2\\
&\leq \frac{\hat{\cR}_n(\Theta)}{n}\sum_{i=0}^n (1+\frac{5c}{L})^2(1+\frac{6c^2}{L})^2 \\
&\leq (1+\frac{50c^2}{L}) \hat{\cR}_n(\Theta)
\end{align*}
Analogously, we have
\begin{align*}
(A).&\qquad \|\nabla_{\Bl}\hat{\cR}_n\|_F^2 = \|\frac{1}{n}\sum_{i=0}^n e(\bx_i,y_i) \gammal(\bx_i) \big(\zl(\bx_i)\big)^T\|_F^2 \\
&\qquad\qquad\qquad\quad\leq \hat{\cR}_n(\Theta) \frac{1}{n}\sum_{i=1}^n (\frac{3c}{L})^2 (1+\frac{4c}{L})^2\lesssim \frac{15c^2}{L^2} \hat{\cR}_n(\Theta);\\
(B). &\qquad \|\nabla_{\Cl}\hat{\cR}_n\|_F^2 = \|\frac{1}{n}\sum_{i=0}^n e_i \betal(\bx_i) \gl_k(\bx_i)\|_F\\
&\qquad\qquad\qquad\quad \leq\hat{\cR}_n(\Theta)\frac{1}{n}\sum_{i=0}^n (\frac{4c}{L})^2(1+\frac{6c^2}{L})^2 \leq\frac{20c^2}{L^2}\hat{\cR}_n(\Theta); \\
(C). &\qquad \|\nabla_{\rl}\hat{\cR}_n\|^2 = \|\frac{1}{n}\sum_{i=0}^n e_i \yl(\bx_i)\gammal(\bx_i)\|^2 \\
&\qquad\qquad\qquad\quad \leq \hat{\cR}_n(\Theta)\frac{1}{n}\sum_{i=0}^n (4c)^2 (\frac{3c}{L})^2 \leq \frac{1}{2} \hat{\cR}_n(\Theta).
\end{align*}

\end{proof}

We now turn to the lower bound. The technique used is similar to case for  two-layer
neural networks~\cite{du2018gradient}. Define a Gram matrix $H = (H_{i,j}) \in\RR^{n\times n}$ with
%each entry given by
\begin{equation}
    H_{i,j}(\Theta) = \frac{1}{nL}\sum_{l=1}^L\langle\nabla_{\al} f(\bx_i) , \nabla_{\al}f(\bx_j)\rangle.
    \label{eqn:gram}
\end{equation}
At the initialization, we have
\[
    H_{i,j}(\Theta_0) = \frac{1}{nL}\sum_{l=1}^L\langle \sigma(\Bl_0\bx_i),\sigma(\Bl_0\bx_j)\rangle.
\]
This matrix can be viewed as an empirical approximation of the kernel matrix $K$ defined in Section~\ref{sec: data},
{ since  each row of $\Bl_0$ is independently drawn from the uniform distribution over the sphere of radius $1/\sqrt{m}$ }.
Using standard concentration inequalities, we can prove that with  high probability, the smallest eigenvalue of the Gram matrix is  bounded from below by the smallest eigenvalue of the kernel matrix.
This is stated in the following lemma, whose proof is deferred to Appendix~\ref{sec: init-Gram-matrix}.
%------------------------------------------------------------------
\begin{lemma}\label{lem:lambda0}
For any $\delta\in (0,1)$, assume that $L\geq \frac{8\ln(n^2/\delta)}{m\lambda_n^2}$. Then
 with probability at least $1-\delta$ over the random initialization:
\begin{equation}
\lambda_{\min}(H(\Theta_0))\geq \frac{3\lambda_n}{4}.
\end{equation}
\end{lemma}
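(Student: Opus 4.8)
The plan is to show that $H(\Theta_0)$ concentrates around its expectation $K$ entrywise, and then convert an entrywise (or Frobenius) bound into a bound on the smallest eigenvalue via Weyl's inequality. First I would fix $i,j \in [n]$ and write
\[
H_{i,j}(\Theta_0) = \frac{1}{nL}\sum_{l=1}^L \langle \sigma(\Bl_0 \bx_i), \sigma(\Bl_0\bx_j)\rangle
= \frac{1}{nL}\sum_{l=1}^L \sum_{k=1}^m \sigma\big((\Bl_0)_{k,:}\bx_i\big)\,\sigma\big((\Bl_0)_{k,:}\bx_j\big).
\]
Across $l \in [L]$ and $k \in [m]$ these are $mL$ independent terms (each row of each $\Bl_0$ is drawn independently), and writing $\bw = \sqrt{m}\,(\Bl_0)_{k,:} \sim \pi_0$ one sees $\EE\big[\sigma((\Bl_0)_{k,:}\bx_i)\sigma((\Bl_0)_{k,:}\bx_j)\big] = \frac{1}{m}\EE_{\bw\sim\pi_0}[\sigma(\bw^T\bx_i/\sqrt m)\sigma(\bw^T\bx_j/\sqrt m)]$; using 1-homogeneity of the sum over $k$ this averages to exactly $K_{i,j} = \frac1n k_0(\bx_i,\bx_j)$, so $\EE[H_{i,j}(\Theta_0)] = K_{i,j}$. (One must be slightly careful about the $\sqrt m$ normalization of $\pi_0$ here; since $\sigma$ is 1-Lipschitz with $\sigma(0)=0$, each summand is bounded by $\|\bx_i\|\|\bx_j\|/m = 1/m$ in absolute value.)

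Next I would apply Hoeffding's inequality to the average of these $mL$ bounded independent terms. Each term lies in an interval of length $O(1/m)$, the average has $mL$ terms scaled by $1/(nL)\cdot(\text{per-}l\text{ sum over }k)$; carrying the bookkeeping through, $H_{i,j}(\Theta_0) - K_{i,j}$ is an average of $mL$ independent variables each of magnitude $\lesssim \frac{1}{nmL}$, so Hoeffding gives
\[
\PP\Big(\,|H_{i,j}(\Theta_0) - K_{i,j}| \geq \tau\,\Big) \leq 2\exp\!\big(-c\, n^2 m L\, \tau^2\big)
\]
for an absolute constant $c$. Choosing $\tau = \lambda_n/(4n)$ (so that $n\tau = \lambda_n/4$, matching the scale of $K$ whose entries are $O(1/n)$) and taking a union bound over the $n^2$ pairs $(i,j)$, the hypothesis $L \geq 8\ln(n^2/\delta)/(m\lambda_n^2)$ is exactly what is needed to make the failure probability at most $\delta$. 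Hence with probability $\geq 1-\delta$ we have $|H_{i,j}(\Theta_0) - K_{i,j}| \leq \lambda_n/(4n)$ for all $i,j$ simultaneously, which yields $\|H(\Theta_0) - K\|_2 \leq \|H(\Theta_0)-K\|_F \leq n \cdot \lambda_n/(4n) = \lambda_n/4$.

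Finally, by Weyl's inequality $\lambda_{\min}(H(\Theta_0)) \geq \lambda_{\min}(K) - \|H(\Theta_0)-K\|_2 \geq \lambda_n - \lambda_n/4 = \tfrac34\lambda_n$, which is the claim. The main obstacle I anticipate is not conceptual but the constant-tracking in the concentration step: getting the variance/range of the $mL$ summands right so that the exponent comes out as $\sim mL\lambda_n^2$ (to match the stated hypothesis on $L$), and in particular handling the $1/\sqrt m$ scaling in the definition of $\pi_0$ versus the unit-sphere $\bw$ used in $k_0$ consistently. A secondary point to be careful about is whether one needs Hoeffding per-entry plus a union bound (giving the $\ln(n^2/\delta)$ factor, as in the hypothesis) versus a matrix concentration inequality; the per-entry route matches the stated bound cleanly, so that is the one I would take.
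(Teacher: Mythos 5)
Your plan matches the paper's proof exactly: per-entry Hoeffding concentration for $H_{i,j}(\Theta_0)$ around $K_{i,j}$, a union bound over the $n^2$ entries, a Frobenius-norm bound $\|H(\Theta_0)-K\|_F\leq\lambda_n/4$, and Weyl's inequality to conclude $\lambda_{\min}(H(\Theta_0))\geq\frac{3}{4}\lambda_n$. There is a small algebraic slip in your displayed expectation --- after the change of variables $\bw=\sqrt{m}\,(\Bl_0)_{k,:}$ the left side already equals $\EE_{\bw\sim\pi_0}[\sigma(\bw^T\bx_i/\sqrt m)\sigma(\bw^T\bx_j/\sqrt m)]$ with no extra $1/m$ factor (the $1/m$ only emerges once one invokes $1$-homogeneity of $\sigma$) --- but this is cosmetic, and the rest of your bookkeeping (the $O(1/(nmL))$ bound on each summand, the $\tau=\lambda_n/(4n)$ choice, and the resulting $mL\lambda_n^2$ exponent matching the hypothesis on $L$) is correct and is the same route the paper takes.
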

%------------------------------------------------------------------

Moreover, we can show that for any  $\Theta\in\cI_c(\Theta_0)$, the Gram matrix $H(\Theta)$ is still strictly positive definite as long as $L$ is large enough.
%-----------------------------------------------------------------
\begin{lemma}\label{lemma: gram-matrix-minimum-eigval}
For any $\delta \in (0,1)$, assume that $L\geq \max\{\frac{8\ln(n^2/\delta)}{m\lambda_n^2},\frac{200c^2}{\lambda_n}\}$.   With probability $1-\delta$ over the random initialization, we have for any $\Theta\in\cI_c(\Theta_0)$,
\begin{equation}
    \lambda_{\min}(H(\Theta))\geq \frac{\lambda_n}{2}.
\end{equation}
\end{lemma}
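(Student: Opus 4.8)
The plan is to reduce to Lemma~\ref{lem:lambda0} via a \emph{deterministic} perturbation bound on the Gram matrix together with Weyl's inequality. Since $L\geq \frac{8\ln(n^2/\delta)}{m\lambda_n^2}$, Lemma~\ref{lem:lambda0} supplies an event of probability at least $1-\delta$ on which $\lambda_{\min}(H(\Theta_0))\geq \frac{3\lambda_n}{4}$; I would fix this event for the rest of the argument. It then suffices to prove that
\[
\|H(\Theta)-H(\Theta_0)\|_2 \;\leq\; \frac{\lambda_n}{4}\qquad\text{for all }\Theta\in\cI_c(\Theta_0),
\]
since Weyl's inequality then gives $\lambda_{\min}(H(\Theta))\geq \lambda_{\min}(H(\Theta_0))-\|H(\Theta)-H(\Theta_0)\|_2\geq \frac{3\lambda_n}{4}-\frac{\lambda_n}{4}=\frac{\lambda_n}{2}$, uniformly over $\cI_c(\Theta_0)$. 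Note there is no union-bound issue: the randomness only enters through $\Theta_0$ in Lemma~\ref{lem:lambda0}, while the perturbation bound will be purely deterministic, inherited from Propositions~\ref{pro: forward-stability} and~\ref{pro: backward-stability}, which already hold for all $\Theta\in\cI_c(\Theta_0)$.

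For the perturbation bound I would use $\nabla_{\al}f(\bx)=\alphal(\bx)\gl(\bx)$ from Section~\ref{sec: gradient-bound}, so that by \eqref{eqn:gram}
\[
H_{i,j}(\Theta)=\frac{1}{nL}\sum_{l=1}^L \alphal(\bx_i)\,\alphal(\bx_j)\,\langle \gl(\bx_i),\gl(\bx_j)\rangle ,
\]
with the same expression at $\Theta_0$ but with $\alphal\equiv 1$ and $\gl(\bx)=\sigma(\Bl_0\bx)$. Writing $\alphal(\bx)=1+\delta\alphal(\bx)$ and $\gl(\bx)=\sigma(\Bl_0\bx)+\delta\gl(\bx)$, Proposition~\ref{pro: backward-stability} gives $|\delta\alphal(\bx)|\leq 5c/L$ and Proposition~\ref{pro: forward-stability} gives $\|\delta\gl(\bx)\|\leq 6c^2/L$; moreover $\|\sigma(\Bl_0\bx)\|\leq \|\Bl_0\bx\|\leq \|\Bl_0\|_F\,\|\bx\|=1$ since $\sigma$ is $1$-Lipschitz with $\sigma(0)=0$, $\|\Bl_0\|_F=1$ and $\|\bx\|=1$, and hence also $|\alphal(\bx)|\leq 2$ and $\|\gl(\bx)\|\leq 2$ (using $L\geq 200c^2\geq 6c^2$). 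Expanding the per-layer difference $\alphal(\bx_i)\alphal(\bx_j)\langle\gl(\bx_i),\gl(\bx_j)\rangle-\langle\sigma(\Bl_0\bx_i),\sigma(\Bl_0\bx_j)\rangle$ into its monomials in $\delta\alphal$ and $\delta\gl$, every term carries at least one factor of order $c/L$ (from a $\delta\alphal$) or $c^2/L$ (from a $\delta\gl$ paired with a vector of norm $\leq 1$), while the remaining factors are $O(1)$; collecting them shows each per-layer difference is $\lesssim c^2/L$ in absolute value, uniformly in $i,j$ and in $\Theta\in\cI_c(\Theta_0)$. Averaging over $l$ yields $|H_{i,j}(\Theta)-H_{i,j}(\Theta_0)|\lesssim c^2/(nL)$, whence $\|H(\Theta)-H(\Theta_0)\|_2\leq \|H(\Theta)-H(\Theta_0)\|_F\leq n\cdot\cO(c^2/(nL))=\cO(c^2/L)$.

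To finish, one tracks the absolute constant: the dominant contributions are the $(\delta\alphal_i+\delta\alphal_j)\langle\sigma(\Bl_0\bx_i),\sigma(\Bl_0\bx_j)\rangle$ term and the $(1+\delta\alphal_i)(1+\delta\alphal_j)\big(\langle\sigma(\Bl_0\bx_i),\delta\gl_j\rangle+\langle\delta\gl_i,\sigma(\Bl_0\bx_j)\rangle\big)$ term, which give roughly $11c/L$ and $14c^2/L$ respectively, so the per-layer difference is at most $25c^2/L$ and $\|H(\Theta)-H(\Theta_0)\|_2\leq 25c^2/L$. This is $\leq \lambda_n/4$ as soon as $L\geq 100c^2/\lambda_n$, which is implied by the hypothesis $L\geq 200c^2/\lambda_n$ (with room to spare). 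The only step demanding genuine care is this bookkeeping of the expansion of the per-layer difference; everything else is a direct combination of Propositions~\ref{pro: forward-stability} and~\ref{pro: backward-stability}, Lemma~\ref{lem:lambda0}, and Weyl's inequality.
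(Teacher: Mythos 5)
Your proposal is correct and takes essentially the same route as the paper: use Propositions~\ref{pro: forward-stability} and~\ref{pro: backward-stability} to bound the per-entry perturbation $|H_{i,j}(\Theta)-H_{i,j}(\Theta_0)|$ by $O(c^2/(nL))$, then combine Lemma~\ref{lem:lambda0} with Weyl's inequality. Your bookkeeping is in fact a bit tighter than the paper's (giving roughly $25c^2/L$ where the paper quotes $50c^2/L$), and your observation that no union bound is needed because the perturbation estimate is deterministic given $\Theta_0$ is exactly how the paper's argument works.
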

%-----------------------------------------------------------------
\begin{proof}
%Comparing with the Gram matrix at $\Theta_0$, the perturbation of each entry is given by
\begin{align}\label{eqn: gram-perturb}
\nonumber
H_{i,j}(\Theta) - H_{i,j}(\Theta_0) &= \\
 \frac{1}{nL}\sum_{l=1}^L \langle\alphal(\bx_i;\Theta)&\gl(\bx_i;\Theta),\alphal(\bx_j;\Theta)\gl(\bx_j;\Theta)  \rangle   - \langle\gl(\bx_i;\Theta_0),\gl(\bx_j;\Theta_0)
\end{align}
Lemmas~\ref{pro: forward-stability} and \ref{pro: backward-stability} tell us that for any $\bx_i\in \SS^{d-1}$
\[
    \|\gl(\bx_i;\Theta)-\gl(\bx_i;\Theta_0)\|\leq \frac{6c^2}{L},\qquad |\al(\bx_i;\Theta)-1|\leq\frac{5c}{L}.
\]
Using  these in \eqref{eqn: gram-perturb} gives us
\[
    |H_{i,j}(\Theta)-H_{i,j}(\Theta_0)|\leq \frac{50c^2}{nL}.
\]
{ Applying  Weyl's inequality that $\sigma_{\min}(A_1+A_2)\geq \sigma_{\min}(A) - \sigma_{\max}(A_2)$, }we  obtain
\begin{align*}
    \lambda_{\min}(H(\Theta))& \geq \lambda_{\min}(H(\Theta_0))-\|H(\Theta)-H(\Theta_0)\|_2\\
    &\geq \frac{3}{4}\lambda_n - \frac{50c^2}{L}.
\end{align*}
Thus as long as $L\geq 200c^2\lambda_n^{-1}$, we must have $\lambda_{\min}(H(\Theta))\geq \lambda_n/2$.

\end{proof}
With the above lemma, we can now provide a lower bound for the square norm of the gradient.
%------------------------------------------------------------
\begin{lemma}[Lower bound]\label{lemma: lower-bound}
For any fixed $\delta\in (0,1)$, assume that $L\geq \max\{\frac{8\ln(n^2/\delta)}{m\lambda_n^2},\frac{200c^2}{\lambda_n}\}$.  With probability at least $1-\delta$ over the random initialization, we have for any $\Theta\in \cI_c(\Theta_0)$,
 the empirical risk satisfies
\begin{equation}
    \|\nabla\hat{\cR}_n(\Theta)\|^2 \geq \frac{\lambda_nL}{2} \hat{\cR}_n(\Theta).
\end{equation}
\end{lemma}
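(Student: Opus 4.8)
The plan is to bound the full gradient from below using only the blocks $\{\al\}_{l=1}^L$ of parameters, and then to recognize the resulting expression as a quadratic form in the residual vector $\hat{\be}$ governed by the Gram matrix $H(\Theta)$, whose smallest eigenvalue we already control via Lemma~\ref{lemma: gram-matrix-minimum-eigval}.

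Concretely, first I would drop all contributions except those from the $\al$'s, which is legitimate because $\|\nabla\hat{\cR}_n(\Theta)\|^2$ is a sum of the squared norms of the gradient blocks and the blocks corresponding to $B^{(l)}, C^{(l)}, \br^{(l)}$ contribute nonnegatively:
\[
\|\nabla\hat{\cR}_n(\Theta)\|^2 \;\ge\; \sum_{l=1}^L \|\nabla_{\al}\hat{\cR}_n(\Theta)\|^2 .
\]
Using $\nabla_{\al}\hat{\cR}_n = \tfrac1n\sum_{i=1}^n e(\bx_i,y_i)\,\alphal(\bx_i)\gl(\bx_i)$, expanding the square, and interchanging the sums over $l$ and over $i,j$, the key bookkeeping step is the identity $\sum_{l=1}^L\langle \alphal(\bx_i)\gl(\bx_i),\alphal(\bx_j)\gl(\bx_j)\rangle = nL\,H_{i,j}(\Theta)$, which is just the definition~\eqref{eqn:gram} of $H(\Theta)$. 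Hence
\[
\sum_{l=1}^L \|\nabla_{\al}\hat{\cR}_n(\Theta)\|^2 = \frac{1}{n^2}\sum_{i,j=1}^n e(\bx_i,y_i)\,e(\bx_j,y_j)\,\big(nL\,H_{i,j}(\Theta)\big) = \frac{L}{n}\,\hat{\be}^\top H(\Theta)\,\hat{\be}.
\]

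Next I would invoke Lemma~\ref{lemma: gram-matrix-minimum-eigval}: under the stated hypothesis $L\ge\max\{8\ln(n^2/\delta)/(m\lambda_n^2),\,200c^2/\lambda_n\}$, with probability at least $1-\delta$ over the initialization we have $\lambda_{\min}(H(\Theta))\ge \lambda_n/2$ simultaneously for every $\Theta\in\cI_c(\Theta_0)$, so $\hat{\be}^\top H(\Theta)\hat{\be}\ge \tfrac{\lambda_n}{2}\|\hat{\be}\|^2$. Finally, substituting $\|\hat{\be}\|^2 = 2n\,\hat{\cR}_n(\Theta)$ yields $\|\nabla\hat{\cR}_n(\Theta)\|^2 \ge \tfrac{L}{n}\cdot\tfrac{\lambda_n}{2}\cdot 2n\,\hat{\cR}_n(\Theta) = L\lambda_n\,\hat{\cR}_n(\Theta)\ge \tfrac{L\lambda_n}{2}\hat{\cR}_n(\Theta)$, which is the claim (with room to spare).

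I do not anticipate a genuine obstacle here: all the analytic work — the concentration of $H(\Theta_0)$ around $K$ and the perturbation bound $\|H(\Theta)-H(\Theta_0)\|_2 \lesssim c^2/L$ away from initialization — has already been packaged into Lemmas~\ref{lem:lambda0} and~\ref{lemma: gram-matrix-minimum-eigval}. The only points requiring care are (i) verifying the combinatorial identity relating $\sum_{l=1}^L\|\nabla_{\al}\hat{\cR}_n\|^2$ to $\hat{\be}^\top H(\Theta)\hat{\be}$ exactly (the factor $nL$ must match the normalization in~\eqref{eqn:gram}), and (ii) noting that the probability-$1-\delta$ event from Lemma~\ref{lemma: gram-matrix-minimum-eigval} is uniform over $\cI_c(\Theta_0)$, so the bound holds for every $\Theta$ in the neighborhood, as needed downstream in Lemma~\ref{pro: GD-general}.
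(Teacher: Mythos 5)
Your proof is correct and follows essentially the same route as the paper: restrict to the $\al$ blocks, recognize $\sum_l\|\nabla_{\al}\hat{\cR}_n\|^2$ as $\tfrac{L}{n}\hat{\be}^T H(\Theta)\hat{\be}$, and invoke Lemma~\ref{lemma: gram-matrix-minimum-eigval} to lower-bound $\lambda_{\min}(H(\Theta))$ uniformly over $\cI_c(\Theta_0)$. Your bookkeeping is in fact slightly sharper than the paper's (you correctly carry the factor $\|\hat{\be}\|^2 = 2n\hat{\cR}_n$, ending up with $L\lambda_n\hat{\cR}_n$ before rounding down to the stated $\tfrac{L\lambda_n}{2}\hat{\cR}_n$), but this is a cosmetic improvement rather than a different argument.
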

%------------------------------------------------------------
\begin{proof}
%By consider the update of $\al$, we have
\begin{align*}
\|\nabla\hat{\cR}_n(\Theta)\|^2 &\geq \sum_{l=1}^L \|\nabla_{\al}\hat{\cR}_n(\Theta)\|^2 = \sum_{l=1}^L \|\frac{1}{n}\sum_{i=1}^n e(\bx_i,y_i) \nabla_{\al} f(\bx_i)\|^2 \\
&=\frac{L}{n}\be^T H(\Theta)\be \geq L \lambda_{\min}(H(\Theta))\hat{\cR}_n(\Theta).
\end{align*}
Applying Lemma~\ref{lemma: gram-matrix-minimum-eigval} completes the proof.
\end{proof}

\section{Optimization}
%In the previous section, we show that the landscape of the local neighborhood of the random initialization is very nice. By analogy to the analysis for general case in Lemma~\ref{pro: GD-general}, we provide the proof of Theorem~\ref{thm:continues_gd} below.

\paragraph*{Proof of Theorem~\ref{thm:continues_gd}}
Let $c_n = 8/\lambda_n$ and define a stopping time
{
\[
    t_0 \Def \inf \,\{\,t : \Theta_t\notin \cI_{c_n}(\Theta_0)\}.
\]
}
By Lemma~\ref{lemma: gram-matrix-minimum-eigval}, since
\[
    L\geq \max\{\frac{8\ln(n^2/\delta)}{m\lambda_n^2},\frac{200c_n^2}{\lambda_n}\},
\]
we have that with probability at least $1-\delta$ over the choice of $\Theta_0$, the following inequality holds for any $0\leq t\leq t_0$:
\[
\|\nabla\hat{\cR}_n(\Theta_t)\|^2 \geq \frac{\lambda_nL}{2} \hat{\cR}_n(\Theta_t).
\]
Hence we have
\[
    \hat{\cR}_n(\Theta_t)\leq e^{-\frac{\lambda_n Lt}{2}}\hat{\cR}_n(\Theta_0), \quad\forall\; t\leq t_0.
\]
Now we prove that $t_0=\infty$. If $t_0< \infty$, we
must have $\Theta_{t_0} \in \partial(\cI_{c_n}(\Theta_0))$,  the boundary of $\cI_{c_n}(\Theta_0)$. It is easy to see that for any $l\in [L]$,
\begin{align}
    \|\al_{t_0} -\al_0\|
    & \leq \int_0^{t_0}\|\nabla_{\al}\hat{\cR}_n\|dt
    \stackrel{(i)}{\leq} \int_0^{\infty} \sqrt{(1+50c_n^2/L)\hat{\cR}_n(\Theta_t)}dt
    \nonumber \\
    &\leq \sqrt{(1+50c_n^2/L)} \int_0^{\infty}
    e^{-\frac{L\lambda_n t}{4}}\sqrt{\hat{\cR}_n(\Theta_0)}dta \nonumber \\
    &\leq \frac{4\sqrt{2}}{L\lambda_n} <\frac{c_n}{L}
\end{align}
where $(i)$ is due to Lemma~\ref{lemma: upper-bound}. Similarly we have  $\|\rl_{t_0} -\rl_0\|<c_n/L$, and
\begin{equation}
\begin{aligned}
\max\{\|\Cl_{t_0}-\Cl_0\|,\|\Bl_{t_0}-\Bl_0\|\}\leq \sqrt{20c^2_n/L^2}\frac{4}{L\lambda_n}<\frac{c_n}{L}
\end{aligned}
\end{equation}
This says that $\Theta_{t_0}\in \cI_{c_n}^{\circ}(\Theta_0)$, which contradicts the definition of $t_0$.

The above proof also suggests that with high probability, $\Theta_t$ is always close to the initialization.
%--------------------------------------------------------
\begin{proposition}\label{pro: close}
For any $\delta\in (0,1)$, assume that $L\geq \max\{8\lambda_n^{-2}\ln(n^2/\delta),3000\lambda_n^{-3}\}$. With probability at least $1-\delta$ over the initialization $\Theta_0$, we have that for any $t\geq 0$,
\[
\Theta_t \in \cI_{c_n}(\Theta_0),
\]
where $c_n=8/\lambda_n$.
\end{proposition}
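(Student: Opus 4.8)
The plan is to observe that Proposition~\ref{pro: close} is nothing more than the ``$t_0=\infty$'' half of the proof of Theorem~\ref{thm:continues_gd}, isolated as a standalone statement, so I would essentially reproduce that argument. Set $c_n = 8/\lambda_n$ and let $t_0 \Def \inf\{t : \Theta_t \notin \cI_{c_n}(\Theta_0)\}$. First I record the trivial fact $\hat{\cR}_n(\Theta_0)\le 1/2$: the initialization \eqref{eqn:init} has $\Cl_0=\al_0=0$ and $\rl_0=0$, so $y^{(l)}(\bx;\Theta_0)=0$ for every $l$, hence $f(\cdot;\Theta_0)\equiv 0$, and then $\hat{\cR}_n(\Theta_0)=\frac{1}{2n}\sum_i y_i^2 \le \frac12$ since $|y_i|\le 1$.

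Next, on $[0,t_0]$ the trajectory stays inside $\cI_{c_n}(\Theta_0)$. Since the hypothesis $L\gtrsim\max\{\lambda_n^{-2}\ln(n^2/\delta),\lambda_n^{-3}\}$ implies, for a suitable choice of absolute constants, the condition $L\ge\max\{8\ln(n^2/\delta)/(m\lambda_n^2),\,200c_n^2/\lambda_n\}$ required with $c=c_n$, Lemma~\ref{lemma: gram-matrix-minimum-eigval}, and therefore Lemma~\ref{lemma: lower-bound}, hold on a single event of probability at least $1-\delta$; this gives $\|\nabla\hat{\cR}_n(\Theta_t)\|^2\ge\frac{L\lambda_n}{2}\hat{\cR}_n(\Theta_t)$ for $t\le t_0$. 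By the same differential-inequality step as in Lemma~\ref{pro: GD-general} (namely $\frac{d}{dt}\hat{\cR}_n=-\|\nabla\hat{\cR}_n\|^2\le-\frac{L\lambda_n}{2}\hat{\cR}_n$), this yields the decay estimate $\hat{\cR}_n(\Theta_t)\le e^{-L\lambda_n t/2}\hat{\cR}_n(\Theta_0)$ for all $t\le t_0$.

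Then I would rule out $t_0<\infty$ by contradiction. If $t_0<\infty$, then $\Theta_{t_0}\in\partial\cI_{c_n}(\Theta_0)$, so at least one of $\|\al_{t_0}-\al_0\|$, $\|\rl_{t_0}-\rl_0\|$, $\|\Bl_{t_0}-\Bl_0\|_F$, $\|\Cl_{t_0}-\Cl_0\|_F$ equals $c_n/L$. But each of these is bounded by $\int_0^{t_0}$ of the corresponding partial-gradient norm, which by Lemma~\ref{lemma: upper-bound} is $\le\sqrt{1+50c_n^2/L}\,\sqrt{\hat{\cR}_n(\Theta_t)}$ for the $\al$ and $\rl$ components and $\le\sqrt{20c_n^2/L^2}\,\sqrt{\hat{\cR}_n(\Theta_t)}$ for the $\Bl$ and $\Cl$ components. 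Feeding in the decay estimate, $\int_0^\infty e^{-L\lambda_n t/4}\,dt=4/(L\lambda_n)$, and $\hat{\cR}_n(\Theta_0)\le\frac12$, every displacement is at most an absolute constant times $1/(L\lambda_n)$, which is strictly below $c_n/L=8/(L\lambda_n)$ once $L$ exceeds a fixed multiple of $\lambda_n^{-2}$ (so that, e.g., $50c_n^2/L\le 1$). This contradicts $\Theta_{t_0}\in\partial\cI_{c_n}(\Theta_0)$; hence $t_0=\infty$, i.e.\ $\Theta_t\in\cI_{c_n}(\Theta_0)$ for all $t\ge 0$.

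There is no real obstacle here: all the substantive work is already contained in Lemmas~\ref{lem:lambda0}--\ref{lemma: lower-bound}, which I may assume. The only point requiring care is the bookkeeping of absolute constants --- checking that $\sqrt{1+50c_n^2/L}$, $\sqrt{\hat{\cR}_n(\Theta_0)}\le 1/\sqrt2$, and the integral $4/(L\lambda_n)$ combine to something strictly less than $8/(L\lambda_n)$ --- and this is precisely what pins down the stated $\poly(\lambda_n^{-1})$ lower bound on $L$, whose dominant terms are the $\lambda_n^{-3}$ contribution from the $200c_n^2/\lambda_n$ condition in Lemma~\ref{lemma: gram-matrix-minimum-eigval} and the $\lambda_n^{-2}\ln(n^2/\delta)$ contribution from the concentration bound of Lemma~\ref{lem:lambda0}.
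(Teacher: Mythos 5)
Your proposal is correct and is essentially identical to the paper's argument: the paper states Proposition~\ref{pro: close} immediately after the proof of Theorem~\ref{thm:continues_gd} with the remark that "the above proof also suggests" it, and your reconstruction — stopping time $t_0$, gradient lower bound from Lemma~\ref{lemma: gram-matrix-minimum-eigval}/Lemma~\ref{lemma: lower-bound} giving exponential decay on $[0,t_0]$, then ruling out $t_0<\infty$ by integrating the upper bounds of Lemma~\ref{lemma: upper-bound} against $e^{-L\lambda_n t/4}$ and using $\hat{\cR}_n(\Theta_0)\le 1/2$ — is exactly that proof. The only small caveat, which applies to the paper's own stated constants rather than to your reasoning, is that plugging $c=c_n=8/\lambda_n$ into the hypothesis $L\ge 200c^2/\lambda_n$ of Lemma~\ref{lemma: gram-matrix-minimum-eigval} actually demands $L\ge 12800\lambda_n^{-3}$, which is larger than the $3000\lambda_n^{-3}$ written in the proposition; your use of $\gtrsim$ sidesteps this bookkeeping discrepancy.
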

%--------------------------------------------------------

\section{Generalization}
%Different from the optimization analysis, we here make the following assumption about the target function.
\subsection{The reference model}
To analyze the generalization error, we will consider  the following random feature model~\cite{rahimi2008random} as a reference model,
\[
    \tilde{f}(\bx; \ba,B_0) \Def \ba^T\sigma(B_0\bx)=\sum_{l=1}^L (\al)^T \sigma(\Bl_0\bx),
\]
where $\ba^{(l)}\in\RR^{m}, \Bl_0 \in\RR^{m\times d}$ and $\ba\in\RR^{mL}, B_0\in\RR^{mL\times d}$ denote the stacked parameters. The $\sigma(B_0\bx)$ and $\ba$ are the random features and  coefficients, respectively. The initialization of $\al_0$ and $\Bl_0$ are the same as the deep neural networks.  $\{\Bl_0\}_{l=1}^L$ are kept fixed during the training after the initialization, while $\{\al\}_{l=1}^L$ are updated according to gradient descent. For this model, we define the empirical and population risks by
\[
    \hat{\cE}_n(\ba; B_0) \Def \frac{1}{2n}\sum_{i=1}^n (\tilde{f}(\bx_i;\ba,B_0)-y_i)^2,
    \qquad
    \cE(\ba; B_0) \Def \half\EE_{\bx,y}[(\tilde{f}(\bx;\ba,B_0)-y)^2].
\]

%We first have the following  theorem for the $L_2$ approximation error of the random feature model. The same results appear in
Concerning the reference model, we have
%--------------------------------------------------------
\begin{theorem}\label{thm: approx}
Assume that the target function $f^*$ satisfies Assumption~\ref{assump: target-function}.  Then for any fixed $\delta \in (0,1)$,  with probability at least $1-\delta$ over the random choice of $B_0$,
there exists $\ba^*\in\RR^{mL}$ such that
{
\[
    \cE(\ba^*;B_0)\leq c^2(\delta)\frac{\gamma^2(f^*)}{mL},
\]
 where $c(\delta)=1+\sqrt{2\ln(1/\delta)}$.}
Furthermore, $\|\ba^*\|\leq \gamma(f^*)/\sqrt{L}$.
\end{theorem}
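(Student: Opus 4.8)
The plan is to use the standard Monte Carlo argument for approximating RKHS functions by random features, adapted to the compositional structure here. By Assumption~\ref{assump: target-function}, we can write $f^*(\bx) = \EE_{\omega \sim \pi_0}[a^*(\omega)\sigma(\omega^T\bx)]$ with $|a^*(\omega)| \le \gamma(f^*)$ pointwise. Recalling that $\sqrt{m}\,\text{row}_k(\Bl_0)$ is uniformly distributed on $\SS^{d-1}$ for each $l \in [L]$ and $k \in [m]$, the $mL$ rescaled rows $\{\sqrt{m}\,B^{(l)}_{0,k,:}\}$ are i.i.d. draws from $\pi_0$. The natural candidate is to set the coefficient attached to the $k$-th row of $\Bl_0$ to be $\frac{1}{mL}a^*(\sqrt{m}\,B^{(l)}_{0,k,:})$ divided by an appropriate factor; concretely, define $a^{*(l)}_k = \frac{1}{mL}a^*(\sqrt{m}\,B^{(l)}_{0,k,:})$ so that $\tf(\bx;\ba^*,B_0) = \sum_{l=1}^L \sum_{k=1}^m a^{*(l)}_k \sigma(\sqrt{m}\,B^{(l)}_{0,k,:}\cdot\bx/\sqrt{m})\cdot\sqrt{m}$... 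I need to be careful with the $\sqrt{m}$ bookkeeping: since $\sigma$ is $1$-homogeneous is \emph{not} assumed (only Lipschitz with $\sigma(0)=0$), I should instead keep $B_0$'s rows at norm $1/\sqrt{m}$ and absorb scaling into the activation via $k_0(\bx,\bx') = \EE_{\bw\sim\pi_0}[\sigma(\bw^T\bx)\sigma(\bw^T\bx')]$ matching $\sigma(\sqrt{m}B^{(l)}_{0,k,:}\bx)$; hence set $a^{*(l)}_k = \frac{1}{mL}a^*(\sqrt{m}B^{(l)}_{0,k,:})$ and note $\tf(\bx;\ba^*,B_0) = \frac{1}{mL}\sum_{l,k} a^*(\omega_{l,k})\sigma(\omega_{l,k}^T\bx)$ where $\omega_{l,k} = \sqrt{m}B^{(l)}_{0,k,:} \sim \pi_0$ i.i.d.

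With this choice, $\tf(\cdot;\ba^*,B_0)$ is an empirical average of $mL$ i.i.d. random functions $g_{l,k}(\bx) = a^*(\omega_{l,k})\sigma(\omega_{l,k}^T\bx)$, each with $\EE[g_{l,k}] = f^*$. The first key step is to bound $\EE_{B_0}[\cE(\ba^*;B_0)] = \frac{1}{2}\EE_{B_0}\EE_{\bx}[(\tf(\bx;\ba^*,B_0)-f^*(\bx))^2]$. Swapping expectations and using independence across the $mL$ features, the variance of the average is $\frac{1}{mL}$ times the per-feature variance, which is at most $\EE_{\omega}\EE_{\bx}[a^*(\omega)^2\sigma(\omega^T\bx)^2] \le \gamma^2(f^*)\,\EE_\omega\EE_\bx[(\omega^T\bx)^2] \le \gamma^2(f^*)$ using $1$-Lipschitzness, $\sigma(0)=0$, and $\|\bx\|=\|\omega\|=1$. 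So $\EE_{B_0}[\cE(\ba^*;B_0)] \le \frac{\gamma^2(f^*)}{2mL}$. The second step is to upgrade this expectation bound to a high-probability bound: since $\cE(\ba^*;B_0) \ge 0$, I would apply Markov, but to get the sharper constant $c^2(\delta) = (1+\sqrt{2\ln(1/\delta)})^2$ appearing in the statement, the cleaner route is a concentration argument — view $\sqrt{2\cE(\ba^*;B_0)} = \|\tf-f^*\|_{L^2(\rho)}$ as a function of the i.i.d. random features and apply a bounded-differences / McDiarmid inequality, or more simply bound the $L^2(\rho)$-norm deviation directly. Actually the stated form strongly suggests using the tail bound for the norm of a sum of i.i.d. mean-zero Hilbert-space-valued random variables (a standard lemma, cf. the companion paper \cite{e2019twolayer}): if $\EE\|\frac{1}{N}\sum \xi_i\|^2 \le \frac{v}{N}$ and $\|\xi_i\| \le $ bound, then $\|\frac{1}{N}\sum\xi_i\| \le \frac{\sqrt{v}}{\sqrt{N}}(1+\sqrt{2\ln(1/\delta)})$ with probability $1-\delta$. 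Applying this with $N = mL$, $v = \gamma^2(f^*)$, and the $\xi_{l,k} = g_{l,k} - f^*$ viewed as elements of $L^2(\rho)$ yields $\|\tf-f^*\|_{L^2(\rho)} \le \frac{c(\delta)\gamma(f^*)}{\sqrt{mL}}$, hence $\cE(\ba^*;B_0) \le \frac{c^2(\delta)\gamma^2(f^*)}{2mL} \le c^2(\delta)\frac{\gamma^2(f^*)}{mL}$.

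Finally, for the norm bound $\|\ba^*\| \le \gamma(f^*)/\sqrt{L}$: we have $\|\ba^*\|^2 = \sum_{l=1}^L\sum_{k=1}^m |a^{*(l)}_k|^2 = \frac{1}{m^2L^2}\sum_{l,k}|a^*(\omega_{l,k})|^2 \le \frac{1}{m^2L^2}\cdot mL\cdot\gamma^2(f^*) = \frac{\gamma^2(f^*)}{mL} \le \frac{\gamma^2(f^*)}{L}$, using $m \ge 1$; taking square roots gives the claim. The main obstacle I anticipate is getting the constants exactly right — in particular tracking the $\sqrt{m}$ rescaling between the unit-sphere parametrization $\pi_0$ and the radius-$1/\sqrt{m}$ rows of $\Bl_0$, and confirming that the per-feature second moment is genuinely bounded by $\gamma^2(f^*)$ rather than something larger (this needs $\sup_\omega|a^*(\omega)| \le \gamma(f^*)$, which is exactly why the extra boundedness hypothesis on $a^*$ was included in Assumption~\ref{assump: target-function}, beyond the $L^2$ bound defining $\|f^*\|_{\cH_{k_0}}$). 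A secondary point is locating or re-deriving the precise Hilbert-space concentration lemma that produces the stated $c(\delta) = 1+\sqrt{2\ln(1/\delta)}$; if that exact form is unavailable I would fall back on Chebyshev/Markov, which costs only a worse dependence on $\delta$ and does not affect the subsequent generalization theorem qualitatively.
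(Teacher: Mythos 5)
Your plan reproduces the paper's argument essentially step for step: set $\ba^*$ proportional to $a^*$ evaluated at the (rescaled) rows of $B_0$, so that $\tf(\cdot;\ba^*,B_0)$ is a Monte Carlo average of i.i.d.\ copies of $a^*(\omega)\sigma(\omega^T\cdot)$; bound the expected $L^2(\rho)$ error by the per-feature second moment, which is $\leq \gamma^2(f^*)$; then promote this to a high-probability bound by applying a concentration inequality to $Z(B_0)=\|\tf(\cdot;\ba^*,B_0)-f^*\|_{L^2(\rho)}$. The paper uses exactly the McDiarmid bounded-differences route you describe, with per-coordinate oscillation $2\gamma(f^*)/(mL)$ and $mL$ coordinates, which gives $Z(B_0)\leq \EE[Z]+\gamma(f^*)\sqrt{2\ln(1/\delta)/(mL)}$ and hence $c(\delta)=1+\sqrt{2\ln(1/\delta)}$; the Hilbert-space-valued tail lemma you mention is the same calculation packaged differently.

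One bookkeeping detail in your plan does need to be corrected, and it is precisely the $\sqrt{m}$ rescaling you flagged as a potential obstacle. The paper chooses $a^*_j = a^*(\sqrt{m}\,\bb^0_j)/(\sqrt{m}L)$, i.e.\ prefactor $1/(\sqrt{m}L)$, whereas you propose $a^{*(l)}_k = \frac{1}{mL}a^*(\sqrt{m}B^{(l)}_{0,k,:})$, i.e.\ prefactor $1/(mL)$. These differ by $\sqrt{m}$. Since each row $\bb^0_j$ has norm $1/\sqrt{m}$, the feature in the model is $\sigma(\bb^0_j\cdot\bx)=\sigma(\omega_j^T\bx/\sqrt{m})$, so the $1/\sqrt{m}$ prefactor is exactly what is needed (together with the $1$-homogeneity of $\sigma$, which the paper silently uses here) to make $\tf = \frac{1}{mL}\sum_j a^*(\omega_j)\sigma(\omega_j^T\bx)$ an unbiased estimator of $f^*$. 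With your choice the expectation would come out as $f^*/\sqrt{m}$, so for $m>1$ the estimator is biased and the error bound would not decay with $mL$. (The two agree only when $m=1$.) With the corrected $a^*_j$, the norm computation gives $\|\ba^*\|^2=\sum_j |a^*(\omega_j)|^2/(mL^2)\leq \gamma^2(f^*)/L$, which is exactly the claimed $\|\ba^*\|\leq\gamma(f^*)/\sqrt{L}$ rather than the looser $\gamma(f^*)/\sqrt{mL}$ you derived. Aside from this scaling constant, the argument structure is the same as the paper's.
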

%--------------------------------------------------------

This result essentially appeared in \cite{rahimi2009weighted,rahimi2008uniform}.
%but the difference is that we specifically emphasize the bound for the norm of solutions, which will be helpful for analyzing  the implicit regularization of GD.
Since we are interested in the explicit control for the norm of the solution, we provide a complete proof  in  Appendix~\ref{sec: random-feature}.

\paragraph*{Gradient descent for the random feature model}
Denote by $\ta_t$ the solution of GD for the random feature model:
%Then the GD can be written as
\begin{equation}\label{eqn: GD-ref}
\begin{aligned}
\ta_0 &= \ba_0 \\
\frac{d \ta_t}{dt} &= - \nabla\hat{\cE}_n(\ta_t; B_0).
\end{aligned}
\end{equation}
The generalization property  of GD solutions for random feature models was analyzed  in \cite{carratino2018learning}. {Here we provide a much simpler approach based on the following lemma.
\begin{lemma}\label{lem: convergence-reference-dynamics}
For any fixed $B_0$, the gradient descent~\eqref{eqn: GD-ref} satisfies,
\begin{align*}
    \hat{\cE}_n(\tilde{\ba}_t; B_0) & \leq\hat{\cE}_n(\ba^*;B_0)+ \frac{\|\ba_0-\ba^*\|^2}{2t}\\
    \|\ta_t-\ba^*\| & \leq  \|\ba^0-\ba^*\|+ 2t\hat{\cE}_n(\ba^*;B_0).
\end{align*}
\end{lemma}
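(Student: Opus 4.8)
The plan is to exploit the one structural fact that makes the reference model easy to handle: since $\tf(\bx;\ba,B_0)=\ba^T\sigma(B_0\bx)$ is linear in $\ba$, the empirical risk $\ba\mapsto\hat{\cE}_n(\ba;B_0)$ is a convex quadratic for every fixed $B_0$. Both displayed inequalities are then the familiar estimates for gradient flow on a convex objective, measured against an \emph{arbitrary} comparison point $\ba^*$. It is worth stressing that the $\ba^*$ supplied by Theorem~\ref{thm: approx} is \emph{not} a minimizer of $\hat{\cE}_n$, so the whole argument has to be driven by plain convexity — one cannot invoke strong convexity or $\nabla\hat{\cE}_n(\ba^*)=0$ — and the price for this is exactly the appearance of $\hat{\cE}_n(\ba^*)$ on the right-hand sides.

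Concretely, I would introduce the Lyapunov function
\[
\mathcal{V}(t)\Def t\big(\hat{\cE}_n(\ta_t;B_0)-\hat{\cE}_n(\ba^*;B_0)\big)+\tfrac12\|\ta_t-\ba^*\|^2 ,
\]
and differentiate it along \eqref{eqn: GD-ref}. With $g_t\Def\nabla\hat{\cE}_n(\ta_t;B_0)$ and $\frac{d}{dt}\ta_t=-g_t$, one computes
\[
\mathcal{V}'(t)=\big(\hat{\cE}_n(\ta_t)-\hat{\cE}_n(\ba^*)\big)-t\|g_t\|^2-\langle g_t,\ta_t-\ba^*\rangle .
\]
Convexity of $\hat{\cE}_n$ gives $\langle g_t,\ta_t-\ba^*\rangle\ge\hat{\cE}_n(\ta_t)-\hat{\cE}_n(\ba^*)$, hence $\mathcal{V}'(t)\le-t\|g_t\|^2\le0$, so that $\mathcal{V}(t)\le\mathcal{V}(0)=\tfrac12\|\ba_0-\ba^*\|^2$ for all $t\ge0$.

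Both claims then drop out of this single inequality. Discarding the nonnegative term $\tfrac12\|\ta_t-\ba^*\|^2$ gives $t(\hat{\cE}_n(\ta_t)-\hat{\cE}_n(\ba^*))\le\tfrac12\|\ba_0-\ba^*\|^2$, which is the first estimate. Keeping that term and using $\hat{\cE}_n(\ta_t)\ge0$ instead gives $\tfrac12\|\ta_t-\ba^*\|^2=\mathcal{V}(t)-t(\hat{\cE}_n(\ta_t)-\hat{\cE}_n(\ba^*))\le\tfrac12\|\ba_0-\ba^*\|^2+t\hat{\cE}_n(\ba^*)$; taking square roots and using $\sqrt{a+b}\le\sqrt a+\sqrt b$ produces an estimate of the form $\|\ta_t-\ba^*\|\le\|\ba_0-\ba^*\|+\sqrt{2t\,\hat{\cE}_n(\ba^*)}$, matching the second bound up to an elementary manipulation of the slack term. (Alternatively, the same $\tfrac12\|\ta_t-\ba^*\|^2$ inequality comes straight from $\frac{d}{dt}\tfrac12\|\ta_t-\ba^*\|^2=-\langle g_t,\ta_t-\ba^*\rangle\le\hat{\cE}_n(\ba^*)-\hat{\cE}_n(\ta_t)\le\hat{\cE}_n(\ba^*)$, without invoking the Lyapunov function.)

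I do not expect a genuine obstacle here: the lemma is essentially bookkeeping once the convex-quadratic structure is noticed, and no smallness of $t$, no initialization, and no concentration is needed. The only point requiring care is that $\ba^*$ is a generic reference point rather than the GD limit, so the residual terms $\hat{\cE}_n(\ba^*)$ are irreducible at this stage; they are controlled separately afterwards through Theorem~\ref{thm: approx} (together with a bound on $\hat{\cE}_n(\ba^*)$ in terms of $\cE(\ba^*)$, which holds since $\ba^*$ is independent of the training data).
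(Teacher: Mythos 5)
Your argument is precisely the paper's: the identical Lyapunov function $\mathcal{V}(t) = t\bigl(\hat{\cE}_n(\ta_t;B_0)-\hat{\cE}_n(\ba^*;B_0)\bigr)+\tfrac12\|\ta_t-\ba^*\|^2$, the same computation showing $\mathcal{V}'(t)\le 0$ via convexity of the quadratic $\hat{\cE}_n$, and the same read-off of both inequalities from $\mathcal{V}(t)\le\mathcal{V}(0)$. The small discrepancy you flag in the second bound is real but is already present in the paper: the argument yields only $\tfrac12\|\ta_t-\ba^*\|^2\le\tfrac12\|\ba_0-\ba^*\|^2+t\hat{\cE}_n(\ba^*;B_0)$, hence the square-root form $\|\ta_t-\ba^*\|\le\|\ba_0-\ba^*\|+\sqrt{2t\hat{\cE}_n(\ba^*;B_0)}$ that you obtain, and the stated linear slack $2t\hat{\cE}_n(\ba^*;B_0)$ does not follow from this when $t\hat{\cE}_n(\ba^*;B_0)<1/2$ --- so you are matching the paper's own derivation, not falling short of it.
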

%--------------------------------------------------------
The proof can be found in Appendix~\ref{sec: rdf-2}. Here the key observation is  the second inequality. In the case $\ba_0=0$, we have
\begin{align*}\label{eqn: convex-dist}
    \|\ta_t\|\leq 2\|\ba^*\| + 2t\hat{\cE}_n(\ba^*;B_0).
\end{align*}
Note that Theorem~\eqref{thm: approx} implies  that $\hat{\cE}_n(\ba^*;B_0)$ is small. This gives us a control over the norm of GD solutions.
We can now  obtain the following estimates of population risk.
%--------------------------------------------------------
\begin{theorem}\label{thm: generalization-random-feature}
Assume that the target function $f^*$ satisfies Assumption~\ref{assump: target-function} and $L\gtrsim \gamma^2(f^*)$. Then for any fixed $\delta\in (0,1)$, with probability at least $1-\delta$ over the random choice of $B_0$, the following  holds for any $t\in [0,1]$
\begin{align*}
\cE(\ta_t;B_0)&\lesssim \frac{c^2(\delta)\gamma^2(f^*)}{mL} + \frac{\gamma^2(f^*)}{2Lt} + \left(1 + \frac{\sqrt{L}\gamma(f^*)t}{\sqrt{n}}\right)^2 \frac{c^3(\delta)\gamma^2(f^*)}{\sqrt{n}}.
\end{align*}
\end{theorem}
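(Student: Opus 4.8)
I would use the standard approximation–optimization–generalization split, exploiting that $\hat{\cE}_n(\cdot;B_0)$ is a convex quadratic in $\ba$ (so the dynamics \eqref{eqn: GD-ref} is controlled by Lemma~\ref{lem: convergence-reference-dynamics}) and that $\tilde{f}(\cdot;\ba,B_0)$ is just a linear predictor over the fixed features $\sigma(B_0\bx)$.

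\emph{Step 1: decomposition and the training error.} Write $\cE(\ta_t;B_0)=\big(\cE(\ta_t;B_0)-\hat{\cE}_n(\ta_t;B_0)\big)+\hat{\cE}_n(\ta_t;B_0)$. Let $\ba^*$ be the vector from Theorem~\ref{thm: approx}, which on the event of that theorem (probability $\ge 1-\delta$ over $B_0$) satisfies $\cE(\ba^*;B_0)\le c^2(\delta)\gamma^2(f^*)/(mL)$ and $\|\ba^*\|\le\gamma(f^*)/\sqrt L$. Since $\ba_0=0$ under \eqref{eqn:init}, the first inequality of Lemma~\ref{lem: convergence-reference-dynamics} gives $\hat{\cE}_n(\ta_t;B_0)\le \hat{\cE}_n(\ba^*;B_0)+\|\ba^*\|^2/(2t)\le \hat{\cE}_n(\ba^*;B_0)+\gamma^2(f^*)/(2Lt)$, which is the second term of the claim. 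To convert $\hat{\cE}_n(\ba^*;B_0)$ into the first term, note that $\tilde{f}(\cdot;\ba^*,B_0)$ is, for fixed $B_0$, a single function bounded on $\SS^{d-1}$ by $\|\ba^*\|\cdot\sup_{\|\bx\|=1}\|\sigma(B_0\bx)\|\le\|\ba^*\|\sqrt L\le\gamma(f^*)$; here $\|\sigma(B_0\bx)\|\le\sqrt L$ follows from $\sigma(0)=0$, $1$-Lipschitzness and $\|\text{row}(\Bl_0)\|=1/\sqrt m$, summed over the $m$ rows and the $L$ blocks. Hence a one-sided Hoeffding/Bernstein bound yields $\hat{\cE}_n(\ba^*;B_0)\le \cE(\ba^*;B_0)+O\!\big(c(\delta)\gamma^2(f^*)/\sqrt n\big)$ with high probability over the sample.

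\emph{Step 2: an a priori bound on $\|\ta_t\|$.} The second inequality of Lemma~\ref{lem: convergence-reference-dynamics} with $\ba_0=0$ gives $\|\ta_t\|\le 2\|\ba^*\|+2t\,\hat{\cE}_n(\ba^*;B_0)$. Inserting Step~1 and $t\le1$ produces an explicit radius
\[
\|\ta_t\|\;\le\; R_t\;:=\;\frac{C\,c(\delta)\,\gamma(f^*)}{\sqrt L}\Big(1+\frac{\sqrt L\,\gamma(f^*)\,t}{\sqrt n}\Big),
\]
so that $R_t\sqrt L+1\lesssim c(\delta)\gamma(f^*)\big(1+\sqrt L\gamma(f^*)t/\sqrt n\big)$ using $\gamma(f^*)\ge1$.

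\emph{Step 3: the generalization gap and assembly.} For $R>0$ put $\cF_R=\{\bx\mapsto\ba^T\sigma(B_0\bx):\|\ba\|\le R\}$. A direct computation gives $\rad_n(\cF_R)=\tfrac Rn\EE_\xi\big\|\sum_i\xi_i\sigma(B_0\bx_i)\big\|\le\tfrac Rn\big(\sum_i\|\sigma(B_0\bx_i)\|^2\big)^{1/2}\le R\sqrt{L/n}$, and $\|f\|_\infty\le R\sqrt L$ on $\SS^{d-1}$ for $f\in\cF_R$. Since $|y|\le1$, the map $u\mapsto\tfrac12(u-y)^2$ is $(R\sqrt L+1)$-Lipschitz and $\le\tfrac12(R\sqrt L+1)^2$ on the relevant range, so the contraction inequality together with the standard Rademacher deviation bound gives, with probability $\ge1-\delta$ over the sample, $\sup_{f\in\cF_R}\big(\cE(f;B_0)-\hat{\cE}_n(f;B_0)\big)\lesssim(R\sqrt L+1)^2\,c(\delta)/\sqrt n$. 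Applying this at $R=R_t$ (legitimized by a one-time peeling over dyadic scales of $\|\ba\|$, or else with $R_1$ in place of $R_t$) and substituting Step~2 bounds $\cE(\ta_t;B_0)-\hat{\cE}_n(\ta_t;B_0)$ by $c^3(\delta)\gamma^2(f^*)\big(1+\sqrt L\gamma(f^*)t/\sqrt n\big)^2/\sqrt n$, exactly the third term. Adding the three pieces, union-bounding over the event of Theorem~\ref{thm: approx} and the two sample-concentration events (splitting the confidence budget), and using $L\gtrsim\gamma^2(f^*)$ to keep the radius $R_t$ of the stated size, gives the theorem.

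\emph{Main difficulty.} The delicate point is the coupling in Steps~2–3: the GD iterate lies in a ball whose radius $\|\ta_t\|$ is itself random (through $\hat{\cE}_n(\ba^*;B_0)$) and depends on $t$, so one cannot directly invoke a fixed-class uniform-convergence bound; the remedy is to first derive the deterministic-in-form radius $R_t$ and only then apply a (peeled) Rademacher bound. A secondary bookkeeping nuisance is the exponent of $c(\delta)$: one factor enters through the norm control of $\ta_t$, it is squared inside $(R_t\sqrt L+1)^2$, and the Rademacher deviation contributes the third, accounting for the $c^3(\delta)$ in the statement.
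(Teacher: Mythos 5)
Your proposal is correct and follows essentially the same route as the paper: the same three-way decomposition via Lemma~\ref{lem: convergence-reference-dynamics} and Theorem~\ref{thm: approx}, the same a~priori bound on $\|\ta_t\|$, and the same Rademacher-complexity-with-peeling argument for the uniform generalization gap. The only (harmless, arguably cleaner) deviation is that you handle $\hat{\cE}_n(\ba^*;B_0)-\cE(\ba^*;B_0)$ by a direct one-sided Hoeffding bound for the fixed function $\tf(\cdot;\ba^*,B_0)$, whereas the paper re-uses its peeled Rademacher bound evaluated at $\|\ba^*\|\le\gamma(f^*)/\sqrt L$; both give the same $O(c(\delta)\gamma^2(f^*)/\sqrt n)$ rate.
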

%--------------------------------------------------------
 The three terms at the right hand side are bounds for the approximation error, optimization error and estimation error, respectively.
 The proof of this result can be found in Appendix~\ref{sec: rdf-3}.

}

\subsection{Bounding the difference between the two models}
%It is worth mentioning that t
%The reference model has a natural connection to the deep net $f(\bx;\Theta)$ considered in this paper. Let us
We use $\theta$ to represent $C,\br$ and denote the deep net by $f(\bx; \ba,B,\theta)$. Then for any $\bx$ we have,
\begin{equation}
    f(\bx;\ba,B_0,0) = \tilde{f}(\bx; \ba,B_0).
\end{equation}
In particular,  when $\Theta$ is close to the initialization, we have the following bounds for the two models.
%------------------------------------
\begin{lemma}\label{lem: predict-diff}
If $\Theta = (\ba,B,\theta)\in \cI_{c}(\Theta_0)$, then we have { for any $\bx \in \SS^{d-1}$}
\begin{align*}
    |f(\bx;\ba,B,\theta)-\tilde{f}(\bx;\ba,B_0)| & \leq 6 c^2 \|\ba\|/L \\
    \|\nabla_{\al} f(\bx;\ba,B,\theta) - \nabla_{\al} \tilde{f}(\bx; \ba, B_0) & \|\leq \frac{c^2}{L}.
\end{align*}
\end{lemma}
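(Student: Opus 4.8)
The plan is to bound the two differences by comparing the forward and backward recursions of the deep net $f(\bx;\ba,B,\theta)$ with those of the random feature model $\tilde f(\bx;\ba,B_0)$, exploiting the fact that $\theta=(C,\br)$ vanishes at initialization and stays $O(c/L)$-close throughout $\cI_c(\Theta_0)$. The key observation is that setting $\theta=0$ in \eqref{eqn: forward-prop} collapses the coupled $(\bz^{(l)},y^{(l)})$ recursion: with $C^{(l)}=0$ the $\bz$-block is frozen at $\bz^{(l)}=\bx$, and with $\br^{(l)}=0$ the $y$-update becomes $y^{(l+1)}=y^{(l)}+(\al)^T\sigma(B^{(l)}_0\bx)$, whose telescoping sum is exactly $\tilde f(\bx;\ba,B_0)$. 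So the two models differ only through the perturbations induced by nonzero $C^{(l)},\br^{(l)}$ together with the drift $B^{(l)}-B^{(l)}_0$ (which, since we compare against $\tilde f$ evaluated at the \emph{same} $\ba$ but frozen $B_0$, also enters).

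For the first bound I would let $\delta_z^{(l)}=\|\bz^{(l)}(\bx;\Theta)-\bx\|$ and $\delta_y^{(l)}=|y^{(l)}(\bx;\Theta)-\tilde y^{(l)}(\bx)|$, where $\tilde y^{(l)}$ is the partial sum of the random feature model, and write a recursion for these exactly as in the proof of Proposition~\ref{pro: forward-stability}. Using $\|C^{(l)}\|_F\le c/L$, $\|\br^{(l)}\|\le c/L$, $\|B^{(l)}-B^{(l)}_0\|_F\le c/L$, $\|B^{(l)}_0\|_F=1$, the $1$-Lipschitz property of $\sigma$, and the already-established bound $|y^{(l)}|\le 4c$, one gets $\delta_z^{(l)}\lesssim c^2\|\ba\|/L^2$ per step after summing, and the accumulated difference in the $y$-coordinate, which is $f-\tilde f$ at the top layer, is at most $\sum_{l=1}^{L-1}$ of per-layer discrepancies each of size $O(c^2\|\ba^{(l)}\|/L)\cdot$(something summing in $\ell_2$), giving the claimed $6c^2\|\ba\|/L$ after a Cauchy–Schwarz step to turn $\sum_l \|\ba^{(l)}\|$ into $\sqrt L\,\|\ba\|$ and a cancellation of one factor of $\sqrt L$ against the $1/L$ from $\varepsilon_c$. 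The constant $6$ should come out of the same kind of bookkeeping ($1+\varepsilon_c$ factors, $e^{L\cdot 2\varepsilon_c^2}\le e^{1/2}$, etc.) used in Proposition~\ref{pro: forward-stability}.

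For the gradient bound, recall $\nabla_{\al}f(\bx)=\alpha^{(l)}(\bx)\,\gl(\bx)$ while $\nabla_{\al}\tilde f(\bx)=\sigma(B^{(l)}_0\bx)=\gl(\bx;\Theta_0)$. Hence
\[
\|\nabla_{\al}f-\nabla_{\al}\tilde f\| = \|\alpha^{(l)}\gl - \gl(\cdot;\Theta_0)\| \le |\alpha^{(l)}-1|\,\|\gl\| + \|\gl - \gl(\cdot;\Theta_0)\|,
\]
and Propositions~\ref{pro: forward-stability} and~\ref{pro: backward-stability} give $|\alpha^{(l)}-1|\le 5c/L$, $\|\gl\|\le \|\gl(\cdot;\Theta_0)\|+6c^2/L\le 1+6c^2/L$ (using $\|\sigma(B^{(l)}_0\bx)\|\le\|B^{(l)}_0\bx\|\le 1$), and $\|\gl-\gl(\cdot;\Theta_0)\|\le 6c^2/L$. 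Combining, $\|\nabla_{\al}f-\nabla_{\al}\tilde f\|\le (5c/L)(1+6c^2/L)+6c^2/L\le c^2/L$ once $L\ge 100c^2$ (recall $c\ge1$, $\varepsilon_c\le1$), which is exactly the stated bound — though I'd double-check whether the paper's running hypothesis on $L$ absorbs the $11c$ vs.\ $c^2$ discrepancy, since for $c\ge 11$ we have $11c\le c^2$ and for $c\in[1,11)$ the $1/L\le 1/(100c^2)$ makes the whole thing small; a mild adjustment of the constant may be needed.

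The main obstacle is the first inequality: one must set up the coupled recursion for $(\delta_z^{(l)},\delta_y^{(l)})$ carefully so that the error from nonzero $C^{(l)}$ (which feeds back into $\delta_z$ and then, via $B^{(l)}$, into $\delta_y$) does not blow up over $L$ layers, and so that the final sum over layers produces $\|\ba\|$ (an $\ell_2$ norm over the $L$ blocks) rather than $\|\ba\|_1$ or $\sqrt L\|\ba\|$. This is where the long-distance skip-connection structure is essential — it is what keeps $\delta_z^{(l)}$ of order $1/L$ rather than growing — and getting the constant down to $6$ will require reusing the exact estimates of Proposition~\ref{pro: forward-stability} rather than crude bounds.
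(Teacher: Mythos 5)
Your treatment of the second inequality reproduces the paper's proof exactly: write $\nabla_{\al}f(\bx) = \alphal(\bx)\gl(\bx)$, note $\nabla_{\al}\tilde f(\bx) = \sigma(\Bl_0\bx) = \gl(\bx;\Theta_0)$, split the difference into $\alphal(\gl-\gl_0)$ and $(\alphal-1)\gl_0$, and invoke Propositions~\ref{pro: forward-stability} and~\ref{pro: backward-stability}. No comment needed there.

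For the first inequality you propose to rerun the entire coupled $(\delta_z^{(l)},\delta_y^{(l)})$ forward-stability recursion against the random feature trajectory. This is more work than the paper does, and the extra work is unnecessary. The paper's argument is a one-line telescope: with $\tilde f^{(l)}(\bx) = \sum_{i<l}(\ba^{(i)})^T\sigma(B^{(i)}_0\bx)$ one has
\[
\tilde f^{(l+1)}(\bx) - y^{(l+1)}(\bx) \;=\; \tilde f^{(l)}(\bx) - y^{(l)}(\bx) \;+\; (\al)^T\bigl(\sigma(\Bl_0\bx) - \gl(\bx)\bigr),
\]
and the key observation you are missing is that $\sigma(\Bl_0\bx)$ \emph{is} $\gl(\bx;\Theta_0)$. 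Thus the per-layer increment is $(\al)^T(\gl-\gl_0)$, and Proposition~\ref{pro: forward-stability} already gives $\|\gl-\gl_0\|\leq 6c^2/L$ — no new recursion in $\delta_z,\delta_y$ is needed at all. The long-distance skip-connection already did its work inside Proposition~\ref{pro: forward-stability}; it need not be re-deployed.

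You do correctly flag the remaining subtlety — turning $\sum_l\|\al\|$ into the $\ell_2$-norm $\|\ba\|$ — and you are right that this is where the argument is delicate. But your "cancellation of one factor of $\sqrt L$ against the $1/L$ from $\varepsilon_c$" is not a real step: the $1/L$ in the bound $6c^2/L$ is already fully spent, and Cauchy--Schwarz only gives $\sum_l\|\al\|\leq\sqrt{L}\,\|\ba\|$, hence $|f-\tilde f|\leq 6c^2\|\ba\|/\sqrt{L}$, a factor $\sqrt{L}$ weaker than the stated $6c^2\|\ba\|/L$. In fact the paper's own proof passes from $\sum_l 6\|\al\|c^2/L$ to $6c^2\|\ba\|/L$ without justification, so this gap is shared. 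The weaker bound is all the argument actually delivers; it still suffices for Theorem~\ref{thm: uniform-closeness} and Proposition~\ref{pro: population-risk-diff} (with $L^{-3/2}$ replaced by $L^{-1}$), so nothing downstream breaks, but you should not present the sharper exponent as if it followed from the calculation.
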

%------------------------------------
\begin{proof}
Define $\tilde{f}^{(l)}(\bx)=\sum_{i=1}^{l}\al \sigma(\Bl_0\bx)$, then we have
\[
   \tilde{f}^{(l+1)}(\bx) - y^{(l+1)}(\bx)= \tilde{f}^{(l)}l(\bx) - \yl(\bx) + \al (\gl(\bx) - \sigma(\Bl_0\bx))
\]
Proposition~\ref{pro: forward-stability} implies that
\[
    |\al (\gl(\bx) - \sigma(\Bl_0\bx))|\leq 6\|\al\|c^2/L.
\]
Hence we have
\[
    |f(\bx;\ba,B,\theta)-f(\bx;\ba,B_0,0)|\leq 6 c^2 \|\ba\|/L.
\]
By applying  Propositions~\ref{pro: forward-stability} and \ref{pro: backward-stability}, we have, for any $\bx\in\SS^{d-1}$
\begin{align*}
    \|\nabla_{\al} f(\bx;\ba,B,\theta) - \nabla_{\al} \tilde{f}(\bx; \ba, B_0)\| & = \|\alphal(\bx)\gl(\bx) - \sigma(\Bl_0\bx)\| \\
    &\leq \|\alphal(\bx)(\gl(\bx)-\gl_0(\bx))\| + \|(\alphal(\bx)-1)\gl_0(\bx)\| \\
    &\leq (1+\frac{5c}{L})\frac{6c^2}{L} + \frac{5c}{L} \\
    &\lesssim \frac{c^2}{L}.
\end{align*}
\end{proof}

Now we can proceed to bound the deviation between the GD dynamics of the two models. Let $\Theta_t = (\ba_t, B_t, \theta_t)$ denote the solution of gradient descent for the deep neural network model.  We have
%By comparing to GD~\eqref{eqn: GD-ref}, we have the following result.
%--------------------------------------------------------
\begin{lemma}\label{lem: GD-diff}
For any $\delta\in (0,1)$, assume that $L\gtrsim \max\{\lambda_n^{-2}\ln^2(n/\delta),\lambda_n^{-3}\}$. With probability at least $1-\delta$ over the random initialization, we have  for $t \in [0, 1]$
\[
\|\ba_t - \ta_t\|\lesssim \frac{c_n^2t}{\sqrt{L}}, \quad
\]
\end{lemma}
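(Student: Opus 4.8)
The plan is to compare the two gradient flows by a Grönwall-type argument on the quantity $\|\ba_t - \ta_t\|$. Both dynamics are driven by the residuals on the training set: the deep net evolves by $\dot\ba_t^{(l)} = -\nabla_{\al}\hat{\cR}_n(\Theta_t) = -\frac1n\sum_i e_i(\Theta_t)\,\alpha^{(l)}(\bx_i;\Theta_t)\gl(\bx_i;\Theta_t)$, while the reference model evolves by $\dot{\ta}_t^{(l)} = -\nabla_{\al}\hat{\cE}_n(\ta_t;B_0) = -\frac1n\sum_i \tilde e_i(\ta_t)\,\sigma(\Bl_0\bx_i)$, where $\tilde e_i$ is the reference-model residual. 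First I would subtract these and split the difference into two sources of error: (i) the mismatch between the feature maps $\alpha^{(l)}(\bx_i;\Theta_t)\gl(\bx_i;\Theta_t)$ and $\sigma(\Bl_0\bx_i)$, controlled by Lemma~\ref{lem: predict-diff} and hence of size $O(c_n^2/L)$ uniformly in $l$ and $i$; and (ii) the mismatch between the residual vectors $\hat\be(\Theta_t)$ and $\tilde\be(\ta_t)$. Summing the squared norms over $l=1,\dots,L$ and using $\|\cdot\|$ on $\RR^{mL}$, the contribution of (i) accumulates a factor $\sqrt{L}$ from the $L$ blocks.

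For the residual mismatch in (ii), I would write $\hat e_i(\Theta_t) - \tilde e_i(\ta_t) = \big(f(\bx_i;\ba_t,B_t,\theta_t) - \tilde f(\bx_i;\ba_t,B_0)\big) + \big(\tilde f(\bx_i;\ba_t,B_0) - \tilde f(\bx_i;\ta_t,B_0)\big)$. The first bracket is again bounded by Lemma~\ref{lem: predict-diff}, namely $\le 6c_n^2\|\ba_t\|/L$; since Proposition~\ref{pro: close} puts $\Theta_t\in\cI_{c_n}(\Theta_0)$ for all $t$ (here $c=c_n=8/\lambda_n$ and the depth hypothesis $L\gtrsim\max\{\lambda_n^{-2}\ln^2(n/\delta),\lambda_n^{-3}\}$ is exactly what makes this applicable), we have $\|\ba_t\| = O(c_n/L)\cdot\sqrt{L}$ or simply $O(1)$ — I would track the precise bound, but $\|\ba_t\|\lesssim c_n$ on $[0,1]$ suffices — so this bracket is $O(c_n^3/L)$. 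The second bracket is linear in the feature map and therefore bounded by $\|\ba_t - \ta_t\|$ times the (bounded, by concentration) operator norm of the random feature matrix, giving the self-referential term that drives the Grönwall loop. I would also need $\|\ta_t\|$ bounded on $[0,1]$, which follows from Lemma~\ref{lem: convergence-reference-dynamics} together with Theorem~\ref{thm: approx} (giving $\|\ta_t\|\lesssim \gamma(f^*)/\sqrt{L} + t\,\hat{\cE}_n(\ba^*;B_0)$, both small).

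Assembling these, I expect an inequality of the form $\frac{d}{dt}\|\ba_t - \ta_t\| \lesssim \frac{c_n^2}{\sqrt{L}} + C\|\ba_t-\ta_t\|$ on $[0,1]$, where the constant $C$ in front of the linear term comes from the spectral norm of the empirical feature Gram matrix and is $O(1)$ (bounded independent of $L$, after normalization, by a concentration argument analogous to Lemma~\ref{lem:lambda0}); integrating from the common initial condition $\ba_0 = \ta_0$ and using $t\le 1$ yields $\|\ba_t - \ta_t\| \lesssim \frac{c_n^2}{\sqrt{L}}\,e^{C}\lesssim \frac{c_n^2 t}{\sqrt{L}}$ as claimed (the factor $t$ is recovered because the driving term vanishes at $t=0$, so more carefully one integrates $\int_0^t$ of the source). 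The main obstacle is bookkeeping the $L$-dependence correctly: one must check that summing $L$ blocks each of size $O(c_n^2/L^2)$ in squared norm gives $O(c_n^4/L)$ and hence $O(c_n^2/\sqrt L)$ after the square root, and that the cross terms between the residual mismatch and the feature mismatch do not produce an uncontrolled factor of $L$; this is where the skip-connection stability estimates (Propositions~\ref{pro: forward-stability}, \ref{pro: backward-stability}) and the smallness $\varepsilon_{c_n} = c_n/L$ are used repeatedly, and where a naive estimate would lose the $1/\sqrt L$ gain. A secondary point to be careful about is that the spectral-norm bound on the feature matrix, needed for the linear Grönwall term, must hold on the high-probability event of Proposition~\ref{pro: close} and Lemma~\ref{lem:lambda0} simultaneously, so the probabilities should be unioned and the depth condition $L\gtrsim\lambda_n^{-2}\ln^2(n/\delta)$ stated in the lemma absorbs the resulting logarithmic factors.
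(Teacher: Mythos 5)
Your decomposition into $P_t^i$ (feature-map and prediction mismatch) and $Q_t^i$ (the residual mismatch within the linear reference model) matches the paper's, and your treatment of the $P$-term via Lemma~\ref{lem: predict-diff} and Propositions~\ref{pro: forward-stability}, \ref{pro: backward-stability} is on the right track. However, your handling of the self-referential $Q$-term contains a genuine gap that would break the proof.

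You propose to bound the $Q$-contribution by $\|\ba_t-\ta_t\|$ times the spectral norm of the random-feature Gram matrix $\frac{1}{n}\sum_i\sigma(B_0\bx_i)\sigma(B_0\bx_i)^T\in\RR^{mL\times mL}$, and claim that this spectral norm is $O(1)$ by a concentration argument. It is not. Each column of features satisfies $\|\sigma(B_0\bx_i)\|^2=\sum_{l=1}^L\|\sigma(\Bl_0\bx_i)\|^2\le L$ (since $\|\Bl_0\|_F=1$), so the trace, and hence the top eigenvalue, of that matrix is $O(L)$; indeed it shares nonzero eigenvalues with the $n\times n$ matrix $L\cdot H(\Theta_0)$, whose top eigenvalue is $O(L)$ because $\lambda_{\max}(H(\Theta_0))\approx\lambda_{\max}(K)=O(1)$. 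Carrying out your Gr\"onwall loop therefore produces a factor $e^{O(L)t}$, which for $t\in[0,1]$ and the $L$'s in this lemma is catastrophic and destroys the claimed $c_n^2 t/\sqrt{L}$ bound.

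The paper sidesteps Gr\"onwall entirely by exploiting the sign of this term. Writing $\bm{\delta}_t=\ba_t-\ta_t$, one has $Q_t^i=\sigma(B_0\bx_i)\sigma(B_0\bx_i)^T\bm{\delta}_t$, so in $\frac{d}{dt}\|\bm{\delta}_t\|^2$ the $Q$-contribution is exactly $-2\bm{\delta}_t^T\bigl(\frac{1}{n}\sum_i\sigma(B_0\bx_i)\sigma(B_0\bx_i)^T\bigr)\bm{\delta}_t\le 0$, since the matrix is positive semidefinite. That term can simply be discarded, leaving only $\frac{d}{dt}\|\bm{\delta}_t\|\le\frac{1}{n}\sum_i\|P_t^i\|\lesssim c_n^2/\sqrt{L}$, which integrates (with $\bm{\delta}_0=0$) to exactly the stated bound with no exponential blow-up. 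This sign observation — that the self-referential term is a contraction, not a source — is the missing idea in your proposal; without it the depth dependence cannot be salvaged. A secondary point: you do not actually need bounds on $\|\ta_t\|$ or Theorem~\ref{thm: approx} here; the proof of this lemma uses only Proposition~\ref{pro: close} and Lemma~\ref{lem: predict-diff}, while the control of $\|\ta_t\|$ is used later in the generalization argument.
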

%--------------------------------------------------------
\begin{proof}
First we have that
\begin{align*}
\frac{d(\ba_t-\ta_t)}{dt} &= \frac{-1}{n}\sum_{i=1}^n (f(\bx_i;\Theta_t)-y_i)\nabla_{\ba}f(\bx_i;\Theta_t) - (\tilde{f}(\bx_i;\ta_t,B_0)-y_i)\nabla_{\ba}\tilde{f}(\bx_i;\ta_t,B_0) \\
&= - \frac{1}{n}\sum_{i=1}^{n} (f(\bx_i;\Theta_t)-y_i)\nabla_{\ba}f(\bx_i;\Theta_t)
- (\tilde{f}(\bx_i;\ba_t,B_0)-y_i)\nabla_{\ba}\tilde{f}(\bx_i;\ba_t,B_0) \\
&\quad -\frac{1}{n}\sum_{i=1}^n (\tilde{f}(\bx_i;\ba_t,B_0)-y_i)\nabla_{\ba}\tilde{f}(\bx_i;\ba_t,B_0) - (\tilde{f}(\bx_i;\ta_t,B_0)-y_i)\nabla_{\ba}\tilde{f}(\bx_i;\ta_t,B_0) \\
& =: - \frac{1}{n}\sum_{i=1}^n P_t^i - \frac{1}{n}\sum_{i=1}^n Q_t^i .
\end{align*}
The last equality defines $P_t^i$ and $Q_t^i$ .
Let us estimate $Q^i_t, P^i_t$ separately. We first have 
\begin{align*}
Q_t^i = \sigma(B_0\bx_i)\sigma^T(B_0\bx_i) (\ba_t-\ta_t).
\end{align*}
Proposition~\ref{pro: close} tells us that
$\Theta_t\in \cI_{c_n}(\Theta_0)$ with $c_n=8/\lambda_n$. Hence
\[
    \|\ba_t\|^2 = \|\ba_t-\ba_0\|^2  \leq \sum_{l=1}^L \|\al_t-\al_0\|^2\leq c_n^2/L \leq 1.
\]
Using  Lemma~\ref{lem: predict-diff}, we have
\begin{align}\label{eqn: aaa}
\nonumber \|P_t^i\|&\leq |f(\bx_i;\Theta_t)| \|\nabla_{\ba}f(\bx;\Theta_t)-\nabla_{\ba}\tf(\bx_;\ba_t,B_0)\|\\
\nonumber  &\qquad + |f(\bx_i;\Theta_t)-\tf(\bx_i;\ba_t,B_0)|\|\nabla_{\ba}\tf(\bx_;\ba_t,B_0)\| \\
\nonumber &\qquad + |y_i|\|\nabla_{\ba}f(\bx_i;\Theta_t)-\tf(\bx_;\ba_t,B_0)\| \\
&\leq \frac{2c_n^2 }{\sqrt{L}} + \frac{6c_n^2\|\ba_t\|}{\sqrt{L}}\leq \frac{3c_n^2 }{\sqrt{L}}.
\end{align}
Let $\bm{\delta}_t = \ba_t-\ta_t$,  then
\begin{align*}
    \frac{d\|\bm{\delta}_t\|^2}{dt} &= - \bm{\delta}_t^T\frac{1}{n}\sum_{i=1}^n\sigma(B_0\bx_i)\sigma^T(B_0\bx_i)\bm{\delta}_t - \frac{1}{n}\sum_{i=1}^n \langle \bm{\delta}_t, P_t^i\rangle, \\
    &\leq  \frac{1}{n}\sum_{i=1}^n \| \bm{\delta}_t\|\| P_t^i\|\leq \frac{3c_n^2}{\sqrt{L}}\|\bm{\delta}_t\|,
\end{align*}
where the last inequality follows from ~\eqref{eqn: aaa}. Thus we have
\[
\frac{d\|\bm{\delta}_t\|}{dt} \leq \frac{3c_n^2}{\sqrt{L}}.
\]
Since $\|\bm{\delta}_0\|=0$, we  have
$$\|\bm{\delta}_t\|\leq\int_0^{t}\frac{d\|\bm{\delta}_{t'}\|}{dt} dt'\leq   \frac{3c_n^2t}{\sqrt{L}}.$$

\end{proof}

\subsection{Implicit regularization}
The previous analysis shows that the gradient descent dynamics of the deep neural network model
stays close that of the reference model. More specifically, we have the following result.
\begin{theorem}\label{thm: uniform-closeness}
For any $\delta \in (0,1)$, assume that $L\gtrsim \max\{\lambda_n^{-2}\ln(n^2/\delta),\lambda_n^{-3}\}$. Then with probability at least $1-\delta$ over the initialization $\Theta_0$, we have that for any $t\geq 0$,
\[
 |f(\bx;\ba_t,B_t,\theta_t)-\tilde{f}(\bx;\ba_t,B_0)|\lesssim \frac{c_n^3}{L^{3/2}}
\]
\end{theorem}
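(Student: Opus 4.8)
The plan is to combine the two ingredients already in place: the pointwise prediction bound of Lemma~\ref{lem: predict-diff}, which compares the deep net $f(\bx;\ba,B,\theta)$ with the reference model $\tilde f(\bx;\ba,B_0)$ \emph{at the same coefficient vector} $\ba$, and the trajectory bound of Lemma~\ref{lem: GD-diff}, which controls the distance between the GD coefficients $\ba_t$ of the deep net and the GD coefficients $\ta_t$ of the random feature model. The triangle inequality
\[
|f(\bx;\ba_t,B_t,\theta_t)-\tilde f(\bx;\ta_t,B_0)| \leq |f(\bx;\ba_t,B_t,\theta_t)-\tilde f(\bx;\ba_t,B_0)| + |\tilde f(\bx;\ba_t,B_0)-\tilde f(\bx;\ta_t,B_0)|
\]
splits the statement into exactly these two pieces. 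Note, however, that the statement as written involves only $\ta_t$ on the reference side through $\tilde f(\bx;\ba_t,B_0)$... wait — re-reading, the claimed bound is on $|f(\bx;\ba_t,B_t,\theta_t)-\tilde f(\bx;\ba_t,B_0)|$, i.e. the reference model is evaluated at the \emph{deep net's} coefficients $\ba_t$. So in fact only Lemma~\ref{lem: predict-diff} is needed, together with the norm bound on $\ba_t$.

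First I would invoke Proposition~\ref{pro: close}: under the hypothesis $L\gtrsim\max\{\lambda_n^{-2}\ln(n^2/\delta),\lambda_n^{-3}\}$, with probability at least $1-\delta$ we have $\Theta_t\in\cI_{c_n}(\Theta_0)$ for all $t\ge 0$, with $c_n=8/\lambda_n$. On this event, exactly as in the proof of Lemma~\ref{lem: GD-diff}, the definition of the neighborhood gives $\|\ba_t\|^2=\|\ba_t-\ba_0\|^2\le\sum_{l=1}^L\|\al_t-\al_0\|^2\le c_n^2/L$, hence $\|\ba_t\|\le c_n/\sqrt{L}$. Then Lemma~\ref{lem: predict-diff}, applied with $c=c_n$ and $\ba=\ba_t$, yields directly
\[
|f(\bx;\ba_t,B_t,\theta_t)-\tilde f(\bx;\ba_t,B_0)|\le \frac{6c_n^2\|\ba_t\|}{L}\le \frac{6c_n^2}{L}\cdot\frac{c_n}{\sqrt L}=\frac{6c_n^3}{L^{3/2}}\lesssim\frac{c_n^3}{L^{3/2}},
\]
uniformly in $\bx\in\SS^{d-1}$ and in $t\ge 0$. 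This is the claimed bound.

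There is essentially no hard step here: the result is a one-line corollary of Lemma~\ref{lem: predict-diff} once the a~priori norm control $\|\ba_t\|\lesssim 1/(\lambda_n\sqrt L)$ is in hand, and that control is itself immediate from Proposition~\ref{pro: close}. The only point requiring minor care is bookkeeping of the probability: Proposition~\ref{pro: close} and Lemma~\ref{lemma: gram-matrix-minimum-eigval} hold on the same $1-\delta$ event (the one on which $\lambda_{\min}(H(\Theta_0))\ge\frac34\lambda_n$), so no union bound or extra loss of probability is incurred, and the depth hypotheses match (one may need to absorb the constant $200c_n^2/\lambda_n=200\cdot 64/\lambda_n^3$ into the $\lambda_n^{-3}$ term of the stated lower bound on $L$, which is why the hypothesis is stated with an unspecified constant via $\gtrsim$). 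If one instead wanted the variant with $\ta_t$ in place of $\ba_t$ on the reference side, one would additionally add $\|\tilde f(\bx;\ba_t,B_0)-\tilde f(\bx;\ta_t,B_0)\|\le\|\sigma(B_0\bx)\|\,\|\ba_t-\ta_t\|$ and bound $\|\sigma(B_0\bx)\|\le\|B_0\bx\|\le\|B_0\|_F\lesssim\sqrt L$ together with Lemma~\ref{lem: GD-diff}, but that is only relevant for $t\in[0,1]$ and is not what the theorem asserts.
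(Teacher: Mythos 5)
Your proof is correct and is essentially identical to the paper's: both invoke Proposition~\ref{pro: close} to get $\Theta_t\in\cI_{c_n}(\Theta_0)$ for all $t\geq 0$, deduce $\|\ba_t\|\leq c_n/\sqrt{L}$ from the definition of the neighborhood, and then apply Lemma~\ref{lem: predict-diff}. The brief detour in your write-up considering $\ta_t$ is harmless, since you correctly identified that the theorem compares both models at the \emph{same} coefficient vector $\ba_t$, so Lemma~\ref{lem: GD-diff} is not needed.
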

\begin{proof}
By Proposition~\ref{pro: close}, we know that $\Theta_t=(\ba_t,B_t,\theta_t)\in \cI_{c_n}(\Theta_0)$. So we can use Lemma~\ref{lem: predict-diff}, which gives us,
\begin{align*}
|f(\bx;\ba_t,B_t,\theta_t)-\tilde{f}(\bx;\ba_t,B_0)| & \leq \frac{6c_n^2\|\ba_t\|}{L} \\
&\stackrel{(i)}{\leq} \frac{6c_n^3}{L^{3/2}},
\end{align*}
where $(i)$ is due to the fact that we have $\|\ba_t\|^2=\sum_{l=1}^L \|\al\|^2\leq c_n^2/L$ for $\Theta\in \cI_{c_n}(\Theta_0)$.

\end{proof}
The above theorem implies that the functions represented by the GD trajectory are uniformly close to that of the random feature model if $L$ is large enough. This allows us to estimate the population risk for the
deep neural network model using results for the random feature model
%(see Theorem~\ref{thm: generalization-random-feature}).

%--------------------------------------------------------
\begin{proposition}\label{pro: population-risk-diff}
For any fixed $\delta>0$, assume that $L\gtrsim \max\{\lambda_n^{-2}\ln(n^2/\delta),\lambda_n^{-3}\}$.  Then with probability at least $1-\delta$ over the random initialization  $\Theta_0$,  we have
\[
\cR(\Theta_t)\lesssim \frac{c_n^3}{L^{3/2}} + c_n^2 t + \cE(\ta_t;B_0).
\]
\end{proposition}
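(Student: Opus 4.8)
The plan is to bracket the deep‑network output $f(\cdot;\Theta_t)$ between the target $f^*$ and two copies of the random feature model: one evaluated at the deep network's own outer weights $\ba_t$, and one at the reference trajectory's weights $\ta_t$. Writing $\|g\|_\rho^2 \Def \EE_{\bx\sim\rho}[g(\bx)^2]$, and using that $y=f^*(\bx)$ (no noise), we have $2\cR(\Theta_t)=\|f(\cdot;\Theta_t)-f^*\|_\rho^2$ and $2\cE(\ta_t;B_0)=\|\tilde f(\cdot;\ta_t,B_0)-f^*\|_\rho^2$, so the triangle inequality gives
\[
\sqrt{2\cR(\Theta_t)} \;\le\; \|f(\cdot;\Theta_t)-\tilde f(\cdot;\ba_t,B_0)\|_\rho \;+\; \|\tilde f(\cdot;\ba_t,B_0)-\tilde f(\cdot;\ta_t,B_0)\|_\rho \;+\; \sqrt{2\cE(\ta_t;B_0)} .
\]
The third summand contributes $2\cE(\ta_t;B_0)$ after squaring, so it remains to control the first two.

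For the first summand, the depth assumption is exactly what Proposition~\ref{pro: close} requires, so on an event of probability at least $1-\delta$ we have $\Theta_t=(\ba_t,B_t,\theta_t)\in\cI_{c_n}(\Theta_0)$ for all $t\ge 0$, with $c_n=8/\lambda_n$; Theorem~\ref{thm: uniform-closeness} then gives the pointwise (hence $\|\cdot\|_\rho$) bound $\lesssim c_n^3/L^{3/2}$. For the second summand — the only genuinely quantitative step — note that $\tilde f(\bx;\ba,B_0)=\ba^T\sigma(B_0\bx)$, so the difference equals $(\ba_t-\ta_t)^T\sigma(B_0\bx)$, which by Cauchy--Schwarz is at most $\|\ba_t-\ta_t\|\,\|\sigma(B_0\bx)\|$. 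Since $\sigma$ is $1$‑Lipschitz with $\sigma(0)=0$, each row of $\Bl_0$ has norm $1/\sqrt m$, and $\|\bx\|=1$, we get $\|\sigma(\Bl_0\bx)\|^2\le\|\Bl_0\bx\|^2\le 1$ and hence $\|\sigma(B_0\bx)\|^2=\sum_{l=1}^L\|\sigma(\Bl_0\bx)\|^2\le L$; combining with $\|\ba_t-\ta_t\|\lesssim c_n^2 t/\sqrt L$ from Lemma~\ref{lem: GD-diff} (same event, $t\in[0,1]$) yields a pointwise bound $\lesssim c_n^2 t$ for the second summand.

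Squaring the displayed inequality and using $(a+b+c)^2\lesssim a^2+b^2+c^2$ gives $\cR(\Theta_t)\lesssim (c_n^3/L^{3/2})^2+(c_n^2 t)^2+\cE(\ta_t;B_0)$; since $L\gtrsim\lambda_n^{-3}$ forces $c_n^3/L^{3/2}=512\,\lambda_n^{-3}L^{-3/2}\lesssim 1$ (and, in the early‑stopping window of interest, $c_n^2 t\lesssim 1$ as well), each squared error term is dominated by its linear counterpart, yielding the claimed bound. I do not expect a substantial obstacle here: the proposition is essentially an assembly of Theorem~\ref{thm: uniform-closeness} and Lemma~\ref{lem: GD-diff} (themselves resting on Proposition~\ref{pro: close}). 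The one spot that needs care is the bookkeeping of powers of $L$ in the second summand — the random feature vector $\sigma(B_0\bx)$ has Euclidean norm of order $\sqrt L$, which exactly cancels the $L^{-1/2}$ in the drift bound of Lemma~\ref{lem: GD-diff}, so the GD‑trajectory mismatch contributes precisely $c_n^2 t$ and nothing larger — together with checking that passing from $\sqrt{\cR(\Theta_t)}$ to $\cR(\Theta_t)$ does not degrade the rate under the stated depth hypothesis.
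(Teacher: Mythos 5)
Your decomposition is genuinely different from the paper's, and the difference matters at one point. The paper telescopes the population risk additively:
\[
\cR(\Theta_t) = \bigl[\cR(\Theta_t) - \cE(\ba_t;B_0)\bigr] + \bigl[\cE(\ba_t;B_0) - \cE(\ta_t;B_0)\bigr] + \cE(\ta_t;B_0),
\]
and bounds each bracket \emph{linearly} in the respective error: the first by writing $(f-y)^2 - (\tilde f - y)^2 = \Delta\,(f + \tilde f - 2y)$ with $\Delta = \tilde f - f$ and $|\Delta|\lesssim c_n^3/L^{3/2}$ (Theorem~\ref{thm: uniform-closeness}), the second via a local Lipschitz bound on $\cE(\cdot;B_0)$ in $\ba$ of order $\sqrt L$ together with Lemma~\ref{lem: GD-diff}. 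You instead apply the $L^2(\rho)$ triangle inequality to $\sqrt{\cR}$ and square at the end. The stability ingredients you cite (Proposition~\ref{pro: close}, Theorem~\ref{thm: uniform-closeness}, Lemma~\ref{lem: GD-diff}) are exactly the right ones, and the cancellation $\|\sigma(B_0\bx)\|\le\sqrt L$ against the $L^{-1/2}$ in Lemma~\ref{lem: GD-diff} is checked correctly.

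The gap is in the final squaring step. You obtain $(c_n^3/L^{3/2})^2 + (c_n^2 t)^2$ and need to replace each squared term by its first power. For the first term this is fine: $L\gtrsim\lambda_n^{-3}$ gives $c_n^3/L^{3/2}\lesssim\lambda_n^{3/2}\le 1$, so squaring only helps. For the second term you would need $c_n^2 t\lesssim 1$, i.e.\ $t\lesssim\lambda_n^2$, which is \emph{not} a hypothesis of the proposition — it is stated for all $t$ on the event of Proposition~\ref{pro: close}, and is in fact applied for $t\in[0,1]$ (with $c_n^2 = 64/\lambda_n^2$ possibly large). Your parenthetical ``in the early-stopping window of interest, $c_n^2 t\lesssim 1$'' therefore smuggles in a restriction the statement does not impose. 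The additive telescope sidesteps this entirely, because it never passes through $\sqrt{\cR}$: the term $\cE(\ba_t;B_0)-\cE(\ta_t;B_0)$ already comes out to first order in $\|\ba_t-\ta_t\|$. To repair your argument, replace the root-level triangle inequality by the paper's additive split (or, equivalently, expand $(f-f^*)^2$ around $(\tilde f(\cdot;\ta_t)-f^*)^2$ and bound the cross term directly).
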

%--------------------------------------------------------

\begin{proof}
%Recall that $\ta_t$ is the GD solution of reference model.  We make the following decomposition for the population risk of the deep net:
We write
\begin{equation}
\begin{aligned}
\cR(\ba_t,B_t,\theta_t) &= \cR(\ba_t,B_t,\theta_t) - \cR(\ba_t,B_0,0) + \cR(\ba_t,B_0,0) - \cR(\tilde{\ba}_t,B_0,0) \\
&\qquad + \cR(\ta_t,B_0,0)  \\
&= \cR(\ba_t,B_t,\theta_t) - \cE(\ba_t;B_0) + \cE(\ba_t;B_0) - \cE(\ta_t;B_0) + \cE(\ta_t;B_0).
\end{aligned}
\end{equation}
Let $\Delta_i = \tf(\bx_i;\ba_t,B_0)- f(\bx_i;\Theta_t)$. Since $\Theta_t\in \cI_{c_n}(\Theta_0)$, we  have
\[
|\Delta_i|\leq 6c_n^2 \|\ba_t\|/L\leq 6c_n^3/L^{3/2} \leq 1.
\]
Then
\begin{align*}
|\cR(\ba_t,B_t,\theta_t) - \cE(\ba_t;B_0)| &= |\frac{1}{2n}\sum_{i=1}^n (f(\bx_i;\Theta_t)-y_i)^2 - (f(\bx_i;\Theta_t) - y_i + \Delta_i)^2|\\
&= |\frac{1}{n}\sum_{i=1}^n e_i \Delta_i|  + |\frac{1}{2n}\Delta_i^2| \\
&\leq 2 \hat{\cR}_n(\ba_t,B_t,\theta_t) \max_{i}|\Delta_i| \\
&\leq 12\hat{\cR}_n(\Theta_0) c_n^3/L^{3/2} \leq 6c_n^3/L^{3/2},
\end{align*}
where we used the fact that $\hat{\cR}_n(\Theta_0)\leq 1/2$.
In addition, we have for the reference model,
\[
|\cE(\ba_t;B_0) - \cE(\ta_t;B_0)| \leq \|\ba_t-\ta_t\|\|B_0\| \le 3 c^2_n t.
\]
The proof is completed by combining these estimates.
\end{proof}
{
\paragraph*{Proof of Theorem~\ref{thm: generalization}}
By directly combining Theorem~\ref{thm: generalization-random-feature} and Proposition~\ref{pro: population-risk-diff}, we obtain Theorem ~\ref{thm: generalization}.
%From this result, we  see that there is a  trade-off between the various error terms.
%For example, the estimation error term $\frac{Lt}{\sqrt{n}}$ increases with  time but the optimization error term $\frac{\gamma^2(f^*)}{2Lt}$ decreases with the time.

\paragraph*{Proof of Corollary~\ref{col: gen-err}}
The condition on $L$ implies that $L\geq \max\{\sqrt{n},\gamma^2(f^*)\}$. Hence $T=\sqrt{n}/L\leq 1$. Using Theorem~\ref{thm: generalization}, we get
\begin{align*}
 \cR(\Theta_T)\leq \frac{1}{L^{3/2}\lambda_n^{2}}+ \frac{\sqrt{n}}{L\lambda_n^{2}} + \frac{\gamma^2(f^*)}{\sqrt{n}} + \left(1 + \frac{\gamma(f^*)}{\sqrt{L}}\right)^2 \frac{c^3(\delta)\gamma^2(f^*)}{\sqrt{n}}.
\end{align*}
Since $L\gtrsim \lambda_n^{-2} n$, the  above inequality can be simplified as
\[
\cR(\Theta_T) \lesssim c^3(\delta)\frac{\gamma^2(f^*)}{\sqrt{n}}.
\]

}

\section{Discrete Time Analysis}
In this section, we will analyze the discrete time gradient descent with a constant
learning rate $\eta$:
\begin{equation}
  \Theta_{t+1} = \Theta_t - \eta \nabla \hat{\cR}_n(\Theta_t), \quad t = 0, 1, 2, \dots
\end{equation}
The following theorem shows
that the linear convergence of empirical risk $\hat{\cR}_n(\Theta_t)$ still holds for the
discrete version.

\begin{theorem}\label{thm:discrete}
Consider the neural network model \eqref{eqn: net-arch} with
  initialization~\eqref{eqn:init}. For any fixed $\delta\in (0,1)$, assume that the depth $L \gtrsim
 \max\{ \lambda_n^{-2}  \ln(n^2 / \delta),\lambda_n^{-3}\}$ and the learning rate $\eta \lesssim
  \frac{\lambda_n}{L}$. Then  with probability at least
  $1-\delta$ over the random initialization, we have
  \begin{equation}
    \hcR(\Theta_t)
    \le {\left( 1 - \frac{\lambda_n \eta L}{8} \right)}^t \hcR(\Theta_0).
  \end{equation}
\end{theorem}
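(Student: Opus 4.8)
The plan is to mimic the continuous-time argument (Theorem~\ref{thm:continues_gd}), but carefully track the second-order terms that arise from discretization. As in the continuous case, I introduce the stopping time $t_0 \Def \inf\{t \in \mathbb{N} : \Theta_t \notin \cI_{c_n}(\Theta_0)\}$ with $c_n = 8/\lambda_n$, so that for $t < t_0$ all of the landscape estimates of Section~\ref{sec:landscape} apply: in particular Lemma~\ref{lemma: lower-bound} gives $\|\nabla \hcR(\Theta_t)\|^2 \geq \frac{\lambda_n L}{2}\hcR(\Theta_t)$, and Lemma~\ref{lemma: upper-bound} gives $\|\nabla \hcR(\Theta_t)\|^2 \lesssim \hcR(\Theta_t)$ (the dominant contribution being the $\nabla_{\al}$ terms, each bounded by $(1+50c_n^2/L)\hcR$, summed over $L$ layers, so $\|\nabla\hcR\|^2 \leq 2L\hcR$ once $L$ is large). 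The crux of the discrete analysis is a descent lemma: I want to show that for $t < t_0$,
\[
\hcR(\Theta_{t+1}) \leq \hcR(\Theta_t) - \eta\|\nabla\hcR(\Theta_t)\|^2 + \frac{\eta^2}{2} M \|\nabla\hcR(\Theta_t)\|^2,
\]
where $M$ is an effective smoothness constant for $\hcR$ restricted to the segment between $\Theta_t$ and $\Theta_{t+1}$. For this I would first verify the segment stays inside a slightly enlarged neighborhood $\cI_{2c_n}(\Theta_0)$ — which holds because one GD step moves $\Theta$ by at most $\eta\|\nabla\hcR\| \lesssim \eta\sqrt{L\hcR} \lesssim \eta\sqrt{L}$, and $\eta \lesssim \lambda_n/L$ makes this $\ll c_n/L$. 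Then I bound the Hessian (or just the change in gradient along the segment) using the forward/backward stability propositions: since all neuron values and neuron-gradients are within $O(c^2/L)$ of their initial values throughout $\cI_{2c_n}(\Theta_0)$, and each entry of $\nabla^2\hcR$ is a sum over $n$ samples and over layers of products of these bounded quantities (plus terms linear in the residual $e_i$), one gets $M \lesssim L$. Combining, $\hcR(\Theta_{t+1}) \leq \hcR(\Theta_t) - \eta\|\nabla\hcR(\Theta_t)\|^2(1 - \tfrac{\eta M}{2})$, and since $\eta M \lesssim \eta L \lesssim \lambda_n \leq 1$, the factor $(1 - \eta M/2) \geq 1/2$, so
\[
\hcR(\Theta_{t+1}) \leq \hcR(\Theta_t) - \frac{\eta}{2}\|\nabla\hcR(\Theta_t)\|^2 \leq \left(1 - \frac{\eta\lambda_n L}{4}\right)\hcR(\Theta_t),
\]
using the lower bound. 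Iterating gives the claimed geometric decay (the constant $8$ rather than $4$ in the exponent absorbs the precise value of the absolute constants hidden in "$\lesssim$").

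It then remains to close the bootstrap by showing $t_0 = \infty$. Suppose $t_0 < \infty$; I estimate, for each layer $l$,
\[
\|\al_{t_0} - \al_0\| \leq \eta \sum_{s=0}^{t_0-1}\|\nabla_{\al}\hcR(\Theta_s)\| \leq \eta \sum_{s=0}^{\infty}\sqrt{(1+50c_n^2/L)\hcR(\Theta_s)} \leq \eta\sqrt{2\hcR(\Theta_0)}\sum_{s=0}^\infty\left(1-\tfrac{\eta\lambda_n L}{8}\right)^{s/2},
\]
and the geometric sum is $\leq \frac{2}{\eta\lambda_n L/8} \cdot$ (constant) $= O(1/(\eta\lambda_n L))$, so $\|\al_{t_0}-\al_0\| \lesssim \frac{1}{\lambda_n L} \ll c_n/L$ once the absolute constants are checked; similarly for $\rl$, and for $\Cl, \Bl$ one gains an extra factor $c_n/\sqrt{L}$ from Lemma~\ref{lemma: upper-bound}, so those deviations are even smaller. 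Hence $\Theta_{t_0}$ lies in the interior of $\cI_{c_n}(\Theta_0)$, contradicting the definition of $t_0$. This is exactly the discrete analogue of the contradiction argument in the proof of Theorem~\ref{thm:continues_gd}, with integrals replaced by geometric series.

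The main obstacle is the descent lemma, specifically getting a clean bound $M \lesssim L$ on the smoothness of $\hcR$ within the neighborhood — one must check that the Hessian does not blow up with depth. The key point making this work is the architecture's skip-connections: the forward variables $\zl$ and the backward variables $\alphal, \betal, \gammal$ are all $O(1)$ (indeed $O(c/L)$-close to their trivial initial values) uniformly in $l$, so second derivatives of $f(\bx;\Theta)$ with respect to two parameters in different layers are controlled, and summing the resulting $n \times L^2$-type expression against the residual vector produces only a factor $L$ (not $L^2$ or worse). Once $M \lesssim L$ is in hand, the constraint $\eta \lesssim \lambda_n/L$ is exactly what is needed to make the discretization error harmless, and everything else is a routine translation of the continuous-time proof. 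A minor additional care point is that one should state the bootstrap over the enlarged set $\cI_{2c_n}(\Theta_0)$ so that a full step taken from the boundary region of $\cI_{c_n}$ still enjoys the stability estimates; this costs only harmless constant factors in all the bounds above.
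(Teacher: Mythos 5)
Your overall scaffold matches the paper's: both set up an induction/bootstrap showing $\Theta_t$ stays in $\cI_{c}(\Theta_0)$ with $c\asymp 1/\lambda_n$, prove geometric decay one step at a time inside that set, and then close the bootstrap via a geometric-series bound on $\sum_t\|\nabla_{\al}\hat{\cR}_n(\Theta_t)\|$ (your last display is essentially Lemma~\ref{lem:discrete_converge2neighbor} verbatim). Where you diverge is the one-step descent. You propose the generic smoothness inequality
\[
\hcR(\Theta_{t+1}) \le \hcR(\Theta_t) - \eta\,\|\nabla\hcR(\Theta_t)\|^2 + \tfrac{\eta^2 M}{2}\,\|\nabla\hcR(\Theta_t)\|^2,
\]
with an asserted bound $M\lesssim L$ on the operator norm of $\nabla^2\hcR$ along the segment. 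The paper never computes (or bounds) the Hessian. Instead it uses the algebraic identity $f(\bx;\Theta)=\sum_{l}\langle\al,\gl(\bx)\rangle$ to write $\hcR(\Theta_{t+1})-\hcR(\Theta_t)=I_1+I_2+I_3$ (Lemma~\ref{lem:discrete_neighbor2converge}), where $I_1$ is the change coming only from updating the outer weights $\al$ against frozen features $\gl_t$, $I_2$ captures the effect of the feature drift $\gl_{t+1}-\gl_t$, and $I_3$ is the quadratic term. Then $I_1$ is controlled by positivity of the Gram matrix $\tilde H(\Theta_t)$, and $I_2, I_3$ are controlled via Lemma~\ref{lem:discrete_step}, which is a first-order recursion bounding $\|\Delta\gl\|\lesssim c\eta\sqrt{\hcR(\Theta_t)}$. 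No Hessian is needed, and only the two endpoints $\Theta_t,\Theta_{t+1}$ (not the whole segment) need to lie in $\cI_c(\Theta_0)$.

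The genuine gap in your proposal is exactly the step you flag as the obstacle: $M\lesssim L$ is not a consequence of the forward/backward stability propositions (\ref{pro: forward-stability}, \ref{pro: backward-stability}). Those control the \emph{values} of $\zl,\yl,\gl,\alphal,\betal,\gammal$ inside $\cI_c(\Theta_0)$, but the Hessian of $\hcR$ contains $\frac1n\sum_i e_i\,\nabla^2 f(\bx_i;\Theta)$, whose entries are \emph{derivatives} of $\gl,\alphal,\gammal$ with respect to the parameters. Bounding those would require a new family of second-order stability estimates tracking how the forward and backward variables respond to perturbations in $\Cl,\Bl,\rl,\al$ across layers --- a computation comparable in length to the paper's Lemma~\ref{lem:discrete_step}, and not something you can read off the existing lemmas. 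Your heuristic ``sum of products of bounded quantities'' does not account for the cross-layer coupling in $\nabla^2 f$ (e.g.\ $\partial^2 f/\partial\ba^{(l)}\partial\ba^{(l')}$, $\partial^2 f/\partial\ba^{(l)}\partial B^{(l')}$), which is precisely the point where a depth-dependent blow-up could occur and must be ruled out. The paper's route is preferable here because Lemma~\ref{lem:discrete_step} is a one-sided recursion on $\|\Delta\gl\|$ that only needs the already-proved first-order stability, and because the $I_1,I_2,I_3$ decomposition turns the Gram-matrix lower bound (Lemma~\ref{lemma: gram-matrix-minimum-eigval}) directly into descent without passing through a generic strong-smoothness constant. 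If you wish to salvage your route, the missing lemma is a bound of the form $\|\nabla^2 f(\bx;\Theta)\|_{\mathrm{op}}\lesssim 1$ uniformly over $\cI_{2c}(\Theta_0)$, and you should prove it before claiming the descent lemma.
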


%First we analyze the landscape around the initialization. Besides the results in
%in Section~\ref{sec:landscape}, we need to estimate the changes in the values of
%network weights and neurons.
To prove this theorem, we need several preliminary results.

\begin{lemma}\label{lem:discrete_step}
  Assume that $L \gtrsim c^2$, and
  % the parameters stay in the neighborhood from
  %iteration $t$ to $t+1$, i.e.,
  $\Theta_t, \Theta_{t+1} \in \cI_c(\Theta_0)$.
 % as defined in~\eqref{eqn:neighbor},
  Then for any $\bx\in \SS^{d-1}$,
  \begin{equation}
    \|\gl_{t+1}(\bx) - \gl_t(\bx)\| \lesssim c \eta \sqrt{\hat{\cR}_n(\Theta_t)}.
  \end{equation}
\end{lemma}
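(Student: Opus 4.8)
\textbf{Proof plan for Lemma~\ref{lem:discrete_step}.}
The plan is to write $\gl_{t+1}(\bx)-\gl_t(\bx) = \sigma(V^{(l)}_{t+1}\bh^{(l)}_{t+1})-\sigma(V^{(l)}_t\bh^{(l)}_t)$, use the $1$-Lipschitz property of $\sigma$ to reduce to bounding $\|V^{(l)}_{t+1}\bh^{(l)}_{t+1}-V^{(l)}_t\bh^{(l)}_t\|$, and then split this into the change coming from $V^{(l)}$ (i.e. from $\Bl,\rl$) and the change coming from $\bh^{(l)}$ (i.e. from $\zl,\yl$). For the first part, one gradient step moves $\Bl$ and $\rl$ by $-\eta\nabla_{\Bl}\hat{\cR}_n$ and $-\eta\nabla_{\rl}\hat{\cR}_n$, and Lemma~\ref{lemma: upper-bound} (applicable since $\Theta_t\in\cI_c(\Theta_0)$ and $L\gtrsim c^2$) bounds these by $\tfrac{\sqrt{20}c}{L}\sqrt{\hat{\cR}_n(\Theta_t)}$ and $\sqrt{\hat{\cR}_n(\Theta_t)}$ respectively; combined with the forward-stability bounds $\|\zl\|\le 1+4c/L\lesssim 1$ and $|\yl|\le 4c$ from Proposition~\ref{pro: forward-stability}, this contributes $O(c\eta\sqrt{\hat{\cR}_n(\Theta_t)})$.

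The second part, the change in $\bh^{(l)}$, I would handle by induction on $l$. Writing $\Delta^{(l)}_z = \|\zl_{t+1}-\zl_t\|$ and $\Delta^{(l)}_y = |\yl_{t+1}-\yl_t|$, the forward recursion~\eqref{eqn: forward-prop} gives a relation of the form
\[
\Delta^{(l+1)}_z + \Delta^{(l+1)}_y \le (1+O(\varepsilon_c^2))(\Delta^{(l)}_z+\Delta^{(l)}_y) + (\text{one-step parameter change at layer } l),
\]
where the per-layer parameter-change term is $O(c\eta\sqrt{\hat{\cR}_n(\Theta_t)}/L)$ by the argument above (the extra $1/L$ coming from $\|C^{(l)}\| , \|\ba^{(l)}\|$-type factors and the $\nabla_{\Bl},\nabla_{\Cl}$ bounds in Lemma~\ref{lemma: upper-bound}, together with the $\ba^{(l)}$-factor being $O(c/L)$). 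Here I must be a little careful: the coefficient $C^{(l)}$ multiplying the $\sigma$-difference is itself $O(c/L)$ in norm (it starts at $0$ and stays in $\cI_c$), and the $\ba^{(l)}$ coefficient likewise, so propagating the hidden-state perturbation through one layer only amplifies it by $1+O(\varepsilon_c)$ or better, not by a constant factor. Summing the geometric-type recursion over the $L$ layers, the accumulated bound is $L\cdot O(c\eta\sqrt{\hat{\cR}_n(\Theta_t)}/L)\cdot e^{O(L\varepsilon_c^2)} = O(c\eta\sqrt{\hat{\cR}_n(\Theta_t)})$, exactly as in the proof of Proposition~\ref{pro: forward-stability}.

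Finally I would assemble the pieces: at layer $l$, $\|V^{(l)}_{t+1}\bh^{(l)}_{t+1}-V^{(l)}_t\bh^{(l)}_t\|\le \|V^{(l)}_{t+1}-V^{(l)}_t\|\,\|\bh^{(l)}_{t+1}\| + \|V^{(l)}_t\|\,\|\bh^{(l)}_{t+1}-\bh^{(l)}_t\|$, and both terms are $O(c\eta\sqrt{\hat{\cR}_n(\Theta_t)})$ by the two estimates above (using $\|\bh^{(l)}\|\le 1+4c$ and $\|V^{(l)}_t\|\le 1+O(c/L)$). This gives $\|\gl_{t+1}(\bx)-\gl_t(\bx)\|\lesssim c\eta\sqrt{\hat{\cR}_n(\Theta_t)}$. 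The main obstacle is bookkeeping the induction in the second part so that the per-layer contributions genuinely carry a $1/L$ factor and the amplification per layer is $1+O(\varepsilon_c^2)$ rather than a constant $>1$ — this is what prevents an exponential-in-$L$ blow-up and is precisely where the skip-connection structure (the identity map on the first $d$ coordinates, and $C^{(l)},\ba^{(l)}$ being $O(1/L)$) is used; the hypothesis $\Theta_{t+1}\in\cI_c(\Theta_0)$ is what lets me invoke the forward-stability bounds at time $t+1$ as well.
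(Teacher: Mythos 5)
Your overall decomposition — Lipschitz on $\sigma$, then split $V^{(l)}\bh^{(l)}$ into a parameter-change piece and a hidden-state-change piece, then induct on $l$ — is the same strategy the paper uses. But there is a genuine gap in the middle step: the claimed per-layer increment $O(c\eta\sqrt{\hat\cR_n(\Theta_t)}/L)$ is wrong for the $y$-coordinate. The forward recursion for $y$ gives $\Delta y^{(l+1)} - \Delta y^{(l)} = (\Delta\al)^T\gl_{t+1} + (\al_t)^T\Delta\gl$. You correctly note that $\|\al_t\|\lesssim c/L$, but that factor multiplies $\Delta\gl$, not $\Delta\al$; the first summand is $\|\Delta\al\|\,\|\gl_{t+1}\| = \eta\|\nabla_{\al}\hat\cR_n\|\cdot O(1) \lesssim \eta\sqrt{\hat\cR_n(\Theta_t)}$, with no $1/L$ suppression. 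So $\Delta_y^{(l)}$ accumulates $\eta\sqrt{\hat\cR_n(\Theta_t)}$ per layer and is genuinely of size $O(L\eta\sqrt{\hat\cR_n(\Theta_t)})$ (this is consistent with the paper's later estimate $|f(\bx;\Theta_{t+1})-f(\bx;\Theta_t)|\lesssim \eta L\sqrt{\hat\cR_n(\Theta_t)}$, since $f=y^{(L)}$). Consequently, the lumped quantity $\Delta_z^{(l)}+\Delta_y^{(l)}$ does not stay $O(c\eta\sqrt{\hat\cR_n})$, and your final assembly step $\|V^{(l)}_t\|\,\|\Delta\bh^{(l)}\|$ with $\|V^{(l)}_t\|\approx 1$ would then overshoot by a factor of $L/c$.

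What rescues the argument is that the large quantity $\Delta\yl$ feeds back into $\Delta\gl$ only through the coefficient $\rl_t$, which satisfies $\|\rl_t\|\leq c/L$. You must therefore not lump $\Delta_z$ and $\Delta_y$ together and then multiply by $\|V^{(l)}_t\|$; instead split $\|V^{(l)}_t\Delta\bh^{(l)}\|\leq \|\Bl_t\|\,\|\Delta\zl\| + \|\rl_t\|\,|\Delta\yl|$, so the $O(L\eta\sqrt{\hat\cR_n})$-sized $\Delta\yl$ is tamed to $O(c\eta\sqrt{\hat\cR_n})$ by the factor $c/L$. This is precisely the bookkeeping in the paper's proof: it keeps the three recursions for $\|\Delta\zl\|$, $|\Delta\yl|$, and $\|\Delta\gl\|$ separate, bounds $|\Delta\yl|$ by the telescoping sum (which gives the $L\eta\sqrt{\hat\cR_n}$ size), and then inserts it into the $\|\Delta\gl\|$ recursion with the $c/L$ prefactor to close the induction. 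Your structure is otherwise sound and this is the only place where your plan would fail as stated.
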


\begin{proof}
  We use $\Delta$ to denote the increment of a vector or matrix from
  iteration $t$ to $t+1$, i.e., $\Delta \bv = \bv_{t+1} - \bv_t$.

  For the weights $\al$, $\Bl$, $\Cl$ and $\rl$, from Lemma~\ref{lemma:
  upper-bound}, we have
  \begin{gather*}
    \|\Delta \al\|^2 = \eta^2 {\|\nabla_{\al} \hat{\cR}_n(\Theta_t)\|}^2
    \lesssim \eta^2 \hcR(\Theta_t), \quad
    \|\Delta \Bl\|_F^2 \gtrsim \frac{c^2 \eta^2}{L^2} \hat{\cR}_n(\Theta_t), \\
    \|\Delta \Cl\|_F^2 \gtrsim \frac{c^2 \eta^2}{L^2} \hcR(\Theta_t), \quad
    \|\Delta \rl\|^2 \gtrsim \frac{c^4 \eta^2}{L^2} \hcR(\Theta_t).
  \end{gather*}

  For the neurons $\zl$, $\yl$ and $\gl$, since $\Theta_t, \Theta_{t+1} \in
  \cI_c(\Theta_0)$, we have
  \begin{align}
    \|\Delta \bz^{(l+1)}\|
    & = \left\| \Cl_{t+1} \gl_{t+1} - \Cl_t \gl_t \right\| \nonumber \\
    & \le \|\Cl_t\| \|\gl_{t+1} - \gl_t\|
    + \|\Cl_{t+1} - \Cl_t\| \|\gl_{t+1}\| \nonumber \\
    & \lesssim \frac{c}{L} \|\Delta \gl\|
    + \frac{c \eta}{L} \sqrt{\hcR(\Theta_t)};
    \label{eqn:delta_al}
  \end{align}
  \begin{align}
    \|\Delta y^{(l+1)}\|
    & = \left\| \Delta \yl + \al_{t+1} \gl_{t+1} - \al_t \gl_t \right\|
    \nonumber \\
    & \le \|\Delta \yl\| + \|\al_t\| \|\gl_{t+1} - \gl_t\|
    + \|\al_{t+1} - \al_t\| \|\gl_{t+1} \| \nonumber \\
    & \lesssim \|\Delta \yl\| + \frac{c}{L} \|\Delta \gl\|
    + \eta \sqrt{\hcR(\Theta_t)};
    \label{eqn:delta_yl}
  \end{align}
  \begin{align}
    \|\Delta \gl\|
    & = \left\| \sigma\left( \Bl_{t+1} \zl_{t+1} + \rl_{t+1} \yl_{t+1} \right)
    - \sigma\left( \Bl_t \zl_t + \rl_t \yl_t \right) \right\| \nonumber \\
    & \le \left\| \Bl_{t+1} \zl_{t+1} + \rl_{t+1} \yl_{t+1}
    - \Bl_t \zl_t - \rl_t \yl_t \right\| \nonumber \\
    & \le \|\Bl_t\| \|\Delta \zl\| + \|\Delta \Bl\| \|\zl_{t+1}\|
    + \|\rl_t\| \|\Delta \yl\| + \|\Delta \rl\| \|\yl_{t+1}\| \nonumber \\
    & \lesssim \|\Delta \zl\| + \frac{c \eta}{L} \sqrt{\hcR(\Theta_t)}
    + \frac{c}{L} \|\Delta \yl\| + \frac{c^3 \eta}{L} \sqrt{\hcR(\Theta_t)}.
    \label{eqn:delta_gl}
  \end{align}
  Plugging~\eqref{eqn:delta_al}~\eqref{eqn:delta_yl} in~\eqref{eqn:delta_gl}, and
  using $\|\Delta \yl\| = \sum_{k=0}^{l-1} \left[ \|\Delta y^{(k+1)}\| - \|\Delta
  y^{(k)}\| \right]$ (since $\Delta y^{(0)} = 0$), we obtain
  \begin{align*}
    \|\Delta \gl\|
    & \lesssim \|\Delta \zl\| + \frac{c}{L} \|\Delta \yl\|
    + \frac{c^3 \eta}{L} \sqrt{\hcR(\Theta_t)} \\
    & \lesssim \left[ \frac{c}{L} \|\Delta \bg^{(l-1)}\|
    + \frac{c \eta}{L} \sqrt{\hcR(\Theta_t)} \right]
    + \frac{c}{L} \sum_{k=0}^{l-1} \left[ \frac{c}{L} \|\Delta \bg^{(k)}\|
    + \eta\sqrt{\hcR(\Theta_t)} \right]
    + \frac{c^3 \eta}{L} \sqrt{\hcR(\Theta_t)} \\
    & \lesssim \frac{c}{L} \|\Delta \bg^{(l-1)}\|
    + \frac{c^2}{L^2} \sum_{k=0}^{l-1} \|\Delta \bg^{(k)}\|
    + \left(\frac{c \eta}{L} + c \eta + \frac{c^3 \eta}{L} \right)
    \sqrt{\hcR(\Theta_t)} \\
    & \lesssim \frac{c}{L} \|\Delta \bg^{(l-1)}\|
    + \frac{c^2}{L^2} \sum_{k=0}^{l-1} \|\Delta \bg^{(k)}\|
    + c \eta \sqrt{\hcR(\Theta_t)}.
  \end{align*}
  Since $L \gtrsim c^2$, by induction, we have $\|\Delta \gl\| \lesssim c
  \eta \sqrt{\hcR(\Theta_t)}$ for $l = 0, 1, \dots, L-1$.
\end{proof}

%Following the intuition in Lemma~\ref{pro: GD-general}, we will show that:
%\begin{enumerate}
 % \item In a neighborhood of the specific initialization, the empirical risk
%    maintains linear convergence behavior.
%  \item Under the linear convergence, the parameters will stay in the
 %   neighborhood of the initialization during the updates.
%\end{enumerate}
%Note that the randomness $1 - \delta$ in Theorem~\ref{thm:discrete} only comes
%from the initialization, so we will first assume that $\lambda' =
%\lambda_{\min}(H(\Theta_0)) > 0$ where the Gram matrix $H$ is defined
%in~\eqref{eqn:gram}, and use $\lambda'$ instead of $\lambda_n$ to describe the
%convergence rate.

\begin{lemma}\label{lem:discrete_neighbor2converge}
  Assume that $\lambda' = \lambda_{\min}(H(\Theta_0)) > 0$. Assume further that $L
  \gtrsim c^2  / \lambda'$ and $\eta \lesssim \lambda' / L$.
  %From interation $t$ to $t+1$, if the parameters
  If $\Theta_t, \Theta_{t+1} \in \cI_{c}(\Theta_0)$, then
    \begin{equation}
    \hcR(\Theta_{t+1})
    \le \left( 1 - \frac{\lambda' \eta L}{4} \right) \hcR(\Theta_t).
  \end{equation}
\end{lemma}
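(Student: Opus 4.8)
The plan is to reproduce, at the level of a single discrete step, the descent estimate that in continuous time follows from Lemma~\ref{lemma: lower-bound}, and to absorb the $O(\eta^2)$ overshoot using Lemma~\ref{lemma: upper-bound}. Write $\hat{\be}_t=(f(\bx_i;\Theta_t)-y_i)_{i=1}^n$, so $2n\hat{\cR}_n(\Theta_t)=\|\hat{\be}_t\|^2$, and set $\xi_i\Def f(\bx_i;\Theta_{t+1})-f(\bx_i;\Theta_t)$, $\bm{\xi}=(\xi_i)_i$, $\Delta\Theta=\Theta_{t+1}-\Theta_t$. Then $2n\hat{\cR}_n(\Theta_{t+1})=\|\hat{\be}_t\|^2+2\langle\hat{\be}_t,\bm{\xi}\rangle+\|\bm{\xi}\|^2$, so it suffices to prove $2\langle\hat{\be}_t,\bm{\xi}\rangle+\|\bm{\xi}\|^2\le-\tfrac14\lambda'\eta L\|\hat{\be}_t\|^2$. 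Since $\cI_c(\Theta_0)$ is a product of balls it is convex, hence the whole segment $[\Theta_t,\Theta_{t+1}]$ lies in $\cI_c(\Theta_0)$ and the stability estimates of Propositions~\ref{pro: forward-stability}, \ref{pro: backward-stability} and Lemmas~\ref{lemma: upper-bound}, \ref{lem:discrete_step} (the latter's proof adapts to intermediate points, whose increments are scaled by $s\le 1$) apply everywhere along it.

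Next I would isolate the dominant part of $\bm\xi$. Since $\Theta\mapsto f(\bx_i;\Theta)$ is piecewise smooth, $\xi_i=\int_0^1\langle\nabla_\Theta f(\bx_i;\Theta_t+s\Delta\Theta),\Delta\Theta\rangle\,ds$ with $\Delta\Theta=-\tfrac{\eta}{n}\sum_j e_j(\Theta_t)\nabla_\Theta f(\bx_j;\Theta_t)$. Splitting the inner product into the blocks $\{\al\}$ and $\{\Bl,\Cl,\rl\}$, replacing the intermediate point by $\Theta_t$ inside the $\{\al\}$-part, and recognizing the Gram matrix $H$ of \eqref{eqn:gram} (so that $\sum_l\langle\nabla_{\al}f(\bx_i;\Theta_t),\nabla_{\al}f(\bx_j;\Theta_t)\rangle=nL\,H_{ij}(\Theta_t)$), one obtains $\bm\xi=-\eta L\,H(\Theta_t)\hat{\be}_t+\bm\zeta$, where the remainder $\bm\zeta$ collects: (i) the entire contribution of the $\{\Bl,\Cl,\rl\}$-blocks, and (ii) the change of $\nabla_{\al}f(\bx_i;\cdot)=\alpha^{(l)}\gl$ between the intermediate point and $\Theta_t$.

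The estimate then splits into three pieces. First, the proof of Lemma~\ref{lemma: gram-matrix-minimum-eigval} already gives $|H_{ij}(\Theta)-H_{ij}(\Theta_0)|\le 50c^2/(nL)$ on $\cI_c(\Theta_0)$, hence $\|H(\Theta_t)-H(\Theta_0)\|_2\le 50c^2/L$, and because $L\gtrsim c^2/\lambda'$ we get $\lambda_{\min}(H(\Theta_t))\ge\lambda'-50c^2/L\ge\lambda'/2$; thus $2\langle\hat{\be}_t,-\eta L H(\Theta_t)\hat{\be}_t\rangle\le-\eta L\lambda'\|\hat{\be}_t\|^2$. Second, Propositions~\ref{pro: forward-stability}, \ref{pro: backward-stability} bound $\gl,\alpha^{(l)},\gammal,\betal,\yl$ along the segment; in particular $\|\nabla_{\Bl}f\|_F,\|\nabla_{\Cl}f\|_F\lesssim c/L$ and, since $|\yl|$ may be of order $c$, $\|\nabla_{\rl}f\|\lesssim c^2/L$, so $\sum_l(\|\nabla_{\Bl}f\|_F^2+\|\nabla_{\Cl}f\|_F^2+\|\nabla_{\rl}f\|^2)\lesssim c^4/L$, while Lemma~\ref{lemma: upper-bound} gives $\|\Delta\Theta\|=\eta\|\nabla\hat{\cR}_n(\Theta_t)\|\lesssim\eta\sqrt{L}\sqrt{\hat{\cR}_n(\Theta_t)}$ and Lemma~\ref{lem:discrete_step} and its proof control the change of $\gl,\yl$, hence of $\nabla_{\al}f$, along the step; combining these yields $|\zeta_i|\lesssim c^2\eta\sqrt{\hat{\cR}_n(\Theta_t)}$, whence $2|\langle\hat{\be}_t,\bm\zeta\rangle|\le 2\|\hat{\be}_t\|\|\bm\zeta\|\lesssim c^2\eta\|\hat{\be}_t\|^2$ (using $n\hat{\cR}_n(\Theta_t)=\tfrac12\|\hat{\be}_t\|^2$). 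Third, summing the recursion for $\Delta y^{(l)}$ in the proof of Lemma~\ref{lem:discrete_step} gives $|\xi_i|=|\Delta y^{(L)}(\bx_i)|\lesssim L\eta\sqrt{\hat{\cR}_n(\Theta_t)}$, so $\|\bm\xi\|^2\lesssim\eta^2 L^2\|\hat{\be}_t\|^2$.

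Putting the three bounds together, $2n\hat{\cR}_n(\Theta_{t+1})\le(1-\eta L\lambda')\|\hat{\be}_t\|^2+A(c^2\eta+\eta^2 L^2)\|\hat{\be}_t\|^2$ for an absolute constant $A$; choosing the implied constants in $L\gtrsim c^2/\lambda'$ and $\eta\lesssim\lambda'/L$ so that $Ac^2/L\le\tfrac38\lambda'$ and $A\eta L\le\tfrac38\lambda'$ makes the correction at most $\tfrac34\eta L\lambda'\|\hat{\be}_t\|^2$, and the claim follows after dividing by $2n$. I expect the main obstacle to be part (ii) of the remainder bound, namely showing that the combined effect on each prediction of updating $\Bl,\Cl,\rl$ together with the nonlinearity of the $\al$-update is a factor $L$ smaller than the dominant $\al$-linear contribution $-\eta L H(\Theta_t)\hat{\be}_t$; this is exactly where the skip-connection-induced forward/backward stability (Propositions~\ref{pro: forward-stability}–\ref{pro: backward-stability}) and the depth condition $L\gtrsim c^2/\lambda'$ are really used, so the constants must be tracked with care.
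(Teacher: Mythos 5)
Your proposal follows the same overall strategy as the paper: split the one‑step change of $\hcR$ into a linear-in-$\al$ term governed by a Gram matrix (giving the $-\lambda'\eta L$ contraction), a stability remainder of size $O(c^2\eta\,\hcR)$, and a quadratic overshoot of size $O(\eta^2L^2\hcR)$, then use $L\gtrsim c^2/\lambda'$ and $\eta\lesssim\lambda'/L$ to make the last two subdominant. The only real structural difference is how the decomposition is produced. The paper exploits the exact bilinear identity $f(\bx;\Theta)=\sum_{l}\langle\al,\gl(\bx)\rangle$, so that the split into $I_1+I_2$ is algebraic and the remainder $I_2$ involves \emph{only} $\gl_{t+1}-\gl_t$ and $\al_{t+1}$; no estimate on the backward variables is needed beyond their uniform bounds. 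You instead use a Taylor expansion with integral remainder and then freeze the $\al$-block at $\Theta_t$; as a by-product your Gram matrix is the symmetric $H$ of \eqref{eqn:gram} (with two factors $\alphal$) rather than the paper's asymmetric $\tilde H$ — both work, since both are $O(c^2/L)$-perturbations of $H(\Theta_0)$.

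There is a small gap in your remainder estimate. You write that Lemma~\ref{lem:discrete_step} controls the change of $\gl,\yl$, ``hence of $\nabla_{\al}f$,'' but $\nabla_{\al}f=\alphal\gl$ also involves the backward variable $\alphal$, whose variation along the segment is \emph{not} controlled by Lemma~\ref{lem:discrete_step}. Your decomposition therefore produces an extra piece of the form
\[
\int_0^1\sum_{l}\bigl(\alphal_s-\alphal_t\bigr)\,\langle \gl_t,\Delta\al\rangle\,ds ,
\]
which needs a separate bound. Fortunately this is easy to repair: Proposition~\ref{pro: backward-stability} gives $|\alphal-1|\le 5c/L$ uniformly on $\cI_c(\Theta_0)$, hence $|\alphal_s-\alphal_t|\le 10c/L$, and together with $\|\Delta\al\|\lesssim\eta\sqrt{\hcR}$ and $\|\gl_t\|\lesssim1$ this piece contributes at most $O(c\eta\sqrt{\hcR})\le O(c^2\eta\sqrt{\hcR})$ to each $\zeta_i$, consistent with your claimed bound. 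So the conclusion stands, but the justification as written skips over the $\alphal$-variation, which is exactly the step the paper's algebraic identity is designed to avoid. Everything else (convexity of $\cI_c(\Theta_0)$, the $\{\Bl,\Cl,\rl\}$-block contributions, the Gram-matrix eigenvalue bound, and the quadratic term) is correct and matches the paper's estimates.
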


\begin{proof}
    \begin{multline*}
    \hcR(\Theta_{t+1}) - \hcR(\Theta_t)
    = \frac{1}{2n} \sum_{i=1}^n \left[ {[f(\bx_i; \Theta_{t+1}) - y_i]}^2
    - {[f(\bx_i; \Theta_t) - y_i]}^2 \right] \\
    =\frac{1}{n} \sum_{i=1}^n
    [f(\bx_i; \Theta_t) - y_i] [f(\bx_i; \Theta_{t+1}) - f(\bx_i; \Theta_t)]
    +\frac{1}{2n} \sum_{i=1}^n
    {[f(\bx_i; \Theta_{t+1}) - f(\bx_i; \Theta_t)]}^2.
  \end{multline*}
  Notice that
  \[
    f(\bx; \Theta) = \sum_{l=0}^{L-1} \left\langle \al, \gl(\bx) \right\rangle,
  \]
  we can write
  \[
    \hcR(\Theta_{t+1}) - \hcR(\Theta_t) = I_1 + I_2 + I_3,
  \]
  where
  \begin{equation}\begin{aligned}
    I_1 & = \frac{1}{n} \sum_{i=1}^n [f(\bx_i; \Theta_t) - y_i]
    \sum_{l=0}^{L-1} \left\langle \al_{t+1} - \al_t, \gl_t(\bx_i) \right\rangle,
    \\
    I_2 & = \frac{1}{n} \sum_{i=1}^n [f(\bx_i; \Theta_t) - y_i]
    \sum_{l=0}^{L-1}
    \left\langle \al_{t+1}, \gl_{t+1}(\bx_i) - \gl_t(\bx_i) \right\rangle, \\
    I_3 & =\frac{1}{2n} \sum_{i=1}^n
    {[f(\bx_i; \Theta_{t+1}) - f(\bx_i; \Theta_t)]}^2.
  \end{aligned}\label{eqn:d_risk}\end{equation}
  We will show that $I_1$ is the dominant term that leads to linear convergence,
  and $I_2$, $I_3$ are high order terms.

  For  $I_{1}$, since
  \[
    \al_{t+1} - \al_t = - \eta \nabla_{\al} \hcR(\Theta_t)
    = -\frac{\eta}{2n} \sum_{j=1}^n
    [f(\bx_j; \Theta_t) - y_j] \alphal_t \gl_t(\bx_j),
  \]
  we have
  \[
    I_1 = -\frac{\eta L}{2 n} \sum_{i,j=1}^n
    [f(\bx_i; \Theta_t) - y_i] [f(\bx_j; \Theta_t) - y_j]
    \tilde{H}_{i,j}(\Theta_t),
  \]
  where
  \[
    \tilde{H}_{i,j}(\Theta_t) = \frac{1}{L} \sum_{l=0}^{L-1}
    \left\langle \alphal_t \gl_t(\bx_j), \gl_t(\bx_i) \right\rangle.
  \]
  Following the proof of Lemma~\ref{lemma: gram-matrix-minimum-eigval}, for
  $\Theta_t \in \cI_c(\Theta_0)$, we have
  \[
    |\tilde{H}_{i,j}(\Theta_t) - H_{i,j}(\Theta_0)|
    = \frac{1}{L}\sum_{l=0}^{L-1} \left[
    \left\langle \alphal_t \gl_t(\bx_j), \gl_t(\bx_i) \right\rangle
    - \left\langle \gl_0 (\bx_j),\gl_0(\bx_i) \right\rangle \right]
    \lesssim \frac{c^2}{L},
  \]
  and $\lambda_{\min}(\tilde{H}(\Theta_t)) \ge \lambda_{\min}(H(\Theta_{0})) / 2
  = \lambda'/2$ since $L \gtrsim c^2 / \lambda'$. Therefore,  we have
  \begin{equation}
    I_1 \le -\frac{\eta L}{2 n} \cdot \frac{\lambda'}{2}
    \sum_{i=1}^n {[f(\bx_i; \Theta_t) - y_i]}^2
    = -\frac{\lambda' \eta L}{2} \hcR(\Theta_t).
    \label{eqn:d_risk_i1}
  \end{equation}

  For  $I_2$ and $I_3$, since $\Theta_t, \Theta_{t+1} \in
  \cI_{c}$, Lemma~\ref{lemma: upper-bound} implies that $\|\al_{t+1} - \al_t\| = \eta
  \|\nabla_{\al} \hcR(\Theta_t)\| \lesssim \eta \sqrt{\hcR(\Theta_t)}$, and
  Lemma~\ref{lem:discrete_step} implies that $\|\gl_{t+1} - \gl_t\| \lesssim c \eta
  \sqrt{\hcR(\Theta_t)}$.  We have
  \begin{align*}
    I_2 & =\frac{1}{n} \sum_{i=1}^n [f(\bx_i; \Theta_t) - y_i]
    \sum_{l=0}^{L-1}
    \left\langle \al_{t+1}, \gl_{t+1}(\bx_i) - \gl_t(\bx_i) \right\rangle \\
    & \le \frac{1}{n} \sum_{i=1}^n |f(\bx_i; \Theta_t) - y_i|
    \sum_{l=0}^{L-1} \|\al_{t+1}\| \|\gl_{t+1}(\bx_i) - \gl_t(\bx_i)\| \\
    & \lesssim \sqrt{\hcR(\Theta_t)} \cdot L\cdot \frac{c}{L}
    \cdot c \eta \sqrt{\hcR(\Theta_t)} \\
    & = c^2 \eta \hcR(\Theta_t)
    \lesssim \lambda' \eta L \hcR(\Theta_t),
  \end{align*}
  where $c^2 \eta \lesssim \lambda' \eta L $ since $L \gtrsim c^2 /
  \lambda'$. Meanwhile, since
  \begin{align}
    f(\bx_i; \Theta_{t+1}) & - f(\bx_i; \Theta_t)
    = \sum_{l=0}^{L-1} \left[
    \left\langle \al_{t+1} - \al_t, \gl_t(\bx_i) \right\rangle
    + \left\langle \al_{t+1}, \gl_{t+1}(\bx_i) - \gl_t(\bx_i) \right\rangle
    \right] \nonumber \\
    & \le \sum_{l=0}^{L-1} \left[ \|\al_{t+1} - \al_t\| \|\gl_t(\bx_i)\|
    + \|\al_{t+1}\| \|\gl_{t+1}(\bx_i) - \gl_t(\bx_i)\| \right] \nonumber \\
    & \lesssim \sum_{l=0}^{L-1} \left[ \eta \sqrt{\hcR(\Theta_t)} \cdot 1
    + \frac{c}{L} \cdot c \eta \sqrt{\hcR(\Theta_t)} \right] \nonumber \\
    & = \eta(L + c^2) \sqrt{\hcR(\Theta_t)}
    \lesssim \eta L \sqrt{\hcR(\Theta_t)},
    \label{eqn:d_risk_i2}
  \end{align}
 we also have
  \begin{equation}
    I_3
    = \frac{1}{2n} \sum_{i=1}^n {[f(\bx_i; \Theta_{t+1})-f(\bx_i; \Theta_t)]}^2
    \lesssim \eta^2 L^2 \hcR(\Theta_t)
    \lesssim \frac{\lambda' \eta L}{n} \hcR(\Theta_t),
    \label{eqn:d_risk_i3}
  \end{equation}
  where $\eta^2 L^2 \lesssim \lambda' \eta L $ since $\eta \lesssim
  \lambda' / L$.

  Combining~\eqref{eqn:d_risk_i1}~\eqref{eqn:d_risk_i2} and~\eqref{eqn:d_risk_i3},
  we get
  %the decrease of the empirical risk
  \[
    \hcR(\Theta_{t+1}) - \hcR(\Theta_t) = I_1 + I_2 + I_3
    \le -\frac{\lambda' \eta L}{4 } \hcR(\Theta_t).
  \]
\end{proof}

\begin{lemma}\label{lem:discrete_converge2neighbor}
  Let $c \gtrsim 1/\lambda'$ where $\lambda' = \lambda_{\min}(H(\Theta_0)) >
  0$. Let $t_0$ be such that  $\Theta_t \in \cI_c(\Theta_0)$ and  $\hcR(\Theta_t)
  \le {\left( 1 - \frac{\lambda' \eta L}{4} \right)}^t \hcR(\Theta_0)$ for $t=0,1,\dots,t_0$, then
  we have  $\Theta_{t_{0}+1}\in\cI_{c}(\Theta_0)$.
\end{lemma}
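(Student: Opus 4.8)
The plan is to bound the total displacement of each parameter block along the iterates $\Theta_0,\dots,\Theta_{t_0+1}$ by telescoping the gradient-descent updates, and to control the resulting sums using the geometric decay of $\hcR$ that is part of the hypothesis together with the gradient upper bounds of Lemma~\ref{lemma: upper-bound}. This is the discrete-time analogue of the stopping-time estimate used in the proof of Theorem~\ref{thm:continues_gd}.

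First I would note that for every $t\le t_0$ we have $\Theta_t\in\cI_c(\Theta_0)$ by assumption, so Lemma~\ref{lemma: upper-bound} applies at each such $\Theta_t$ (the condition $L\gtrsim c^2$ being in force throughout this section) and gives $\|\nabla_{\al}\hcR(\Theta_t)\|,\ \|\nabla_{\rl}\hcR(\Theta_t)\|\lesssim\sqrt{\hcR(\Theta_t)}$ as well as $\|\nabla_{\Bl}\hcR(\Theta_t)\|_F,\ \|\nabla_{\Cl}\hcR(\Theta_t)\|_F\lesssim\frac{c}{L}\sqrt{\hcR(\Theta_t)}$. Writing $\al_{t_0+1}-\al_0=-\eta\sum_{t=0}^{t_0}\nabla_{\al}\hcR(\Theta_t)$ and likewise for $\rl,\Bl,\Cl$, the triangle inequality reduces everything to estimating $S\Def\eta\sum_{t=0}^{t_0}\sqrt{\hcR(\Theta_t)}$.

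Next I would bound $S$. Set $\rho=1-\lambda'\eta L/4$; the hypothesis $\eta\lesssim\lambda'/L$ together with $\lambda'=\lambda_{\min}(H(\Theta_0))\le H_{1,1}(\Theta_0)\le 1/n\le 1$ guarantees $\rho\in[0,1)$. Using the assumed decay $\hcR(\Theta_t)\le\rho^t\hcR(\Theta_0)$ and the elementary inequality $1-\sqrt{1-x}\ge x/2$ for $x\in[0,1]$ (applied with $x=\lambda'\eta L/4$),
\[
S\le\eta\sqrt{\hcR(\Theta_0)}\sum_{t\ge0}\rho^{t/2}=\frac{\eta\sqrt{\hcR(\Theta_0)}}{1-\sqrt\rho}\le\frac{2\eta\sqrt{\hcR(\Theta_0)}}{1-\rho}=\frac{8\sqrt{\hcR(\Theta_0)}}{\lambda' L}\lesssim\frac{1}{\lambda' L},
\]
where the last step uses $\hcR(\Theta_0)\le1/2$.

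Finally I would combine: $\|\al_{t_0+1}-\al_0\|,\ \|\rl_{t_0+1}-\rl_0\|\lesssim S\lesssim 1/(\lambda'L)$ and $\|\Bl_{t_0+1}-\Bl_0\|_F,\ \|\Cl_{t_0+1}-\Cl_0\|_F\lesssim\frac{c}{L}S\lesssim\frac{c}{\lambda'L^2}$. Since $c\gtrsim1/\lambda'$ the first two are $\lesssim c/L$, and since $\lambda'L\gtrsim1$ (which follows from $c\gtrsim1/\lambda'$ and $L\gtrsim c^2$) the last two are also $\lesssim c/L$. Choosing the implicit constant in the hypothesis $c\gtrsim1/\lambda'$ large enough makes each of the four displacement bounds at most $c/L$, which is exactly the statement $\Theta_{t_0+1}\in\cI_c(\Theta_0)$. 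The only delicate point is the bookkeeping of absolute constants, so that one genuinely stays inside $\cI_c(\Theta_0)$ and not merely in a constant-factor enlargement of it; there is no conceptual obstacle.
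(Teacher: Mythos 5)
Your proof is correct and takes essentially the same route as the paper: telescope the GD increments, apply Lemma~\ref{lemma: upper-bound} at each $\Theta_t\in\cI_c(\Theta_0)$, and bound the geometric sum $\eta\sum_t\sqrt{\hcR(\Theta_t)}$ by $8\sqrt{\hcR(\Theta_0)}/(\lambda' L)$. You are actually a bit more careful than the paper, which writes out only the $\al$ block and dismisses $\Bl$, $\Cl$, $\rl$ with ``similar results hold,'' whereas you correctly record the extra $c/L$ factor in the $\Bl$, $\Cl$ gradient bounds and note that closing that case needs $\lambda' L\gtrsim 1$, which indeed follows from $c\gtrsim 1/\lambda'$ and $L\gtrsim c^2$.
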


\begin{proof}
  From Lemma~\ref{lemma: upper-bound}, $\|\al_{t+1} - \al_t\| = \eta
  \|\nabla_{\al} \hcR(\Theta_t)\| \lesssim \eta \sqrt{\hcR(\Theta_t)}$
  for $\Theta_t \in \cI_c(\Theta_0)$. So we have
  \begin{align*}
    \|\al_{t_0 + 1}-\al_0\|
    & \le \sum_{t=0}^{t_0} \|\al_{t+1} - \al_t\|
    = \eta \sum_{t=0}^{t_0} \|\nabla_{\al} \hcR(\Theta_t)\| \\
    & \lesssim \eta \sum_{t=0}^{t_0} \sqrt{\hcR(\Theta_t)}
    \le \eta \sqrt{\hcR(\Theta_0)} \sum_{t=0}^{t_0}
    {\left( 1 - \frac{\lambda' \eta L}{4 n} \right)}^\frac{t}{2} \\
    & \le \eta \sqrt{\hcR(\Theta_0)} \cdot \frac{8}{\lambda' \eta L}
    = \frac{8}{\lambda' L} \sqrt{\hcR(\Theta_0)}
    <\frac{c}{L}
  \end{align*}
  if we choose the absolute constant $C$ large enough. Similar results hold
  for $\Bl$, $\Cl$ and $\rl$. Therefore, $\Theta_{t_0 + 1} \in \cI_c(\Theta_0)$.
\end{proof}

\paragraph*{Proof of Theorem~\ref{thm:discrete}}

  Since $L \gtrsim \lambda_n^{-2} \ln(n^2 / \delta)$, by
  Lemma~\ref{lem:lambda0}, $\lambda' = \lambda_{\min} (H(\Theta_0)) \ge
  \lambda_n / 2$ with probability at least $1-\delta$.

  Let $c \gtrsim 1 / \lambda'$. We prove the theorem by introduction:
  \begin{equation}
    \Theta_t \in \cI_c(\Theta_0), \quad \hcR(\Theta_t) \le
    {\left( 1 - \frac{\lambda_n \eta L}{8} \right)}^t \hcR(\Theta_0).
    \label{eqn:discrete_induct}
  \end{equation}
  Obviously statement~\eqref{eqn:discrete_induct} holds for $t = 0$. Assume that
  it holds for $t = 0, 1, \dots, t_0$, then by
  Lemma~\ref{lem:discrete_converge2neighbor}, we have $\Theta_{t_0 + 1} \in
  \cI_c(\Theta_0)$. Thus the assumptions of Lemma~\ref{lem:discrete_neighbor2converge}
  hold, and we have
  \[
    \hcR(\Theta_{t_0 + 1})
    \le \left( 1 - \frac{\lambda' \eta L}{4 } \right) \hcR(\Theta_{t_0})
    \le {\left( 1 - \frac{\lambda_n \eta L}{8} \right)}^{t_0 + 1}
    \hcR(\Theta_0).
  \]
  Therefore, statement~\eqref{eqn:discrete_induct} holds for $t_0 + 1$. This completes the  proof of Theorem~\ref{thm:discrete}.

\section{Conclusion}

\newcommand{\bU}{\bm{U}}
\newcommand{\bW}{\bm{W}}

With what we have learned from numerical results presented 
earlier and the theoretical results for this simplified deep neural network model, 
one is tempted to speculate  that similar results hold for the ResNet model
for the general case, i.e. 
the GD dynamics does converge to the global minima of the empirical risk,
and the GD path stays close to the GD path for the compositional random feature
model during the appropriate time intervals.
A natural next step is to put these speculative statements on a rigorous footing. 

We should also note that even if they are true, these results do not exclude the possibility that
in some other parameter regimes, the models found by the GD algorithm for ResNets
generalize better than the ones for the compositional random feature model
and that there exist non-trivial ``implicit regularization'' regimes.
They do tell us that finding these regimes is a non-trivial task.

\paragraph*{Acknowledgement:} The work presented here is supported in part by a gift to Princeton University from iFlytek and the ONR grant N00014-13-1-0338.

\bibliographystyle{plain}
\bibliography{dl_ref}

\newpage
\appendix
\section{Experiment setup}
\label{sec: res-setup}
\paragraph*{ResNet}
The residual network considered  is given by 
\begin{equation}\label{eqn: resnet}
\begin{aligned}
\bh^{(1)} &= V^{(0)} \bx \in\RR^{d+1} \\
\bh^{(l+1)} &=
\bh^{(l)}  +  U^{(l)} \sigma(V^{(l)}\bh^{(l)}) , \quad l=1, \cdots, L-1\\
f(\bx;\Theta) &= \bw^T \bh^{(L)},
\end{aligned}
\end{equation}
 where $U^{(l)}\in \RR^{(d+1)\times m}, V^{(l)}\in \RR^{m\times (d+1)},\bw\in\RR^{d+1}$, and 
 $V^{(0)} = (I_d, 0)^T\in \RR^{(d+1)\times d}$.  Throughout the experiment, since we are interested in the effect of depth, we choose $m=1$ to speed up the training.  For any $j, k\in [d+1], i\in [m]$, we initialize the ResNet by $U^{(l)}_{k,i}=0$, $V^{(l)}_{i,j}\sim \mathcal{N}(0,\frac{1}{m})$. $\bw$ is initialized as $(0,\dots,0,1)$ and kept fixed during training. We use gradient descent to optimize the empirical risk and the learning rates for the ResNets of different depths are manually tunned to achieve the best test performance.

\paragraph*{Compositional norm regularization}
Consider the following regularized model
\begin{equation}\label{eqn: path-norm-reg}
    \text{minimize}\,\, J(\Theta):= \hat{\cR}_n(\Theta) + \frac{\lambda}{\sqrt{n}} \|\Theta\|_{\cP}.
\end{equation}
Here the compositional norm is defined  by (in this case  it is the direct extension
of the path norm \cite{neyshabur2015norm} to ResNets )
\[
    \|\Theta\|_{\cP} \Def |\bw|^T\prod_{l=1}^L (I+|U^{(l)}||V^{(l)}|)|V^{(0)}|.
\]
Here  for a matrix $A=(a_{i,j})$, we define $|A|\Def(|a_{i,j}|)$.
For the above regularized model, we use Adam~\cite{kingma2014adam} optimizer to solve problem~\eqref{eqn: path-norm-reg}.

\section{Proof of Lemma~\ref{lem:lambda0}}
\label{sec: init-Gram-matrix}
\begin{proof}
For a given $t\geq 0$, define the event $S_{i,j}=\{\Theta_0: |H_{i,j}(\Theta_0)-k(\bx_i,\bx_j)|\leq t/n\}$.
Using Hoeffding's inequality, we get  $\PP\{S_{i,j}^c\}\leq e^{-2mLt^2}$. Hence
\begin{align*}
\PP\{\cap_{i,j} S_{i,j}\} &= 1- \PP\{\cup_{i,j} S^c_{i,j}\} \\
                        & \geq 1 - \sum_{i,j} \PP\{S_{i,j}^c\} \\
                        & \geq 1 - n^2 e^{-2mL t^2}.
\end{align*}
Therefore, with probability $1 - n^2 e^{-2mL t^2}$, the following inequality holds,
\[
\lambda_{\min}(H(\Theta_0))\geq \lambda_{\min}(K) - \|H-K\|_F \geq \lambda_n - t.
\]
Taking $t=\lambda_n/4$, we complete the proof.
\end{proof}

\section{Proofs for the random feature model}
\subsection{Proof of Theorem~\ref{thm: approx}}
\label{sec: random-feature}
Note that $B_0 = (\bb^0_1,\dots,\bb^0_{mL})^T \in \RR^{mL\times d}$ with $\sqrt{m}\bb^0_j\sim \pi(\cdot)$.
By choosing $\ba^* = (a_1^*,\dots,a^*_{mL})^T$ with $a_j^*=a^*(\sqrt{m}\bb_j^0)/(\sqrt{m}L)$, we have
\[
    \tilde{f}(\bx;a^*,B_0) = \frac{1}{mL}\sum_{j=1}^{mL}a^*(\sqrt{m}\bb_j^0)\sigma(\bx^T\sqrt{m}\bb_j^0).
\]
Therefore $\EE_{B_0}[\tilde{f}(\bx;a^*,B_0)] = f^*(\bx)$. Consider the following random variable,
\[
Z(B_0) = \|\tf(\cdot;\ba^*,B_0)-f^*(\cdot)\| \Def\sqrt{\EE_{\bx}|\tf(\bx;\ba^*,B_0)-f^*(\bx)|^2}.
\]
Let $\tB_0=(\bb^0_1,\dots,\tilde{\bb}^0_j,\dots,\bb^0_{mL})^T$, which equals to $B_0$ except $\tilde{\bb}_j^0$. Then we have
\begin{align*}
|Z(B_0)-Z(\tB_0)| &= \|\tf(\cdot;\ba^*(B_0),B_0)-f^*(\cdot)\| - \|\tf(\cdot;\ba^*(\tB_0),\tB_0)-f^*(\cdot)\| \\
&\leq \|\tf(\cdot;\ba^*(B_0),B_0) - \tf(\cdot;\ba^*(\tB_0),\tB_0)\| \\
&\leq \frac{2\gamma({f^*})}{mL}
\end{align*}
Applying McDiarmid's inequality, we have that with probability $1-\delta$ the following inequality holds
\begin{equation}\label{eqn: l1-prob-bound}
Z(B_0)\leq \EE[Z(B_0)] + \gamma({f^*}) \sqrt{\frac{2\ln(1/\delta)}{mL}}.
\end{equation}
Using the Cauchy-Schwarz inequality, we have  $\EE[Z(B_0)]\leq \sqrt{\EE[Z^2(B_0)]}$. Since
\begin{align*}
    \EE[Z^2(B_0)] &= \EE_{\bx} \EE_{B_0} |\tf(\bx;\ba^*,B_0)-f^*(\bx)|^2 \leq \frac{\gamma(f^*)}{mL},
\end{align*}
 we obtain $\EE[Z(B_0)]\leq \frac{\gamma(f^*)}{\sqrt{mL}}$.
 Plugging this into \eqref{eqn: l1-prob-bound} gives us
\[
\EE_{\bx} |\tf(\bx;\ba^*,B_0)-f^*(\bx)|^2 = Z^2(B_0) \leq \frac{\gamma^2(f^*)}{mL}\left(1+\sqrt{2\ln(1/\delta)}\right)^2.
\]
In addition, it is clear that $\|\ba^*\| = \sqrt{\sum_{j=1}^{mL}|a_j^*|^2}\leq \gamma(f^*)/\sqrt{L}$.

\subsection{Proof of Lemma~\ref{lem: convergence-reference-dynamics}}
\label{sec: rdf-2}
Let $J(t)\Def t (\hat{\cE}_n(\tilde{\ba}_t;B_0) - \hat{\cE}_n(\ba^*;B_0)) + \frac{1}{2}\|\tilde{\ba}_t-\ba^*\|^2$, then we have
\[
    \frac{ dJ(t)}{dt} = \hat{\cE}_n(\tilde{\ba}_t; B_0) - \hat{\cE}_n(\ba^*;B_0) - \langle \tilde{\ba}_t-\ba^*, \nabla \hat{\cE}_n(\ta_t)\rangle  - t \|\nabla \hat{\cE}_n(\ta_t;B_0)\|^2.
\]
By using the convexity of $\hat{\cE}_n(\cdot;B_0)$, it is easy to see that $dJ/dt\leq 0$. Therefore, $J(t)\leq J(0)$. This leads to
\begin{align*}
 t (\hat{\cE}_n(\tilde{\ba}_t;B_0) - \hat{\cE}_n(\ba^*;B_0)) + \frac{1}{2}\|\tilde{\ba}_t-\ba^*\|^2 \leq \frac{1}{2}\|\ba_0-\ba^*\|^2.
\end{align*}
It is easy to see that
\begin{align*}
 t \hat{\cE}_n(\tilde{\ba}_t;B_0)  & \leq  t\hat{\cE}_n(\ba^*;B_0) +  \frac{1}{2}\|\ba_0-\ba^*\|^2 - \frac{1}{2}\|\tilde{\ba}_t-\ba^*\|^2\\
\frac{1}{2}\|\tilde{\ba}_t-\ba^*\|^2  &\leq \frac{1}{2}\|\ba_0-\ba^*\|^2 + t \hat{\cE}_n(\ba^*;B_0).
\end{align*}
This completes the proof.

\subsection{Proof of Theorem~\ref{thm: generalization-random-feature}}
\label{sec: rdf-3}
Let 
\[
\gen(\ba;B_0)=|\cE(\ba;B_0)-\hat{\cE}_n(\ba;B_0)|.
\]
Using Lemma~\ref{lem: convergence-reference-dynamics} and Theorem~\ref{thm: approx}, we have
\begin{align}\label{eqn: rdf-decomp}
\nonumber \cE(\ta_t;B_0) &\leq |\cE(\ta_t;B_0)-\hat{\cE}_n(\ta_t;B_0)| + \hat{\cE}_n(\ta_t;B_0) \\
\nonumber &\stackrel{(i)}{\le} |\cE(\ta_t;B_0)-\hat{\cE}_n(\ta_t;B_0)| + |\hat{\cE}_n(\ba^*; B_0)- \cE(\ba^*;B_0)| + \cE(\ba^*;B_0) + \frac{\|\ta_0-\ba^*\|^2}{2t} \\
&\stackrel{(ii)}{\le} \text{gen}(\ta_t;B_0) + \text{gen}(\ba^*;B_0) + \frac{c(\delta) \gamma^2(f^*)}{mL}+ \frac{\gamma^2(f^*)}{2Lt}.
\end{align}
We next proceed to  estimate the two generalization gaps by using the Rademacher complexity.

Let $\cF_c = \{\ba^T\sigma(B_0\bx) : \|\ba\|\leq c\}$,  and $\cH_c = \{\frac{1}{2}(\ba^T\sigma(B_0\bx)-y)^2 :  \|\ba\|\leq c \}$. We first have
\begin{align*}
\rad(\cF_c) & =\frac{1}{n}\EE [\sup_{\|\ba\|\leq c} \sum_{i=1}^n \xi_i \ba^T\sigma(B_0\bx_i)] \\
&= \frac{1}{n}\EE [\sup_{\|\ba\|\leq c} \langle \ba,\sum_{i=1}^n \xi_i \sigma(B_0\bx_i)\rangle]\\
&\leq \frac{c}{n}\EE [\|\sum_{i=1}^n \xi_i \sigma(B_0\bx_i)\|] \\
&\stackrel{(i)}{\leq} \frac{c}{n}\sqrt{\EE [\|\sum_{i=1}^n \xi_i \sigma(B_0\bx_i)\|^2]} \\
&\leq \frac{c}{n}\sqrt{\sum_{i=1}^n \EE_{}[\xi_i^2]\sigma(B_0\bx_i)^T\sigma(B_0\bx_i) + \sum_{i\neq j}\EE[\xi_i\xi_j]\sigma(B_0\bx_i)^T\sigma(B_0\bx_j)} \\
&\leq \frac{\sqrt{L}c}{\sqrt{n}},
\end{align*}
where  the expectation is taken over $\xi_1,\dots,\xi_n$, and $(i)$ follows from the fact that $\sqrt{t}$ is concave in $(0,\infty]$.   

For any $\ba$ that satisfies $\|\ba\|\leq c$, it is obvious that $|\tilde{f}(\bx;\ba,B_0)|\leq \sqrt{L} c$. Let $\ell(y',y)=\frac{(y'-y)^2}{2}$ denote the loss function. In this case, $|y'|\leq \sqrt{L}c$ and $|y|\leq 1$, so $\ell(\cdot,y)$ is $(\sqrt{L}c+1)-$Lipschitz continuous. Using the contraction property of Rademacher complexity (see e.g. \cite{shalev2014understanding,mohri2018foundations}), we have
\[
\rad(\cH_c)\leq (\sqrt{L}c+1)\rad(\cF_c) \leq \frac{\sqrt{L}c(c+1)}{\sqrt{n}}.
\]
The standard Rademacher complexity based bound gives that  for any fixed $\delta\in (0,1)$, with probability at least $1-\delta$
\begin{equation}\label{eqn: apd-posteriori-bound}
\gen(\ba;B_0) \leq  \frac{2(\sqrt{L}c+1)c\sqrt{L}}{\sqrt{n}} + (\sqrt{L}c+1)^2\sqrt{\frac{\ln(1/\delta)}{n}},
\end{equation}
for any $\ba$ such that $\|\ba\|\leq c$.

Consider a decomposition of the whole hypothesis space, $\cF=\cup_{k\in \NN^{+}} \cF_k$ with $\cF_k = \{\ba^T\sigma(B_0\bx) : \|\ba\|\leq c\}$. Let $\delta_{k}=\frac{\delta}{k^2}$. If $k$ is pre-specified, then with probability $1-\delta_k$, \eqref{eqn: apd-posteriori-bound} holds for $c=k$. Then from  the union bound for all $k\in \NN^{+}$, we obtain that for any fixed $\delta>0$,  with probability $1-\delta$, the following estimates holds for any $\ba$,
\[
\gen(\ba;B_0) \lesssim  \frac{(\sqrt{L}\|\ba\|+1)\|\ba\|\sqrt{L}}{\sqrt{n}} + (\sqrt{L}\|\ba\|+1)^2\sqrt{\frac{\ln((1+\|\ba\|)/\delta)}{n}}.
\]
Theorem~\ref{thm: approx} says that $\|\ba^*\|\leq \gamma(f^*)/\sqrt{L}$. Hence we obtain
\begin{equation}\label{eqn: fixed-a}
\gen(\ba^*;B_0)\lesssim  \gamma^2(f^*) \frac{1+\sqrt{\ln(1/\delta)}}{\sqrt{n}}.
\end{equation}

By Lemma~\ref{lem: convergence-reference-dynamics}, we have
\begin{equation*}
\begin{aligned}
\|\ta_t\| & \leq 2\|\ba^*\| + t \hat{\cE}_n(\ba^*;B_0) \\
&\leq 2\|\ba^*\| + t\left(\cE(\ba^*;B_0) + \gen(\ba^*;B_0)\right) \\
&\lesssim \frac{\gamma(f^*)}{\sqrt{L}} + t \gamma^2(f^*) (\frac{c^2(\delta)}{mL} + \frac{1+\sqrt{\ln(1/\delta)}}{\sqrt{n}}),
\end{aligned}
\end{equation*}
where $c(\delta)=1+\sqrt{\ln(1/\delta)}$.
So using $L\geq \gamma^2(f^*)\geq 1$, we have for any $t\in [0,1]$,
\begin{align}\label{eqn: dynamics-a}
  \gen(\ta_t;B_0) & \lesssim \left(1 + \frac{\sqrt{L}\gamma(f^*)t}{\sqrt{n}}\right)^2 \frac{c^3(\delta)\gamma^2(f^*)}{\sqrt{n}},
\end{align}
Plugging the estimates \eqref{eqn: fixed-a} and \eqref{eqn: dynamics-a} into \eqref{eqn: rdf-decomp} completes the proof.
\end{document}